\theoremstyle{plain}
\newtheorem{theorem}{Theorem}[section]
\newtheorem{proposition}[theorem]{Proposition}
\newtheorem{lemma}[theorem]{Lemma}
\theoremstyle{definition}
\theoremstyle{remark}
\icmltitlerunning{Hyperbolic Sliced-Wasserstein via Geodesic and Horospherical Projections}
\DeclareMathOperator*{\argmin}{argmin}
\DeclareMathOperator*{\argmax}{argmax}
\DeclareMathOperator*{\arccosh}{arccosh}
\DeclareMathOperator*{\arctanh}{arctanh}
\begin{document}

\twocolumn[
\icmltitle{Hyperbolic Sliced-Wasserstein via Geodesic and Horospherical Projections}

% It is OKAY to include author information, even for blind
% submissions: the style file will automatically remove it for you
% unless you've provided the [accepted] option to the icml2023
% package.

% List of affiliations: The first argument should be a (short)
% identifier you will use later to specify author affiliations
% Academic affiliations should list Department, University, City, Region, Country
% Industry affiliations should list Company, City, Region, Country

% You can specify symbols, otherwise they are numbered in order.
% Ideally, you should not use this facility. Affiliations will be numbered
% in order of appearance and this is the preferred way.
\icmlsetsymbol{equal}{*}

\begin{icmlauthorlist}
\icmlauthor{Clément Bonet}{lmba}
\icmlauthor{Laetitia Chapel}{irisa}
\icmlauthor{Lucas Drumetz}{imt}
\icmlauthor{Nicolas Courty}{irisa}
% \icmlauthor{Firstname5 Lastname5}{yyy}
% \icmlauthor{Firstname6 Lastname6}{sch,yyy,comp}
% \icmlauthor{Firstname7 Lastname7}{comp}
%\icmlauthor{}{sch}
% \icmlauthor{Firstname8 Lastname8}{sch}
% \icmlauthor{Firstname8 Lastname8}{yyy,comp}
%\icmlauthor{}{sch}
%\icmlauthor{}{sch}
\end{icmlauthorlist}

% \icmlaffiliation{yyy}{Department of XXX, University of YYY, Location, Country}
% \icmlaffiliation{comp}{Company Name, Location, Country}
% \icmlaffiliation{sch}{School of ZZZ, Institute of WWW, Location, Country}

\icmlaffiliation{lmba}{Université Bretagne Sud, LMBA}
\icmlaffiliation{irisa}{Université Bretagne Sud, IRISA}
\icmlaffiliation{imt}{IMT Atlantique, Lab-STICC}

% \icmlcorrespondingauthor{Firstname1 Lastname1}{first1.last1@xxx.edu}
% \icmlcorrespondingauthor{Firstname2 Lastname2}{first2.last2@www.uk}
\icmlcorrespondingauthor{Clément Bonet}{clement.bonet@univ-ubs.fr}

% You may provide any keywords that you
% find helpful for describing your paper; these are used to populate
% the "keywords" metadata in the PDF but will not be shown in the document
\icmlkeywords{Machine Learning, ICML}

\vskip 0.3in
]

% this must go after the closing bracket ] following \twocolumn[ ...

% This command actually creates the footnote in the first column
% listing the affiliations and the copyright notice.
% The command takes one argument, which is text to display at the start of the footnote.
% The \icmlEqualContribution command is standard text for equal contribution.
% Remove it (just {}) if you do not need this facility.

\printAffiliationsAndNotice{}  % leave blank if no need to mention equal contribution
% \printAffiliationsAndNotice{\icmlEqualContribution} % otherwise use the standard text.

\begin{abstract}
    Hyperbolic space embeddings have been shown beneficial for many learning tasks where data have an underlying hierarchical structure. Consequently, many machine learning tools were extended to such spaces, but only few discrepancies to compare probability distributions defined over those spaces exist. Among the possible candidates, optimal transport distances are well defined on such Riemannian manifolds and enjoy strong theoretical properties, but suffer from high computational cost. On Euclidean spaces,  sliced-Wasserstein distances, which leverage a closed-form solution of the Wasserstein distance in one dimension, are more computationally efficient, but are not readily available on hyperbolic spaces. In this work, we propose to derive novel hyperbolic sliced-Wasserstein discrepancies. These constructions use projections on the underlying geodesics either along horospheres or geodesics. We study and compare them on different tasks where hyperbolic representations are relevant, such as sampling or image classification.
\end{abstract}

\section{Introduction}

In recent years, hyperbolic spaces have received a lot of attention in machine learning (ML) as they allow efficiently processing data %to provide \ld{sensible embeddings for data} / \lc{efficiently processing data?} 
that present a hierarchical structure \citep{nickel2017poincare,nickel2018learning}. This encompasses data such as graphs \citep{gupte2011finding}, words \citep{tifrea2018poincar} or images \citep{khrulkov2020hyperbolic}. Embedding in hyperbolic spaces has been proposed for various applications such as drug embedding \citep{yu2020semi}, image clustering \citep{park2021unsupervised, ghadimi2021hyperbolic}, zero-shot recognition \citep{liu2020hyperbolic}, remote sensing \citep{hamzaoui2021hyperbolic} or reinforcement learning \citep{cetin2022hyperbolic}. Hence, many works proposed to develop tools to be used on such spaces, such as generalization of Gaussian distributions \citep{nagano2019wrapped, galaz2022wrapped}, neural networks \citep{ganea2018hyperbolic, liu2019hyperbolic} or normalizing flows \citep{lou2020neural, bose2020latent}. 

Optimal Transport (OT) \citep{villani2021topics,villani2009optimal} is a popular tool used in ML to compare probability distributions. Among others, it has been used for domain adaptation \citep{courty2016optimal}, learning generative models \citep{arjovsky2017wasserstein} or document classification \citep{kusner2015word}. However, the main tool of OT is the Wasserstein distance which exhibits an expensive, super-cubical computational cost \emph{w.r.t.} the number of samples of each distribution. Hence, many workarounds have been proposed to alleviate the computational burden such as  entropic regularization \citep{cuturi2013sinkhorn}, minibatch OT \citep{pmlr-v108-fatras20a} or the sliced-Wasserstein (SW) distance \citep{rabin2011wasserstein}. In particular, SW is a popular variant of the Wasserstein distance that computes the expected distance between one dimensional projections on some lines of the two distributions. Its computational advantages and theoretical properties make it an efficient and popular alternative to the Wasserstein distance. For example, it has been used for texture synthesis \citep{heitz2021sliced} %, point-cloud reconstruction \citep{nguyen2023self} %for PAC-Bayes \citep{ohana2022shedding} or 
or for generative modeling with SW autoencoders \citep{kolouri2018sliced}, SW GANs \citep{deshpande2018generative}, SW flows \citep{liutkus2019sliced} or SW gradient flows \citep{bonet2021sliced}.

\looseness=-1 The theoretical study of the Wasserstein distance on Riemannian manifolds is well developed \citep{mccann2001polar, villani2009optimal}. When it comes to hyperbolic spaces, some optimal transport attempts aimed at aligning distributions of data which have been embedded in a hyperbolic space \citep{alvarez2020unsupervised,hoyos2020aligning}. Regarding SW, it is originally defined using Euclidean distances and projections, which are not well suited to other manifolds. Recently, \citet{rustamov2020intrinsic} proposed to defined a SW distance on compact manifolds using the eigendecomposition of the Laplace-Beltrami operator while \citet{bonet2022spherical} proposed a SW distance to tackle this problem for measures supported on the sphere by using only objects intrinsically defined on this specific manifold. Contrary to the elliptical geometry of the sphere, the negative curvature of hyperbolic spaces calls for drastically different strategies to define geodesics and the associated projection operators. This work proposes to close this gap by proposing new SW constructions on these spaces. %\lc{close this gap?}

\textbf{Contributions.} \looseness=-1 We extend sliced-Wasserstein to data living in hyperbolic spaces. Analogously to Euclidean SW, we project the distributions on geodesics passing through the origin. Interestingly enough, different projections can be considered, leading to several new SW constructions that exhibit different theoretical properties and empirical benefits. We make connections with Radon transforms already defined in the literature and we show that hyperbolic SW are (pseudo-) distances. We provide the algorithmic procedure and discuss its complexity. We illustrate the benefits of these new hyperbolic SW distances on several tasks such as sampling or image classification.

\section{Background}

\looseness=-1 In this Section, we first provide some background on Optimal Transport with the Wasserstein and the sliced-Wasserstein distance. We then review two common hyperbolic models, namely the Lorentz and Poincaré ball models, on which we will define new OT discrepancies in the next section.

\subsection{Optimal Transport} \label{sec:ot}

Optimal transport is a popular field which allows  comparing  distributions of probabilities by determining a transport plan minimizing some ground cost. The main tool of OT is the Wasserstein distance which we introduce now.

% \subsubsection{Wasserstein Distance on Riemannian Manifolds}

\paragraph{Wasserstein Distance on Riemannian Manifolds.}

Let $M$ be a Riemannian manifold endowed with a Riemannian distance $d$. For $p\ge 1$, the $p$-Wasserstein distance between two probability measures $\mu,\nu\in\mathcal{P}_p(M)=\{\mu\in\mathcal{P}(M),\ \int_{M} d(x,x_0)^p\ \mathrm{d}\mu(x)<\infty \text{ for any }x_0\in M\}$ is defined as
\begin{equation}
    W_p^p(\mu,\nu) = \inf_{\gamma\in\Pi(\mu,\nu)}\ \int_{M\times M} d(x,y)^p\ \mathrm{d}\gamma(x,y),
\end{equation}
where $\Pi(\mu,\nu)=\{\gamma\in\mathcal{P}(M\times M),\ \pi^1_\#\gamma=\mu,\ \pi^2_\#\gamma=\nu\}$ is the set of couplings, $\pi^1(x,y)=x$, $\pi^2(x,y)=y$ and $\#$ is the pushforward operator defined as, for all borelian $A\subset M$, $T_\#\mu(A)=\mu(T^{-1}(A))$. For more details about OT, we refer to \citep{villani2009optimal}.

\looseness=-1 The main bottleneck of the Wasserstein distance is its computational complexity. Indeed, for two discrete probability measures with $n$ samples, it can be solved using linear programs \citep{peyre2019computational} with a complexity of $O(n^3\log n)$, which prevents its use when large amount of data are at stake. Hence, a whole literature consists at deriving alternative OT metrics with a smaller computational cost.

% \subsubsection{Sliced-Wasserstein Distance on Euclidean Space} \label{section:sw}

\paragraph{Sliced-Wasserstein Distance on Euclidean Space.}

\looseness=-1 On Euclidean spaces, a popular proxy of the Wasserstein distance is the so-called sliced-Wasserstein distance. On the real line, for $p\ge 1$, the $p$-Wasserstein distance between $\mu,\nu\in\mathcal{P}_p(\mathbb{R})$ admits the following closed-form \citep[Remark 2.30]{peyre2019computational} :
\begin{equation}
    W_p^p(\mu,\nu) = \int_0^1 | F_{\mu}^{-1}(u)-F_{\nu}^{-1}(u)|^p\ \mathrm{d}u
\end{equation}
where $F_\mu^{-1}$ and $F_\nu^{-1}$ denote the quantile functions of $\mu$ and $\nu$. This can be approximated in practice very efficiently as it only requires to sort the samples, which has a complexity of $O(n\log n)$. Therefore, \citet{rabin2011wasserstein} defined the sliced-Wasserstein distance by projecting linearly the probabilities on all the possible directions. For a direction $\theta\in S^{d-1}$, denote, for all $x\in\mathbb{R}^d$, $P^\theta(x)=\langle x,\theta\rangle$ the projection in direction $\theta$, and $\lambda$ the uniform measure on $S^{d-1}$. Then, the SW distance between $\mu,\nu\in\mathcal{P}_p(\mathbb{R}^d)$ is defined as
\begin{equation}
    SW_p^p(\mu,\nu) = \int_{S^{d-1}} W_p^p(P^\theta_\#\mu,P^\theta_\#\nu)\ \mathrm{d}\lambda(\theta).
\end{equation}
Using a Monte-Carlo approximation, this can be approximated in $O(Ln(d+\log n))$ where $L$ is the number of projections and $n$ the number of samples.

Moreover, the slicing process has many appealing properties, such as having a sample complexity independent of the dimension \citep{nadjahi2020statistical}, being topologically equivalent to Wasserstein \citep{bonnotte2013unidimensional} and being an actual distance. For the latter point, it can be shown to be a pseudo-distance using that $W_p$ is a distance. The indiscernible property relies on the link between the projection used in SW and the Radon transform \citep{bonneel2015sliced, kolouri2019generalized} which is injective on the space of measures \citep[Theorem A]{boman2009support}. More precisely, let $f\in L^1(\mathbb{R}^d)$, then its Radon transform $R:L^1(\mathbb{R}^d)\to L^1(\mathbb{R}\times S^{d-1})$ is defined for $t\in\mathbb{R}$, $\theta\in S^{d-1}$ as,
\begin{equation}
    Rf(t,\theta) = \int_{\mathbb{R}^d} f(x) \mathbb{1}_{\{\langle x,\theta\rangle = t\}}\ \mathrm{d}x.
\end{equation}
This transform admits a dual operator $R^*:C_0(\mathbb{R}\times S^{d-1})\to C_0(\mathbb{R}^d)$, with $C_0(\mathbb{R}\times S^{d-1})$ the set of continuous functions that vanish at infinity, such that for all $g\in C_0(\mathbb{R}\times S^{d-1})$, $\langle Rf,g\rangle_{\mathbb{R}\times S^{d-1}} = \langle f, R^*g\rangle_{\mathbb{R}^d}$ \citep{bonneel2015sliced}. This allows defining the Radon transform of a measure $\mu\in\mathcal{M}(\mathbb{R}^d)$ as the measure $R\mu\in\mathcal{M}(\mathbb{R}\times S^{d-1})$ satisfying for all $g\in C_0(\mathbb{R}\times S^{d-1})$, $\langle R\mu,g\rangle_{\mathbb{R}\times S^{d-1}}=\langle \mu,R^*g\rangle_{\mathbb{R}^d}$ \citep{boman2009support}. Then, it was shown in \citep{bonneel2015sliced} that, by denoting by $(R\mu)^\theta$ the disintegration \emph{w.r.t.} to the uniform distribution on $S^{d-1}$, 
\begin{equation}
    SW_p^p(\mu,\nu) = \int_{S^{d-1}} W_p^p\big((R\mu)^\theta, (R\nu)^\theta\big)\ \mathrm{d}\lambda(\theta).
\end{equation}
Therefore, $SW_p^p(\mu,\nu) = 0$ implies that, for $\lambda$-ae $\theta$, $(R\mu)^\theta = (R\nu)^\theta$, which implies that $\mu=\nu$ by injectivity of the Radon transform on measures.

\looseness=-1 Many variants of this distance were recently proposed. Most lines of work considered different subspaces for projecting the data: hypersurfaces \citep{kolouri2019generalized}, Hilbert curves \citep{li2022hilbert} or subspace of higher dimensions \citep{lin2020projection,lin2021projection}. When it comes to data living on Riemannian manifolds, \citet{rustamov2020intrinsic} defined a variant on compact manifolds and \citet{bonet2022spherical} extended SW for spherical data. %the only extension of SW deals with objects intrinsic to the sphere \citep{bonet2022spherical}.

\subsection{Hyperbolic Spaces}

\looseness=-1 Hyperbolic spaces are Riemannian manifolds of negative constant curvature \citep{lee2006riemannian}. They have received recently a surge of interest in machine learning as they allow embedding efficiently data with a hierarchical structure \citep{nickel2017poincare,nickel2018learning}. A thorough review of the recent use of hyperbolic spaces in machine learning can be found in \citep{peng2021hyperbolic}.

There are five usual parameterizations of a hyperbolic manifold \citep{peng2021hyperbolic}. They are equivalent (isometric) and one can easily switch from one formulation to the other. Hence, in practice, we use the one which is the most convenient, either given the formulae to derive or the numerical properties. In machine learning, the two most used models are the Poincaré ball and the Lorentz model (also known as the hyperboloid model). Each of these models has its own advantages compared to the other. For example, the Lorentz model has a distance which behaves better \emph{w.r.t.} numerical issues compared to the distance of the Poincaré ball. %, which has a division by the norm, which can be close to 0.
However, the Lorentz model is unbounded, contrary to the Poincaré ball. We introduce in the following these two models as we will use both of them %these two formulations
in our work.

\textbf{Lorentz model.} First, we introduce the Lorentz model $\mathbb{L}^d\subset \mathbb{R}^{d+1}$ of a $d$-dimensional hyperbolic space. It can be defined as
\begin{equation}
    \mathbb{L}^d = \{(x_0,\dots,x_{d+1})\in \mathbb{R}^{d},\ \langle x,x\rangle_\mathbb{L}=-1, x_0>0\}
\end{equation}
where
\begin{equation}
    \forall x,y\in\mathbb{R}^{d+1},\ \langle x,y\rangle_\mathbb{L} = -x_0y_0 + \sum_{i=1}^{d} x_iy_i
\end{equation}
is the Minkowski pseudo inner-product \citep[Chapter 7]{boumal2022intromanifolds}. The Lorentz model can be seen as the upper sheet of a two-sheet hyperboloid. In the following, we will denote $x^0 = (1,0,\dots,0)\in\mathbb{L}^d$ the origin of the hyperboloid. The geodesic distance in this manifold, which denotes the length of the shortest path between two points, can be defined as
\begin{equation}
    \forall x,y\in\mathbb{L}^d,\ d_{\mathbb{L}}(x,y) = \arccosh(-\langle x,y\rangle_\mathbb{L}).
\end{equation}

At any point $x\in\mathbb{L}^d$, we can associate a subspace of $\mathbb{R}^{d+1}$ orthogonal in the sense of the Minkowski inner product. These spaces are called tangent spaces and are described formally as $T_x\mathbb{L}^d=\{v\in\mathbb{R}^{d+1},\ \langle v,x\rangle_\mathbb{L}=0\}$. Note that on tangent spaces, the Minkowski inner-product is a real inner product. In particular, on $T_{x^0}\mathbb{L}^d$, it is the usual Euclidean inner product, \emph{i.e.} for $u,v\in T_{x^0}\mathbb{L}^d$, $\langle u,v\rangle_\mathbb{L}=\langle u,v\rangle$. Moreover, for all $v\in T_{x^0}\mathbb{L}^d$, $v_0=0$.

We can draw a connection with the sphere. Indeed, by endowing $\mathbb{R}^{d+1}$ with $\langle\cdot,\cdot\rangle_\mathbb{L}$, we obtain $\mathbb{R}^{1,d}$ the so-called Minkowski space. Then, $\mathbb{L}^d$ is the analog in the Minkowski space of the sphere $S^{d}$ in the regular Euclidean space \citep{bridson2013metric}.

\textbf{Poincaré ball.} The second model of hyperbolic space we will be interested in is the Poincaré ball $\mathbb{B}^d\subset\mathbb{R}^d$. This space can be obtained as the stereographic projection of each point $x\in\mathbb{L}^d$ onto the hyperplane $\{x\in\mathbb{R}^{d+1},\ x_0=0\}$. More precisely, the Poincaré ball is defined as
\begin{equation}
    \mathbb{B}^d = \{x\in\mathbb{R}^d,\ \|x\|_2 < 1\},
\end{equation}
with geodesic distance, for all $x,y\in\mathbb{B}^d$,
\begin{equation}
    d_\mathbb{B}(x,y) = \arccosh\left(1+2\frac{\|x-y\|_2^2}{(1-\|x\|_2^2)(1-\|y\|_2^2)}\right).
\end{equation}
We see on this formulation that the distance can be subject to numerical instabilities when one of the points is too close to the boundary of the ball.
\par
We can switch from Lorentz to Poincaré using the following isometric projection \citep{nickel2018learning}:
\begin{equation}
    \forall x\in\mathbb{L}^d,\ P_{\mathbb{L}\to\mathbb{B}}(x) = \frac{1}{1+x_0} (x_1,\dots, x_d)
\end{equation}
and from Poincaré to Lorentz by
\begin{equation}
    \forall x \in \mathbb{B}^d,\ P_{\mathbb{B}\to\mathbb{L}}(x) = \frac{1}{1-\|x\|_2^2}(1+\|x\|_2^2, 2x_1,\dots, 2x_d).
\end{equation}

\section{Hyperbolic Sliced-Wasserstein Distances}

\begin{figure*}[t]
    \centering
    \hspace*{\fill}
    \subfloat[Euclidean.]{\label{fig:proj_euc}\includegraphics[width={0.15\linewidth}]{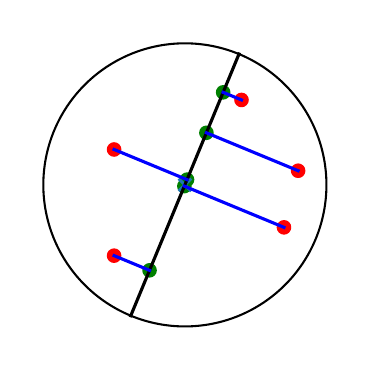}} \hfill
    \subfloat[Geodesics.]{\label{fig:proj_geods}\includegraphics[width={0.15\linewidth}]{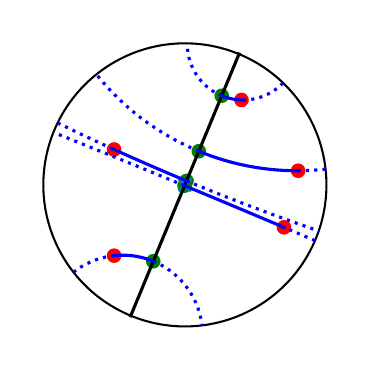}} \hfill
    \subfloat[Horospheres.]{\label{fig:proj_horo}\includegraphics[width={0.15\linewidth}]{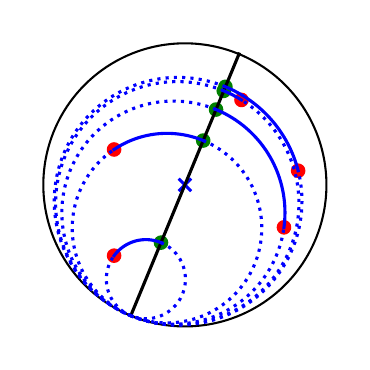}} \hfill
    \subfloat[Euclidean.]{\label{fig:proj_euc_l}\includegraphics[width=0.15\linewidth]{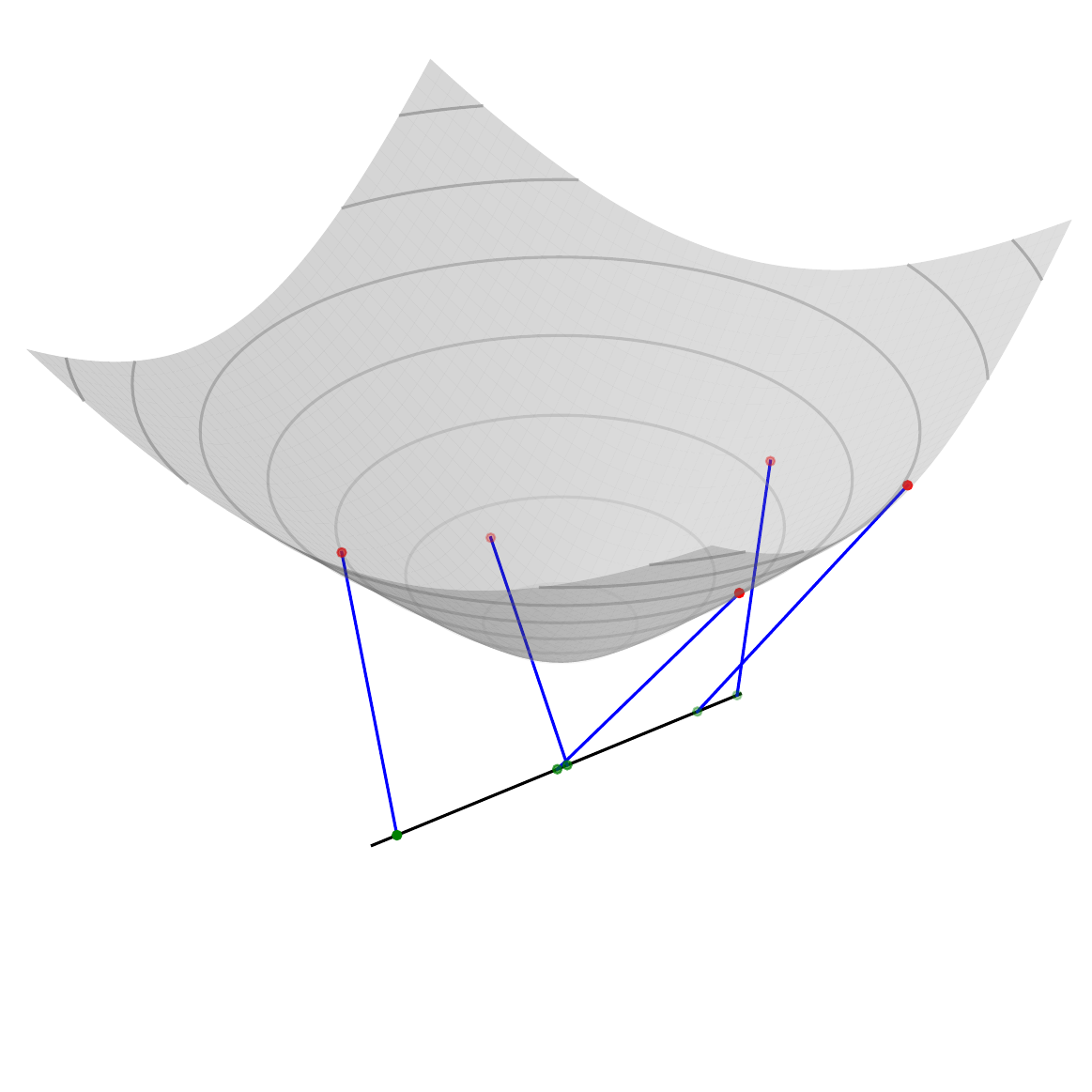}} 
    % \subfloat[Geodesics.]{\label{fig:proj_lorentz}\includegraphics[width=0.15\linewidth]{}} \hfill
    \subfloat[Geodesics.]{\label{fig:proj_lorentz}\includegraphics[width=0.15\linewidth]{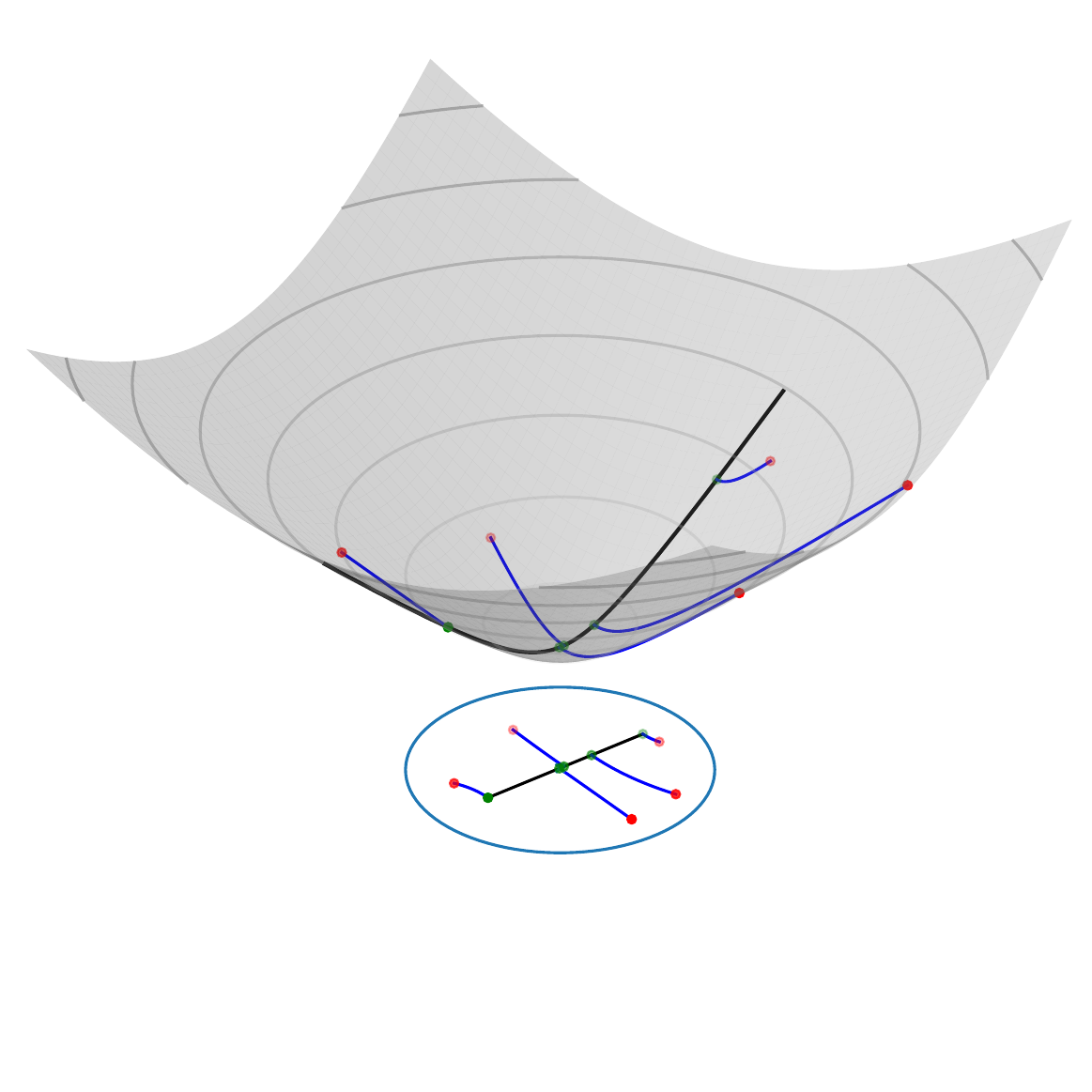}} \hfill
    % \subfloat[Horospheres.]{\label{fig:proj_lorentz_horo}\includegraphics[width=0.15\linewidth]{}} \hfill
    \subfloat[Horospheres.]{\label{fig:proj_lorentz_horo}\includegraphics[width=0.15\linewidth]{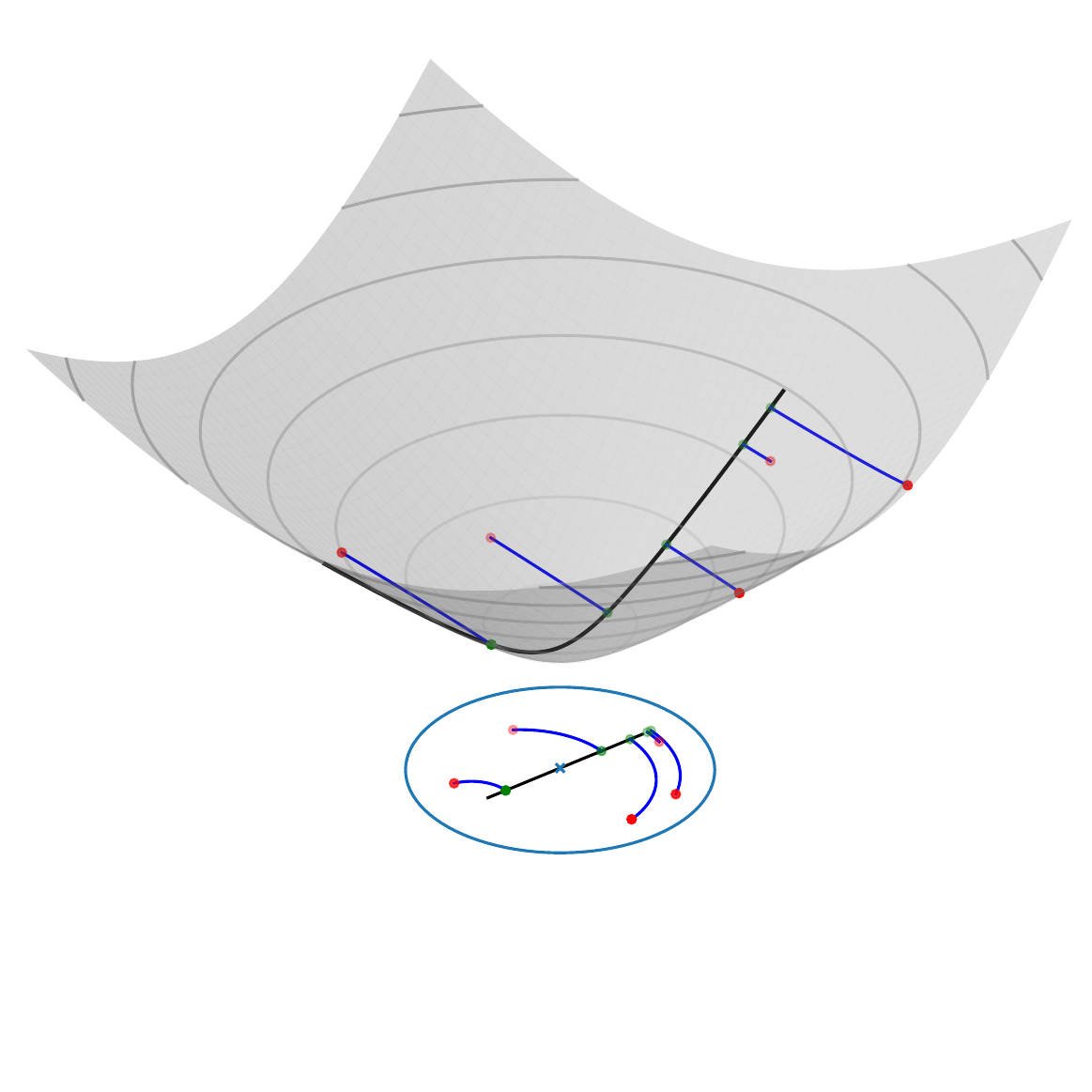}} \hfill
    \hspace*{\fill}
    \caption{Projection of (red) points on a geodesic (black line) in the Poincaré ball and in the Lorentz model along  Euclidean lines, geodesics or horospheres (in blue). Projected points on the geodesic are in green.}
    \label{fig:projections}
    \vspace{-10pt}
\end{figure*}

In this work, we aim at introducing sliced-Wasserstein type of distances on hyperbolic spaces. Interestingly enough, several constructions can be performed, depending on the projections that are involved. The first solution we consider is the extension of Euclidean SW between distributions whose support lies on hyperbolic spaces. We also provide variants that involve a geodesic cost. To do so, we first define the subspace on which the Wasserstein distance can be efficiently computed and then provide two different projection operators: geodesic and horospherical. We finally define the related hyperbolic sliced-Wasserstein distances and discuss some of their properties. 
All the proofs are reported in Appendix \ref{appendix:proofs}.

\subsection{Euclidean Sliced-Wasserstein on Hyperbolic Spaces}

\looseness=-1 The support of distributions lying on hyperbolic space are included in
the ambient spaces $\mathbb{R}^{d}$ (Poincaré ball) or $\mathbb{R}^{d+1}$ (Lorentz model). As such, Euclidean SW can be used for such kind of data. On the Poincaré ball, the projections lie onto the manifold as geodesics passing through the origin are straight lines (see Section \ref{sec:proj}), but the initial geometry of the data might not be fully taken care of as the orthogonal projection does not respect the Poincaré geodesics. On the Lorentz model though, the projections lie out of the manifold. We will denote SWp and SWl the Poincaré ball and Lorentz model version. These formulations allow inheriting from the properties of SW, such as being a distance.

\subsection{Projection Set and Wasserstein Distance} \label{sec:proj}

To generalize the sliced-Wasserstein distance on other spaces, we first define on which subspace to project. Euclidean spaces can be seen as Riemannian manifolds of null constant curvature whose geodesics are straight lines. Therefore, analogously to the Euclidean space %and in a similar fashion as \citet{bonet2022spherical} on the sphere
, we project on geodesics passing through the origin. 
We now describe geodesics in the Lorentz model and in the Poincaré ball.

\textbf{Geodesics.} In the Lorentz model, geodesics passing through the origin $x^0$ can be obtained by taking the intersection between $\mathbb{L}^d$ and a 2-dimensional plane containing $x^0$ \citep[Proposition 5.14]{lee2006riemannian}. Any such plane can be obtained as $\mathrm{span}(x^0,v)$ where $v\in T_{x^0}\mathbb{L}^d\cap S^{d} = \{v\in S^d,\ v_0=0\}$. The corresponding geodesic can be described by a geodesic line \citep[Corollary 2.8]{bridson2013metric}, \emph{i.e.} a map $\gamma:\mathbb{R}\to\mathbb{L}^d$ satisfying for all $t,s\in\mathbb{R}$, $d_\mathbb{L}(\gamma(s),\gamma(t)) = |t-s|$, of the form %\citep{robbin2022introduction}
\begin{equation}
    \forall t\in\mathbb{R},\ \gamma(t) = \exp_{x^0}(tv) = \cosh(t) x^0 + \sinh(t) v.
\end{equation}

On the Poincaré ball, geodesics are circular arcs perpendicular to the boundary $S^{d-1}$ \citep[Proposition 5.14]{lee2006riemannian}. In particular, geodesics passing through the origin are straight lines. Hence, they can be characterized by a point $\Tilde{v}$ on the border $S^{d-1}$. Such points will be called ideal points.

\textbf{Wasserstein distance on geodesics.} In order to have an efficient way to compute the discrepancy, we need a practical way to compute the Wasserstein distance on geodesics. As the distance between any point on a geodesic line $\gamma$ and the origin can take arbitrary values on $\mathbb{R}_+$, we project points from the geodesic to the real line $\mathbb{R}$. Indeed, on $\mathbb{R}$, there exists a well known closed-form (see Section \ref{sec:ot}) that can be efficiently computed in practice. In the Lorentz model, let $v\in T_{x^0}\mathbb{L}^d\cap S^d$ be a direction such that $\gamma(\mathbb{R})=\mathbb{L}^d\cap \mathrm{span}(x^0,v)$. Then, we propose to project a point $x\in\gamma(\mathbb{R})$ using
\begin{equation} \label{eq:proj_geod_r}
  t_\mathbb{L}^v(x) = \mathrm{sign}(\langle x,v\rangle) d_\mathbb{L}(x,x^0).  
\end{equation}
The scalar product with $v$ gives an orientation to the geodesic, and the distance to the origin the coordinate of $x$. We can do the same on the Poincaré ball with $t^{\Tilde{v}}_\mathbb{B}(x) = \mathrm{sign}(\langle x,\Tilde{v}\rangle) d_\mathbb{B}(x,0)$, where $\Tilde{v}$ is one of the ideal point to which the geodesic is perpendicular. In the remainder, we will remove the subscripts $\mathbb{L}$ and $\mathbb{B}$ when it is clear from the context. Finally, we need to check that this projection keeps the geodesic Wasserstein distance unchanged. We formulate the following proposition in the Lorentz model.

\begin{proposition}[Wasserstein distance on geodesics.] \label{prop:wasserstein_geodesics}
    Let $v\in T_{x^0}\mathbb{L}^d\cap S^{d}$ and $\mathcal{G}=\mathrm{span}(x^0, v)\cap \mathbb{L}^d$ a geodesic passing through $x^0$. Then, for $p\ge 1$ and $\mu,\nu\in\mathcal{P}_p(\mathcal{G})$, %probability measures on $\mathcal{G}$, we have
    \begin{equation}
        \begin{aligned}
            W_p^p(\mu,\nu) &= W_p^p(t^v_\#\mu, t^v_\#\nu) \\
            &= \int_0^1 |F_{t^v_\#\mu}^{-1}(u) - F_{t^v_\#\nu}^{-1}(u)|^p \ \mathrm{d}u.
        \end{aligned}
    \end{equation}
\end{proposition}

% \begin{proof}
%     We present here a sketch of the proof. First, we show that $t^v:\mathrm{span}(x^0,v)\cap\mathbb{L}^d\to\mathbb{R}$ is an isometry, \emph{i.e.} that for all $x,y\in\mathrm{span}(x^0,v)\cap \mathbb{L}^d,\ |t^v(x)-t^v(y)| = d_\mathbb{L}(x,y)$. Then, using Lemma 6 of \citep{paty2019subspace}, we show that $W_p^p(\mu,\nu)=W_p^p(t^v_\#\mu,t^v_\#\nu)$. We refer to Appendix \ref{proof:prop_wasserstein_geodesics} for the full proof.
% \end{proof}

The last ingredient of hyperbolic SW is the way the points lying in the manifold are projected onto the geodesic. We introduce here two different projections that are illustrated on Figure \ref{fig:projections}.

\subsection{Hyperbolic Sliced-Wasserstein}

\textbf{With geodesic projections.} We discuss here the results in the Lorentz model, but we can also obtain all the results in the Poincaré ball. Let $v\in T_{x^0}\cap S^d$ and $\mathcal{G}^v = \{\exp_{x^0}(tv),\ t\in\mathbb{R}\}$ a geodesic passing through $x^0$. As a first generalization of the sliced-Wasserstein distance on hyperbolic spaces, we propose to use the geodesic projection $\Tilde{P}^v$, which projects points on $\mathcal{G}^v$ following the shortest path (geodesics), and which is defined as
\begin{equation}
    \forall x\in \mathbb{L}^d,\ \Tilde{P}^v(x) = \argmin_{y\in \mathcal{G}^v}\ d(x,y).
\end{equation}
We report in Appendix \ref{appendix:geod_proj} the closed-form formulas on both the Lorentz model and the Poincaré ball. Here, we are mostly interested into the coordinate on $\mathbb{R}$, which can be obtained either by computing $t^v\circ \Tilde{P}^v$, or as
\begin{equation}
    \forall x\in \mathbb{L}^d,\ P^v(x) = \argmin_{t\in\mathbb{R}}\ d_\mathbb{L}\big(\exp_{x^0}(tv), x\big).
\end{equation}
Regarding the implementation, we derive a closed-form in the following proposition.
\begin{proposition}[Coordinate of the geodesic projection] \label{prop:hsw_coord_geod_proj} \leavevmode
    \begin{enumerate}
        \item Let $\mathcal{G}^v = \mathrm{span}(x^0, v)\cap \mathbb{L}^d$ where $v\in T_{x^0}\mathbb{L}^d\cap S^d$. Then, the coordinate $P^v$ of the geodesic projection on $\mathcal{G}^v$ of $x\in \mathbb{L}^d$ is
        \begin{equation}
            P^v(x) = \arctanh\left(-\frac{\langle x, v\rangle_\mathbb{L}}{\langle x,x^0\rangle_\mathbb{L}}\right).
        \end{equation}
        \item Let $\Tilde{v}\in S^{d-1}$ be an ideal point. Then, the coordinate $P^{\Tilde{v}}$ of the geodesic projection on the geodesic characterized by $\Tilde{v}$ of $x\in \mathbb{B}^d$ is
        \begin{equation}
            P^{\Tilde{v}}(x) = 2 \arctanh\big(s(x)\big),
        \end{equation}
        where 
        % \begin{equation}
        %     s(x) = \frac{1+\|x\|_2^2}{2\langle x, \Tilde{v}\rangle} - \mathrm{sign}(\langle x, \Tilde{v}\rangle) \sqrt{\left(\frac{1+\|x\|_2^2}{2\langle x,\Tilde{v}\rangle}\right)^2 -1}.
        % \end{equation}
        % \begin{equation}
        %     s(x) = \begin{cases}
        %         \frac{1+\|x\|_2^2}{2\langle x, \Tilde{v}\rangle} - \sqrt{\left(\frac{1+\|x\|_2^2}{2\langle x,\Tilde{v}\rangle}\right)^2 -1} \quad \text{if } \langle x, \Tilde{v}\rangle \ge 0 \\
        %         \frac{1+\|x\|_2^2}{2\langle x, \Tilde{v}\rangle} + \sqrt{\left(\frac{1+\|x\|_2^2}{2\langle x,\Tilde{v}\rangle}\right)^2 -1} \quad \text{if } \langle x, \Tilde{v}\rangle \le 0.
        %     \end{cases}
        % \end{equation}
        % \begin{equation}
        %     s(x) = \frac{1+\|x\|_2^2 - \sqrt{(1+\|x\|_2^2)^2 - 4 \langle x, \Tilde{v}\rangle^2}}{2 \langle x, \Tilde{v}\rangle}.
        % \end{equation}
        \begin{equation}
            s(x) = \left\{\begin{array}{ll} \frac{1+\|x\|_2^2 - \sqrt{(1+\|x\|_2^2)^2 - 4 \langle x, \Tilde{v}\rangle^2}}{2 \langle x, \Tilde{v}\rangle} & \mbox{ if } \langle x,\Tilde{v}\rangle \neq 0 \\
            0 & \mbox{ if } \langle x,\Tilde{v}\rangle = 0.
            \end{array}\right.
        \end{equation}
    \end{enumerate}
\end{proposition}

Now, we have all the tools to define the geodesic hyperbolic sliced-Wasserstein discrepancy (GHSW) between $\mu,\nu\in\mathcal{P}_p(\mathbb{L}^d)$ as, for $p \ge 1$,
\begin{equation}
    GHSW_p^p(\mu,\nu) = \int_{T_{x^0}\mathbb{L}^d \cap S^{d}} W_p^p(P^v_\#\mu,P^v_\#\nu)\ \mathrm{d}\lambda(v).
\end{equation}
Note that $T_{x^0}\mathbb{L}^d\cap S^d \cong S^{d-1}$ and that $v$ can be drawn by first sampling $\Tilde{v}\sim\mathrm{Unif}(S^{d-1})$ and then adding a $0$ in the first coordinate, \emph{i.e.} $v=(0,\Tilde{v})$ with $\Tilde{v}\in S^{d-1}$. Note also that $GHSW_p(\mu,\nu)<\infty$ for $\mu,\nu\in\mathcal{P}_p(\mathbb{L}^d)$. We also have the Poincaré formulation using $P^{\Tilde{v}}$, and defined between $\mu,\nu\in\mathcal{P}(\mathbb{B}^d)$ as
\begin{equation}
    GHSW_p^p(\mu,\nu) = \int_{S^{d-1}} W_p^p(P^{\Tilde{v}}_\#\mu, P^{\Tilde{v}}_\#\nu)\ \mathrm{d}\lambda(\Tilde{v}).
\end{equation}

\textbf{With horospherical projections.} As we saw in Section \ref{sec:ot}, the projection on geodesics in the Euclidean space is obtained by taking the inner product. A first viewpoint is to see it as the geodesic projection of $x\in\mathbb{R}^d$ on the geodesic $\mathrm{span}(\theta)$:
\begin{equation}
    \langle x,\theta\rangle \theta = \argmin_{y\in\mathrm{span}(\theta)}\ \|x-y\|_2.%.
\end{equation}
In this case, using a similar projection as \eqref{eq:proj_geod_r}, the coordinates on the line are obtained as the inner product: % and the coordinates are obtained as
\begin{equation}
    t^\theta(x)=\mathrm{sign}(\langle x,\theta\rangle) \|\langle x,\theta\rangle\theta-0\|_2 = \langle x,\theta\rangle.
\end{equation}
However, the inner product $\langle x, \theta\rangle$ can actually also be seen directly as a coordinate on the line $\mathrm{span}(\theta)$. This can be translated by the Busemann function on unit-speed geodesics, which can be generalized on certain Riemannian manifolds. More precisely, the Busemann function associated to the geodesic ray $\gamma$, \emph{i.e.} a geodesic from $\mathbb{R}_+$ to the manifold satisfying $d(\gamma(t),\gamma(s)) = |t-s|$, is defined as \citep[Definition 8.17]{bridson2013metric} 
\begin{equation}
    B^\gamma(x) = \lim_{t\to\infty}\ \big( d(x,\gamma(t))- t\big),
\end{equation}
where $x$ belongs to the corresponding manifold and $d$ is the geodesic distance. It can be checked that on Euclidean spaces, $B^{\mathrm{span}(\theta)}(x) = -\langle x,\theta\rangle$. While the Busemann function is not well defined on positively curved spaces such as the sphere (as geodesics are periodic), closed-form are available on hyperbolic spaces and provide different projections. We report them in the next proposition. %Closed-forms of the Busemann functions are also available on hyperbolic spaces and we report them in the next proposition. 
As we only work with geodesics passing through the origin, we put as indices the directions which fully characterize them (either $v\in T_{x^0}\mathbb{L}^d$ in $\mathbb{L}^d$, or $\Tilde{v}\in S^{d-1}$ in $\mathbb{B}^d$). 

\begin{proposition}[Busemann function on hyperbolic space] \label{prop:busemann_closed_forms} \leavevmode
    \vspace{-2.2em} 
    \begin{enumerate}
        \item On $\mathbb{L}^d$, for any direction $v\in T_{x^0}\mathbb{L}^d\cap S^d$, 
        \begin{equation}
            \forall x\in\mathbb{L}^d,\ B^v(x) = \log(-\langle x,x^0+v\rangle_\mathbb{L}).
        \end{equation}
        \item On $\mathbb{B}^d$, for any ideal point $\Tilde{v}\in S^{d-1}$,
        \begin{equation}
            \forall x\in \mathbb{B}^d,\ B^{\Tilde{v}}(x) = \log\left(\frac{\|\Tilde{v}-x\|_2^2}{1-\|x\|_2^2}\right).
        \end{equation}
    \end{enumerate}
\end{proposition}

To conserve Busemann coordinates, it has been proposed by \citet{chami2021horopca} to project points on a subset following the level sets of the Busemann function. Those  level sets %of the Busemann function 
are known as horospheres, which can be seen as spheres of infinite radius \citep{izumiya2009horospherical}. In the Poincaré ball, a horosphere is a Euclidean sphere tangent to an ideal point. \citet{chami2021horopca} argued that this projection is beneficial against the geodesic projection as it tends to better preserve the distances. This motivates us to project on geodesics following the level sets of the Busemann function in order to conserve the Busemann coordinates, \emph{i.e.} we want to have $B^{\Tilde{v}}(x) = B^{\Tilde{v}}(P^{\Tilde{v}}(x))$ (resp. $B^v(x)=B^v(P^v(x))$) on the Poincaré ball (resp. Lorentz model) where $\Tilde{v}\in S^{d-1}$ (resp. $v\in T_{x^0}\mathbb{L}^d\cap S^d$) is characterizing the geodesic. We report the closed-forms in Appendix \ref{appendix:busemann_closed_forms}. In practice, noting that $B^\gamma(x)=B^\gamma(\gamma(t))=-t$, we obtain that the coordinate is $t=-B^\gamma(x)$. %In the next Proposition, we derive a closed-form for the projection in both the Poincaré ball and Lorentz model.

% \begin{proposition}[Horospherical projection] \label{prop:horospherical_projection} \leavevmode
%     \vspace{-1em}
%     \begin{enumerate}
%         \item Let $v\in T_{x^0}\mathbb{L}^d\cap S^d$ be a direction and $\mathcal{G}=\mathrm{span}(x^0,v)\cap \mathbb{L}^d$ the corresponding geodesic passing through $x^0$. Then, for any $x\in\mathbb{L}^d$, the projection on $\mathcal{G}$ along the horosphere is given by
%         % \begin{equation}
%         %     \Tilde{P}^v(x) = -\frac{1}{2\langle x, x^0 + v \rangle_\mathbb{L}} \big( (1+\langle x,x^0+v\rangle_\mathbb{L}^2)x^0 + (-\langle x,x^0 +v\rangle_\mathbb{L}^2)v\big).
%         % \end{equation}
%         \begin{equation}
%             \Tilde{P}^v(x) = \frac{1+u^2}{1-u^2} x^0 + \frac{2u}{1-u^2} v,
%         \end{equation}
%         where $u = \frac{1+\langle x, x^0+v\rangle_\mathbb{L}}{1-\langle x, x^0+v\rangle_\mathbb{L}}$.
%         \item Let $\Tilde{v}\in S^{d-1}$ be an ideal point. Then, for all $x\in \mathbb{B}^d$,
%         \begin{equation}
%             \Tilde{P}^{\Tilde{v}}(x) = \left(\frac{1-\|x\|_2^2-\|\Tilde{v}-x\|_2^2}{1-\|x\|_2^2+\|\Tilde{v}-x\|_2^2}\right)\Tilde{v}.
%         \end{equation}
%     \end{enumerate}
% \end{proposition}

Using the projections along the horospheres, we can define a new hyperbolic sliced-Wasserstein discrepancy, called horospherical, between $\mu, \nu \in \mathcal{P}_p(\mathbb{L}^d)$ as, for $p\ge 1$,
\begin{equation}
    HHSW_p^p(\mu,\nu) = \int_{T_{x^0}\mathbb{L}^d\cap S^d} W_p^p(B^v_\#\mu, B^v_\#\nu)\ \mathrm{d}\lambda(v).
\end{equation}
Note that $HHSW_p(\mu,\nu)<\infty$ for $\mu,\nu\in\mathcal{P}_p(\mathbb{L}^d)$ (see Appendix \ref{appendix:finiteness}). % and that $t^v(\Tilde{P}^v(x)) = -B^v(x)$ , which underlines that it is another natural generalization of the Euclidean SW. 
We also provide a formulation on the Poincaré ball between $\mu,\nu\in\mathcal{P}_p(\mathbb{B}^d)$, using $B^{\Tilde{v}}$, as
% \looseness=-1 Note that we can show that $HHSW_p(\mu,\nu)<\infty$ for $\mu,\nu\in\mathcal{P}_p(\mathbb{L}^d)$ and that $t^v(\Tilde{P}^v(x)) = -B^v(x)$ (see Appendix \ref{appendix:finiteness}), which underlines that it is \cb{another} natural generalization of the Euclidean SW. We also provide a formulation on the Poincaré ball between $\mu,\nu\in\mathcal{P}_p(\mathbb{B}^d)$, using $t^{\Tilde{v}}_\mathbb{B}$ instead of $t^v_\mathbb{L}$, as
\begin{equation}
    HHSW_p^p(\mu,\nu) = \int_{S^{d-1}} W_p^p(B^{\Tilde{v}}_\#\mu, B^{\Tilde{v}}_\#\nu)\ \mathrm{d}\lambda(\Tilde{v}).
\end{equation}
Using that the projections formula between $\mathbb{L}^d$ and $\mathbb{B}^d$ are isometries, we show in the next proposition that the two formulations are equivalent. Hence, we choose in practice the formulation which is the more suitable, either from the nature of data or from a numerical stability viewpoint.

\begin{proposition} \label{prop:equality_hhsw}
    For $p\ge 1$, let $\mu,\nu\in\mathcal{P}_p(\mathbb{B}^d)$ and denote $\Tilde{\mu} = (P_{\mathbb{B}\to\mathbb{L}})_\#\mu$, $\Tilde{\nu} = (P_{\mathbb{B}\to\mathbb{L}})_\#\nu$. Then,
    \begin{align}
        HHSW_p^p(\mu,\nu) = HHSW_p^p(\Tilde{\mu},\Tilde{\nu}), \\
        GHSW_p^p(\mu,\nu) = GHSW_p^p(\Tilde{\mu}, \Tilde{\nu}).
    \end{align}
\end{proposition}

\subsection{Properties}

It can easily be showed that GHSW and HHSW are pseudo-distances as it only depends on the distance properties of the Wasserstein distance. Whether or not they satisfy the indiscernible property remains an open question. %To be a distance, we lack the positivity property. 
As described in the introduction for SW, we can derive the corresponding Radon transform. More precisely, we can show that 
\begin{equation}
    GHSW_p^p(\mu,\nu) = \int_{S^{d-1}} W_p^p\big((\Bar{R}\mu)^v, (\Bar{R}\nu)^v\big)\ \mathrm{d}\lambda(v),
\end{equation}
where $\Bar{R}$ is the hyperbolical Radon transform, first introduced by \citet{helgason1959differential} and more recently studied \emph{e.g.} in \citep{berenstein1999radon, berenstein2004totally, rubin2002radon}. We can also show a similar relation between HHSW and the horospherical Radon transform studied \emph{e.g.} by \citet{bray2019radon, casadio2021radon}. If these transforms are injective on the space of measures, then we would have that GHSW or HHSW are distances. However, to the best of our knowledge, the injectivity of such transforms on the space of measures has not been studied yet. We detail the derivations in Appendix \ref{appendix:pseudo_distance}.

We also provide in Appendix \ref{appendix:sample_complexity} the sample complexity and the projection complexity. We note that the results are similar as in the Euclidean case \citep{nadjahi2020statistical}, \emph{i.e.} the sample complexity is independent of the dimension and the projection complexity converges in $O(1/\sqrt{L})$ with $L$ the number of projections.

\section{Implementation}

\begin{algorithm}[tb]
   \caption{Guideline of GHSW}
   \label{alg:hsw}
    \begin{algorithmic}
       \STATE {\bfseries Input:} $(x_i)_{i=1}^n\sim \mu$, $(y_j)_{j=1}^n\sim \nu$, $(\alpha_i)_{i=1}^n$, $(\beta_j)_{j=1}^n\in \Delta_n$, $L$ the number of projections, $p$ the order
       \FOR{$\ell=1$ {\bfseries to} $L$}
       \STATE Draw $\Tilde{v}\sim\mathrm{Unif}(S^{d-1})$, let $v=[0,\Tilde{v}]$
       \STATE $\forall i,j,\ \hat{x}_i^{\ell}=P^v(x_i)$, $\hat{y}_j^\ell=P^v(y_j)$
       \STATE Compute $W_p^p(\sum_{i=1}^n \alpha_i \delta_{\hat{x}_i^\ell}, \sum_{j=1}^n \beta_j \delta_{\hat{y}_j^\ell})$
       \ENDFOR
       \STATE Return $\frac{1}{L}\sum_{\ell=1}^L W_p^p(\sum_{i=1}^n \alpha_i \delta_{\hat{x}_i^\ell}, \sum_{j=1}^n \beta_j \delta_{\hat{y}_j^\ell})$
    \end{algorithmic}
\end{algorithm}

\begin{figure}[t]
    \centering
    \includegraphics[width=\columnwidth]{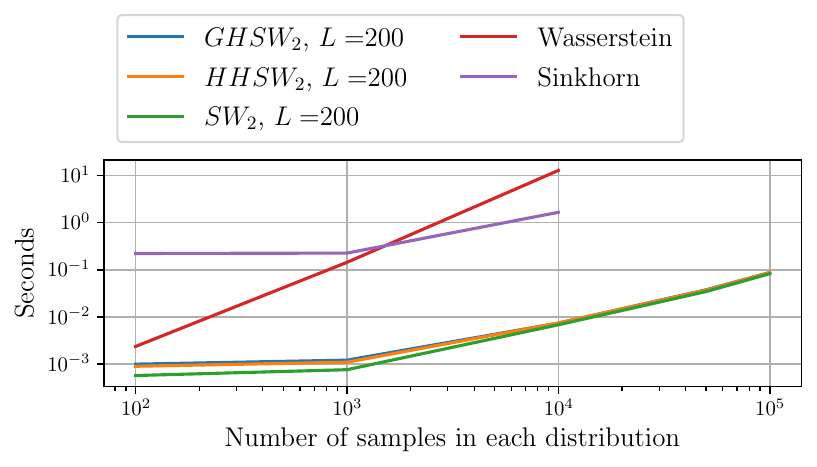}
    \caption{Runtime comparison in log-log scale between Wasserstein and Sinkhorn using the geodesic distance, $SW_2$, $GHSW_2$ and $HHSW_2$ with 200 projections, including the computation time of the cost matrices.}
    \label{fig:comparison_time}
    \vspace{-10pt}
\end{figure}

\begin{figure*}[t]
    \centering
    \hspace*{\fill}
    \subfloat[SW on Poincaré (SWp), GHSW]{\label{fig:w_wnd}\includegraphics[width=0.3\linewidth]{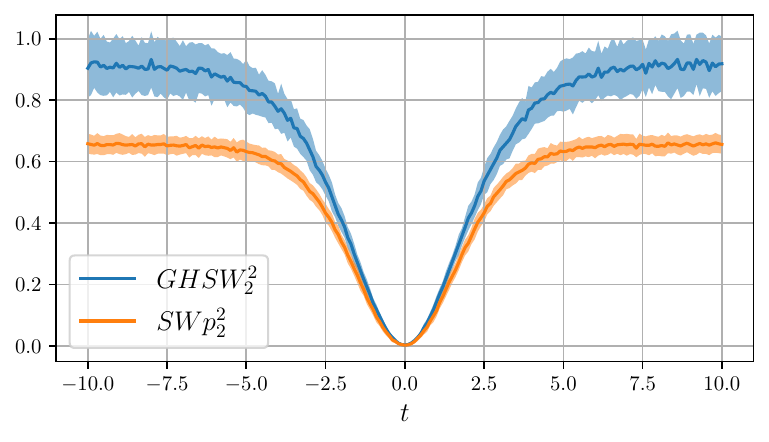}} \hfill
    \subfloat[HHSW and Wasserstein]{\label{fig:hsw_wnd}\includegraphics[width=0.3\linewidth]{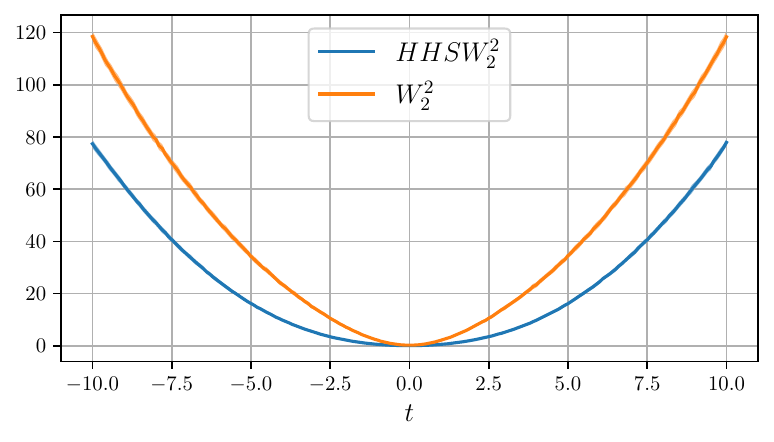}} \hfill
    \subfloat[SW on Lorentz (SWl)]{\label{fig:sw_wnd_lorentz}\includegraphics[width=0.3\linewidth]{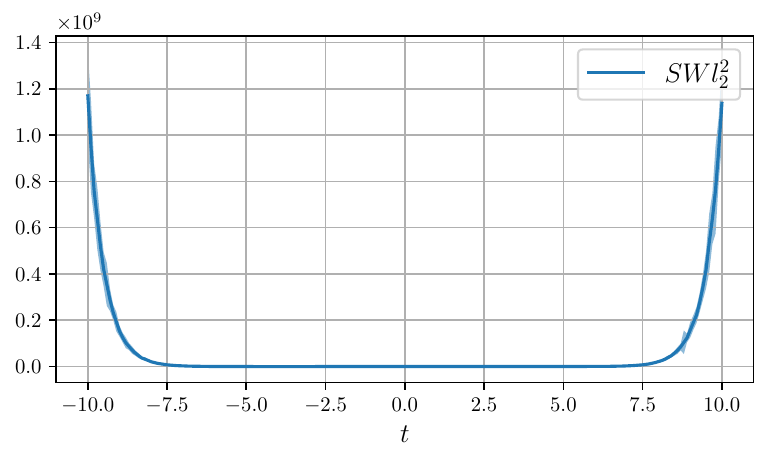}} \hfill
    \hspace*{\fill} \\
    \caption{Comparison of the Wasserstein distance (with the geodesic distance as cost), GHSW, HHSW and SW between Wrapped Normal distributions. We gather the discrepancies together by scale of the values. SW on the Poincaré model has very small values as it operates on the unit ball, while on the Lorentz model, it can take very high values. GHSW returns small values as the geodesic projections tend to project the points close to the origin. HHSW has values which are closer to the geodesic Wasserstein distance as the horospherical projection tends to better keep the distance between points.}
    \label{fig:comparison_wnd}
    \vspace{-15pt}
\end{figure*}

In this Section, we discuss the implementation of GHSW and HHSW, as well as their complexity.

\textbf{Implementation.} In practice, we only have access to discrete distributions $\hat{\mu}_n=\sum_{i=1}^n \alpha_i\delta_{x_i}$ and $\hat{\nu}_n = \sum_{i=1}^n \beta_i \delta_{y_i}$ where $(x_i)_i$ and $(y_i)_i$ are sample locations in hyperbolic space, and $(\alpha_i)_i$ and $(\beta_i)_i$ belong to the simplex $\Delta_n = \{\alpha\in [0,1]^n,\ \sum_{i=1}^n \alpha_i = 1\}$. We approximate the integral by a Monte-Carlo approximation by drawing a finite number $L$ of projection directions $(v_\ell)_{\ell=1}^L$ in $S^{d-1}$. Then, computing GHSW and HHSW amount at first getting the coordinates on $\mathbb{R}$ by using the corresponding projections, and computing the 1D Wasserstein distance between them. We summarize the procedure in Algorithm \ref{alg:hsw} for GHSW. % \lc{un détail : dans le texte, on a n samples pour x et y, mais dans l'algo, c'est n et m.}

\looseness=-1 \textbf{Complexity.} For both GHSW and HHSW, the projection procedure has a complexity of $O(nd)$. Hence, for $L$ projections, the complexity is in $O(Ln(d+\log n))$ which is the same as for SW. In Figure \ref{fig:comparison_time}, we compare the runtime between GHSW, HHSW, SW, Wasserstein and Sinkhorn with geodesic distances in $\mathbb{L}^2$ for $n\in \{10^2,10^3,10^4,5\cdot 10^4, 10^5\}$ samples which are drawn from wrapped normal distributions \citep{nagano2019wrapped}, and $L=200$ projections. We used the POT library \citep{flamary2021pot} to compute SW, Wasserstein and Sinkhorn. We observe  the quasi-linearity complexity of GHSW and HHSW. When we only have a few samples, the cost of the projection is higher than computing the 1D Wasserstein distance, and SW is the fastest. %Using the Busemann coordinates, HHSW is faster than GHSW which requires to compute a projection and the distances to the origin for the coordinates.

\section{Application}

In this Section, we perform several experiments which aim at comparing GHSW, HHSW, SWp and SWl. First, we %perform comparisons by looking at 
study the evolution of the different distances between wrapped normal distributions which move along geodesics. Then, we illustrate the ability to fit distributions on $\mathbb{L}^2$ using gradient flows. Finally, we use HHSW and GHSW for an image classification problem where they are used to fit a prior in the embedding space. We add more informations about distributions and optimization in hyperbolic spaces in Appendix \ref{appendix:hyperbolic_space}. Complete details of the experimental settings are reported in Appendix \ref{appendix:xps}. We also report in \cref{appendix:hswae} preliminary experiments on autoencoders with hierarchical latent priors.

% \subsection{Comparisons of the Different Hyperbolical SW Discrepancies} \label{section:comparison_sw}
\paragraph{Comparisons of the Different Hyperbolical SW Discrepancies.} \label{section:comparison_sw}

On Figure \ref{fig:comparison_wnd}, we compare the evolutions of GHSW, HHSW, SW and Wasserstein with the geodesic distance between Wrapped Normal Distributions (WNDs), where one is centered and the other moves along a geodesic. More precisely, by denoting $\mathcal{G}(\mu,\Sigma)$ a WND, we plot the evolution of the distances between $\mathcal{G}(x^0,I_2)$ and $\mathcal{G}(x_t, I_2)$ where $x_t = \cosh(t)x^0 + \sinh(t)v$ for $t\in [-10,10]$ and $v\in T_{x^0}\mathbb{L}^2\cap S^2$. We observe first that SW on the Lorentz model explodes when the two distributions are getting far from each other. Then, we observe that $HHSW_2$ has values with a scale similar to $W_2$. We argue that it comes from the observation of \citet{chami2021horopca} which stated that the horospherical projection better preserves the distance between points compared to the geodesic projection. As SWp operates on the unit ball using Euclidean distances, the distances are very small, even for distributions close to the border. Interestingly, as geodesic projections tend to project points close to the origin, GHSW tends also to squeeze the distance between distributions far from the origin. This might reduce numerical instabilities when getting far from the origin, especially in the Lorentz model. This experiments also allows to observe that, at least for
WNDs, the indiscernible property is observed in practice as we only obtain one minimum when both measures coincide. Hence, it suggests that GHSW and HHSW are proper distances.

% \subsection{Gradient Flows}
\paragraph{Gradient Flows.}

\begin{figure}[t]
    \centering
    \includegraphics[width=\columnwidth]{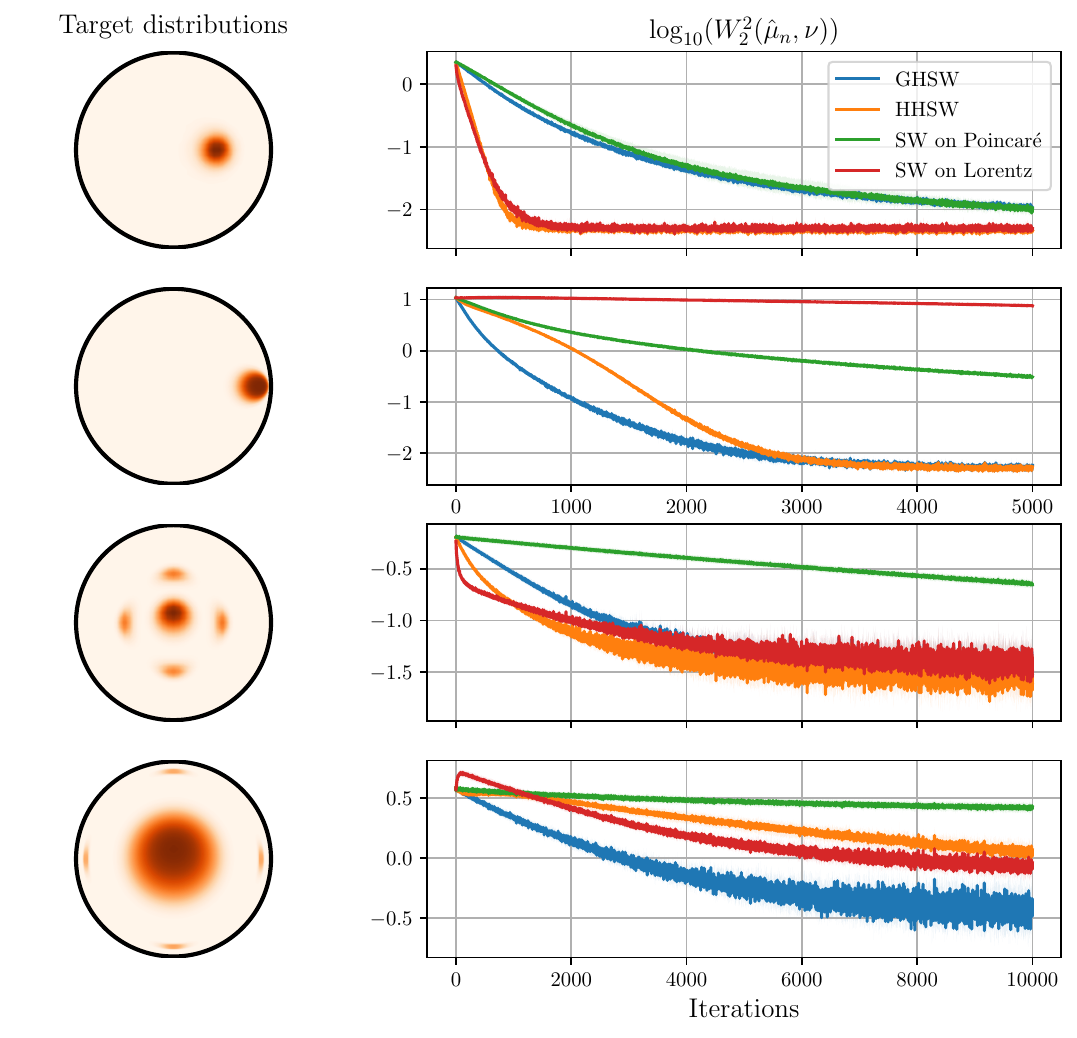}
    \caption{Log 2-Wasserstein between a target and the gradient flow of GHSW, HHSW and SW (averaged over 5 runs).}
    \label{fig:comparison_gradientflows}
    \vspace{-15pt}
    % \vspace{-28pt}
\end{figure}

\looseness=-1 We now assess the ability to learn distributions by minimizing the hyperbolic SW discrepancies ($HSW$). We suppose that we have a target distribution $\nu$ from which we have access to samples $(x_i)_{i=1}^n$. Therefore, we aim at learning $\nu$ by solving the following optimization problem: $\min_{\mu} \ HSW_2^2\big(\mu,\frac{1}{n}\sum_{i=1}^n \delta_{x_i}\big)$. We model $\mu$ as a set of $n=500$ particles and propose to perform a Riemannian gradient descent \citep{boumal2022intromanifolds} to learn the distribution. 

To compare the dynamics of the different discrepancies, we plot on Figure \ref{fig:comparison_gradientflows} the evolution of the exact log $2$-Wasserstein distance, with geodesic distance as ground cost, between the learned distribution at each iteration and the target, with the same learning rate. We use as targets wrapped normal distributions and mixtures of WNDs. For each type of target, we consider two settings, one in which the distribution is close to the origin and another in which the distribution lies closer to the border. We observe different behaviors in the two settings. When the target is lying close to the origin, SWl and HHSW, which present the biggest magnitude, are the fastest to converge. As for distant distributions however, GHSW converges the fastest. Moreover, SWl suffers from many numerical instabilities, as the projections of the gradients do not necessarily lie on the tangent space when points are too far of the origin. This requires to lower the learning rate, and hence to slow down the convergence. Interestingly, SWp is the slowest to converge in both settings.

% \subsection{Deep Classification with Prototypes}
\paragraph{Deep Classification with Prototypes.}

We now turn to a classification use case with real world data. Let $\{(x_i,y_i)_{i=1}^n\}$ be a training set where $x_i\in\mathbb{R}^m$ and $y_i\in\{1,\dots,C\}$ denotes a label. \citet{ghadimi2021hyperbolic} perform classification on the Poincaré ball by assigning to each class $c\in\{1,\dots,C\}$ a prototype $p_c\in S^{d-1}$, and then by learning an embedding on the hyperbolic space using a neural network $f_\theta$ followed by the exponential map. Then, by denoting by $z=\exp_0\big(f_\theta(x)\big)$ the output, the loss to be minimized is, for a regularization parameter $s\ge 0$,
\begin{equation}
    \ell(\theta) = \frac{1}{n}\sum_{i=1}^n \Big(B^{p_{y_i}}\big(z_i\big) - sd\cdot \log\big(1-\|z_i\|_2^2\big)\Big).
\end{equation}
The first term is the Busemann function which will draw the representations of $x_i$ towards the prototype assigned to the class $y_i$, while the second term penalizes the overconfidence and pulls back the representation towards the origin. \citet{ghadimi2021hyperbolic} showed that the second term can be decisive to improve the accuracy. Then, the classification of an input is done by solving $y^* = \argmax_{c}\ \langle \frac{z}{\|z\|}, p_c\rangle$.

\looseness=-1 We propose to replace the second term by a global prior on the distribution of the representations. More precisely, we add a discrepancy $D$ between the distribution $(\exp_0\circ f_\theta)_\# p_X$, where $p_X$ denotes the distribution of the training set, and a mixture of $C$ WNDs where the centers are chosen as $(\alpha p_c)_{c=1}^C$, with $(p_c)_c$ the prototypes and $0<\alpha<1$. In practice, we use $D=GHSW_2^2$, $D=HHSW_2^2$, $D=SWp_2^2$ and $D=SWl_2^2$ to assess their usability on a real problem and compared with $W_2^2$ and MMD with Laplacian kernel \citep{feragen2015geodesic}. Let $(w_i)_{i=1}^n$ be a batch of points drawn from this mixture, then the loss we minimize is
\begin{equation} \label{eq:loss_hsw}
    \ell(\theta) = \frac{1}{n}\sum_{i=1}^n B^{p_i}  (z_i)  + \lambda D\left(\frac{1}{n}\sum_{i=1}^n \delta_{z_i}, \frac{1}{n}\sum_{i=1}^n \delta_{w_i}\right).
\end{equation}
On Table \ref{tab:acc_pebuse}, we report the classification accuracy on the test set for CIFAR10 and CIFAR100 \citep{krizhevsky2009learning}, using the exact same setting as \citep{ghadimi2021hyperbolic}. We rerun their method, called PeBuse here. We report results averaged over 3 runs. %We report the results given by \citet{ghadimi2021hyperbolic} for the original penalty and for a similar hyperspherical prototype method proposed by \citet{mettes2019hyperspherical}. 
We observe that the proposed penalization outperforms the original method for all the different dimensions.

% \begin{table}[t]
%     \centering
%     % \vspace{-12pt}
%     \caption{Test Accuracy on deep classification with prototypes (best performance in bold)}
%     \small
%     \resizebox{\columnwidth}{!}{
%         \begin{tabular}{ccccccccc}
%             & \multicolumn{3}{c}{CIFAR10} & & \multicolumn{4}{c}{CIFAR100} \\ 
%             \toprule
%             Dimensions & 2 & 3 & 4 & & 3 & 5 & 10 & 50 \\ \midrule
%             \citep{mettes2019hyperspherical} & 30.0 & 87.9 & 88.8 & & 5.5 & 28.7 & 51.1 & 64.7 \\
%             % PeBuse (mine) & \\
%             \citep{ghadimi2021hyperbolic} & 91.2 & 92.2 & 92.2 & & 49.0 & 54.6 & 59.1 & 65.8 \\
%             GHSW & 91.61 & \textbf{92.48} & \textbf{92.29} & & 54.78 & \textbf{60.94} & \textbf{62.72} & 57.4 \\
%             HHSW & 91.32 & 92.34 & 91.92 & & 54.29 & 60.67 & 62.14 & \textbf{66.75} \\
%             SWp & \textbf{91.63} & 92.2 & 91.83 & & \textbf{55.6} & 60.11 & 61.89 & 58.43 \\
%             SWl & 91.08 & 91.86 & 91.7 & & 53.42 & 60.28 & 62.65 & 1.19 \\
%             \bottomrule
%         \end{tabular}
%     }
%     \vspace{-15pt}
%     \label{tab:acc_pebuse}
% \end{table}

\begin{table}[t]
    \centering
    % \vspace{-12pt}
    \caption{Test Accuracy on deep classification with prototypes (best performance in bold)}
    \small
    \resizebox{\columnwidth}{!}{
        \begin{tabular}{ccccccc}
            & \multicolumn{2}{c}{CIFAR10} & & \multicolumn{3}{c}{CIFAR100} \\ 
            \toprule
            Dimensions & 2 & 4 & & 3 & 5 & 10 \\ \midrule
            % \citep{mettes2019hyperspherical} &  \\
            % PeBuse (mine) & \\
            PeBuse & $90.64_{\pm 0.06}$ & $90.59_{\pm 0.11}$ & & $49.28_{\pm 1.95}$ & $53.44_{\pm 0.76}$ & $59.19_{\pm 0.39}$ \\
            GHSW & $91.39_{\pm 0.23}$ & $91.66_{\pm 0.27}$ & & $\bold{53.97}_{\pm 1.35}$ & $60.64_{\pm 0.87}$ & $61.45_{\pm 0.41}$ \\
            HHSW & $91.28_{\pm 0.26}$ & $\bold{91.98}_{\pm 0.05}$ & & $53.88_{\pm 0.06}$ & $\bold{60.69}_{\pm 0.25}$ & $\bold{62.80}_{\pm 0.09}$ \\
            SWp & $\bold{91.84}_{\pm 0.31}$ & $91.68_{\pm 0.10}$ & & $53.25_{\pm 3.27}$ & $59.77_{\pm 0.81}$ & $60.36_{\pm 1.26}$ \\
            SWl & $91.13_{\pm 0.14}$ & $91.74_{\pm 0.12}$ & & $53.88_{\pm 0.02}$ & $60.62_{\pm 0.39}$ & $62.30_{\pm 0.23}$ \\
            W & $91.67_{\pm 0.18}$ & $91.83_{\pm 0.21}$ & & $50.07_{\pm 4.58}$ & $57.49_{\pm 0.94}$ & $58.82_{\pm 1.66}$ \\
            MMD & $91.47_{\pm 0.10}$ & $91.68_{\pm 0.09}$ & & $50.59_{\pm 4.44}$ & $58.10_{\pm 0.73}$ & $58.91_{\pm 0.91}$ \\
            \bottomrule
        \end{tabular}
    }
    \vspace{-15pt}
    \label{tab:acc_pebuse}
\end{table}

\section{Conclusion and Discussion}

\looseness=-1 In this work, we propose different sliced-Wasserstein discrepancies between distributions lying in hyperbolic spaces. In particular, we introduce two new SW discrepancies which are intrinsically defined on hyperbolic spaces. They are built by first identifying a closed-form for the Wasserstein distance on geodesics, and then by using different projections on the geodesics. We compare these metrics on multiple tasks such as sampling and image classification. We observe that, while Euclidean SW in the ambient space still works, it suffers from either slow convergence on the Poincaré ball or numerical instabilities on the Lorentz model when distributions are lying far from the origin. On the other hand, geodesic versions exhibit the same complexity and converge generally better for gradient flows. Further works will look into other tasks where hyperbolic embeddings and distributions have been showed to be beneficial, such as persistent diagrams \citep{carriere2017sliced, kyriakis2021learning}. Besides further applications, proving that these discrepancies are indeed distances, and deriving statistical results are interesting directions of work. One might also consider different subspaces on which to project, such as horocycles which are circles of infinite radius and which can be seen as another analog object to lines in hyperbolic spaces \citep{casadio2021radon}.

% Acknowledgements should only appear in the accepted version.
\section*{Acknowledgements}

This research was funded by project DynaLearn from Labex CominLabs and Region Bretagne ARED DLearnMe, and by the project OTTOPIA ANR-20-CHIA-0030 of the French National Research Agency (ANR).

% Clément Bonet, Nicolas Courty and Lucas Drumetz contributions were supported by project DynaLearn from Labex CominLabs and Région Bretagne ARED DLearnMe. %Nicolas Courty is partially funded by the project OTTOPIA ANR-20-CHIA-0030 of the French National Research Agency (ANR).

% In the unusual situation where you want a paper to appear in the
% references without citing it in the main text, use \nocite
%\nocite{langley00}

\bibliography{references}

\begin{thebibliography}{82}
\providecommand{\natexlab}[1]{#1}
\providecommand{\url}[1]{\texttt{#1}}
\expandafter\ifx\csname urlstyle\endcsname\relax
  \providecommand{\doi}[1]{doi: #1}\else
  \providecommand{\doi}{doi: \begingroup \urlstyle{rm}\Url}\fi

\bibitem[Absil et~al.(2009)Absil, Mahony, and Sepulchre]{absil2009optimization}
Absil, P.-A., Mahony, R., and Sepulchre, R.
\newblock Optimization algorithms on matrix manifolds.
\newblock In \emph{Optimization Algorithms on Matrix Manifolds}. Princeton
  University Press, 2009.

\bibitem[Alvarez-Melis et~al.(2020)Alvarez-Melis, Mroueh, and
  Jaakkola]{alvarez2020unsupervised}
Alvarez-Melis, D., Mroueh, Y., and Jaakkola, T.
\newblock Unsupervised hierarchy matching with optimal transport over
  hyperbolic spaces.
\newblock In \emph{International Conference on Artificial Intelligence and
  Statistics}, pp.\  1606--1617. PMLR, 2020.

\bibitem[Arjovsky et~al.(2017)Arjovsky, Chintala, and
  Bottou]{arjovsky2017wasserstein}
Arjovsky, M., Chintala, S., and Bottou, L.
\newblock Wasserstein generative adversarial networks.
\newblock In \emph{International conference on machine learning}, pp.\
  214--223. PMLR, 2017.

\bibitem[Berenstein \& Rubin(1999)Berenstein and Rubin]{berenstein1999radon}
Berenstein, C.~A. and Rubin, B.
\newblock Radon transform of lp-functions on the lobachevsky space and
  hyperbolic wavelet transforms.
\newblock 1999.

\bibitem[Berenstein \& Rubin(2004)Berenstein and Rubin]{berenstein2004totally}
Berenstein, C.~A. and Rubin, B.
\newblock Totally geodesic radon transform of l p-functions on real hyperbolic
  space.
\newblock In \emph{Fourier analysis and convexity}, pp.\  37--58. Springer,
  2004.

\bibitem[Boman \& Lindskog(2009)Boman and Lindskog]{boman2009support}
Boman, J. and Lindskog, F.
\newblock Support theorems for the radon transform and cram{\'e}r-wold
  theorems.
\newblock \emph{Journal of theoretical probability}, 22\penalty0 (3):\penalty0
  683--710, 2009.

\bibitem[Bonet et~al.(2022)Bonet, Courty, Septier, and
  Drumetz]{bonet2021sliced}
Bonet, C., Courty, N., Septier, F., and Drumetz, L.
\newblock Efficient gradient flows in sliced-wasserstein space.
\newblock \emph{Transactions on Machine Learning Research}, 2022.

\bibitem[Bonet et~al.(2023)Bonet, Berg, Courty, Septier, Drumetz, and
  Pham]{bonet2022spherical}
Bonet, C., Berg, P., Courty, N., Septier, F., Drumetz, L., and Pham, M.-T.
\newblock Spherical sliced-wasserstein.
\newblock In \emph{International Conference on Learning Representations}, 2023.

\bibitem[Bonnabel(2013)]{bonnabel2013stochastic}
Bonnabel, S.
\newblock Stochastic gradient descent on riemannian manifolds.
\newblock \emph{IEEE Transactions on Automatic Control}, 58\penalty0
  (9):\penalty0 2217--2229, 2013.

\bibitem[Bonneel et~al.(2015)Bonneel, Rabin, Peyr{\'e}, and
  Pfister]{bonneel2015sliced}
Bonneel, N., Rabin, J., Peyr{\'e}, G., and Pfister, H.
\newblock Sliced and radon wasserstein barycenters of measures.
\newblock \emph{Journal of Mathematical Imaging and Vision}, 51\penalty0
  (1):\penalty0 22--45, 2015.

\bibitem[Bonnotte(2013)]{bonnotte2013unidimensional}
Bonnotte, N.
\newblock \emph{Unidimensional and evolution methods for optimal
  transportation}.
\newblock PhD thesis, Paris 11, 2013.

\bibitem[Bose et~al.(2020)Bose, Smofsky, Liao, Panangaden, and
  Hamilton]{bose2020latent}
Bose, J., Smofsky, A., Liao, R., Panangaden, P., and Hamilton, W.
\newblock Latent variable modelling with hyperbolic normalizing flows.
\newblock In \emph{International Conference on Machine Learning}, pp.\
  1045--1055. PMLR, 2020.

\bibitem[Boumal(2022)]{boumal2022intromanifolds}
Boumal, N.
\newblock An introduction to optimization on smooth manifolds.
\newblock To appear with Cambridge University Press, Apr 2022.
\newblock URL \url{http://www.nicolasboumal.net/book}.

\bibitem[Bray \& Rubin(2019)Bray and Rubin]{bray2019radon}
Bray, W. and Rubin, B.
\newblock Radon transforms over lower-dimensional horospheres in real
  hyperbolic space.
\newblock \emph{Transactions of the American Mathematical Society},
  372\penalty0 (2):\penalty0 1091--1112, 2019.

\bibitem[Bray \& Rubin(1999)Bray and Rubin]{bray1999inversion}
Bray, W.~O. and Rubin, B.
\newblock Inversion of the horocycle transform on real hyperbolic spaces via a
  wavelet-like transform.
\newblock In \emph{Analysis of divergence}, pp.\  87--105. Springer, 1999.

\bibitem[Bridson \& Haefliger(2013)Bridson and Haefliger]{bridson2013metric}
Bridson, M.~R. and Haefliger, A.
\newblock \emph{Metric spaces of non-positive curvature}, volume 319.
\newblock Springer Science \& Business Media, 2013.

\bibitem[Carriere et~al.(2017)Carriere, Cuturi, and Oudot]{carriere2017sliced}
Carriere, M., Cuturi, M., and Oudot, S.
\newblock Sliced wasserstein kernel for persistence diagrams.
\newblock In \emph{International conference on machine learning}, pp.\
  664--673. PMLR, 2017.

\bibitem[Casadio~Tarabusi \& Picardello(2021)Casadio~Tarabusi and
  Picardello]{casadio2021radon}
Casadio~Tarabusi, E. and Picardello, M.~A.
\newblock Radon transforms in hyperbolic spaces and their discrete
  counterparts.
\newblock \emph{Complex Analysis and Operator Theory}, 15\penalty0
  (1):\penalty0 1--40, 2021.

\bibitem[Cetin et~al.(2022)Cetin, Chamberlain, Bronstein, and
  Hunt]{cetin2022hyperbolic}
Cetin, E., Chamberlain, B., Bronstein, M., and Hunt, J.~J.
\newblock Hyperbolic deep reinforcement learning.
\newblock \emph{arXiv preprint arXiv:2210.01542}, 2022.

\bibitem[Chami et~al.(2021)Chami, Gu, Nguyen, and R{\'e}]{chami2021horopca}
Chami, I., Gu, A., Nguyen, D.~P., and R{\'e}, C.
\newblock Horopca: Hyperbolic dimensionality reduction via horospherical
  projections.
\newblock In \emph{International Conference on Machine Learning}, pp.\
  1419--1429. PMLR, 2021.

\bibitem[Courty et~al.(2016)Courty, Flamary, Tuia, and
  Rakotomamonjy]{courty2016optimal}
Courty, N., Flamary, R., Tuia, D., and Rakotomamonjy, A.
\newblock Optimal transport for domain adaptation.
\newblock \emph{IEEE transactions on pattern analysis and machine
  intelligence}, 39\penalty0 (9):\penalty0 1853--1865, 2016.

\bibitem[Cuturi(2013)]{cuturi2013sinkhorn}
Cuturi, M.
\newblock Sinkhorn distances: Lightspeed computation of optimal transport.
\newblock \emph{Advances in neural information processing systems}, 26, 2013.

\bibitem[Deshpande et~al.(2018)Deshpande, Zhang, and
  Schwing]{deshpande2018generative}
Deshpande, I., Zhang, Z., and Schwing, A.~G.
\newblock Generative modeling using the sliced wasserstein distance.
\newblock In \emph{Proceedings of the IEEE conference on computer vision and
  pattern recognition}, pp.\  3483--3491, 2018.

\bibitem[Fatras et~al.(2020)Fatras, Zine, Flamary, Gribonval, and
  Courty]{pmlr-v108-fatras20a}
Fatras, K., Zine, Y., Flamary, R., Gribonval, R., and Courty, N.
\newblock Learning with minibatch wasserstein : asymptotic and gradient
  properties.
\newblock In Chiappa, S. and Calandra, R. (eds.), \emph{Proceedings of the
  Twenty Third International Conference on Artificial Intelligence and
  Statistics}, volume 108 of \emph{Proceedings of Machine Learning Research},
  pp.\  2131--2141. PMLR, 26--28 Aug 2020.
\newblock URL \url{https://proceedings.mlr.press/v108/fatras20a.html}.

\bibitem[Feragen et~al.(2015)Feragen, Lauze, and Hauberg]{feragen2015geodesic}
Feragen, A., Lauze, F., and Hauberg, S.
\newblock Geodesic exponential kernels: When curvature and linearity conflict.
\newblock In \emph{Proceedings of the IEEE conference on computer vision and
  pattern recognition}, pp.\  3032--3042, 2015.

\bibitem[Flamary et~al.(2021)Flamary, Courty, Gramfort, Alaya, Boisbunon,
  Chambon, Chapel, Corenflos, Fatras, Fournier, et~al.]{flamary2021pot}
Flamary, R., Courty, N., Gramfort, A., Alaya, M.~Z., Boisbunon, A., Chambon,
  S., Chapel, L., Corenflos, A., Fatras, K., Fournier, N., et~al.
\newblock Pot: Python optimal transport.
\newblock \emph{J. Mach. Learn. Res.}, 22\penalty0 (78):\penalty0 1--8, 2021.

\bibitem[Fournier \& Guillin(2015)Fournier and Guillin]{fournier2015rate}
Fournier, N. and Guillin, A.
\newblock On the rate of convergence in wasserstein distance of the empirical
  measure.
\newblock \emph{Probability Theory and Related Fields}, 162\penalty0
  (3):\penalty0 707--738, 2015.

\bibitem[Galaz-Garcia et~al.(2022)Galaz-Garcia, Papamichalis, Turnbull,
  Lunagomez, and Airoldi]{galaz2022wrapped}
Galaz-Garcia, F., Papamichalis, M., Turnbull, K., Lunagomez, S., and Airoldi,
  E.
\newblock Wrapped distributions on homogeneous riemannian manifolds.
\newblock \emph{arXiv preprint arXiv:2204.09790}, 2022.

\bibitem[Ganea et~al.(2018{\natexlab{a}})Ganea, B{\'e}cigneul, and
  Hofmann]{ganea2018cone}
Ganea, O., B{\'e}cigneul, G., and Hofmann, T.
\newblock Hyperbolic entailment cones for learning hierarchical embeddings.
\newblock In \emph{International Conference on Machine Learning}, pp.\
  1646--1655. PMLR, 2018{\natexlab{a}}.

\bibitem[Ganea et~al.(2018{\natexlab{b}})Ganea, B{\'e}cigneul, and
  Hofmann]{ganea2018hyperbolic}
Ganea, O., B{\'e}cigneul, G., and Hofmann, T.
\newblock Hyperbolic neural networks.
\newblock \emph{Advances in neural information processing systems}, 31,
  2018{\natexlab{b}}.

\bibitem[Gelfand et~al.(1966)Gelfand, Graev, and
  Vilenkin]{gelfand1966generalized}
Gelfand, I.~M., Graev, M.~I., and Vilenkin, N.~I.
\newblock \emph{Generalized Functions-Volume 5. Integral Geometry and
  Representation Theory}.
\newblock Academic Press, 1966.

\bibitem[Ghadimi~Atigh et~al.(2021)Ghadimi~Atigh, Keller-Ressel, and
  Mettes]{ghadimi2021hyperbolic}
Ghadimi~Atigh, M., Keller-Ressel, M., and Mettes, P.
\newblock Hyperbolic busemann learning with ideal prototypes.
\newblock \emph{Advances in Neural Information Processing Systems},
  34:\penalty0 103--115, 2021.

\bibitem[Gupte et~al.(2011)Gupte, Shankar, Li, Muthukrishnan, and
  Iftode]{gupte2011finding}
Gupte, M., Shankar, P., Li, J., Muthukrishnan, S., and Iftode, L.
\newblock Finding hierarchy in directed online social networks.
\newblock In \emph{Proceedings of the 20th international conference on World
  wide web}, pp.\  557--566, 2011.

\bibitem[Hagberg et~al.(2008)Hagberg, Schult, and Swart]{networkx}
Hagberg, A.~A., Schult, D.~A., and Swart, P.~J.
\newblock Exploring network structure, dynamics, and function using networkx.
\newblock In Varoquaux, G., Vaught, T., and Millman, J. (eds.),
  \emph{Proceedings of the 7th Python in Science Conference}, pp.\  11 -- 15,
  Pasadena, CA USA, 2008.

\bibitem[Hamzaoui et~al.(2021)Hamzaoui, Chapel, Pham, and
  Lef{\`e}vre]{hamzaoui2021hyperbolic}
Hamzaoui, M., Chapel, L., Pham, M.-T., and Lef{\`e}vre, S.
\newblock Hyperbolic variational auto-encoder for remote sensing scene
  classification.
\newblock In \emph{ORASIS 2021}, 2021.

\bibitem[Heitz et~al.(2021)Heitz, Vanhoey, Chambon, and
  Belcour]{heitz2021sliced}
Heitz, E., Vanhoey, K., Chambon, T., and Belcour, L.
\newblock A sliced wasserstein loss for neural texture synthesis.
\newblock In \emph{Proceedings of the IEEE/CVF Conference on Computer Vision
  and Pattern Recognition}, pp.\  9412--9420, 2021.

\bibitem[Helgason(1959)]{helgason1959differential}
Helgason, S.
\newblock Differential operators on homogeneous spaces.
\newblock \emph{Acta mathematica}, 102\penalty0 (3-4):\penalty0 239--299, 1959.

\bibitem[Hoyos-Idrobo(2020)]{hoyos2020aligning}
Hoyos-Idrobo, A.
\newblock Aligning hyperbolic representations: an optimal transport-based
  approach.
\newblock \emph{arXiv preprint arXiv:2012.01089}, 2020.

\bibitem[Izumiya(2009)]{izumiya2009horospherical}
Izumiya, S.
\newblock Horospherical geometry in the hyperbolic space.
\newblock In \emph{Noncommutativity and Singularities: Proceedings of
  French--Japanese symposia held at IH{\'E}S in 2006}, volume~55, pp.\  31--50.
  Mathematical Society of Japan, 2009.

\bibitem[Khrulkov et~al.(2020)Khrulkov, Mirvakhabova, Ustinova, Oseledets, and
  Lempitsky]{khrulkov2020hyperbolic}
Khrulkov, V., Mirvakhabova, L., Ustinova, E., Oseledets, I., and Lempitsky, V.
\newblock Hyperbolic image embeddings.
\newblock In \emph{Proceedings of the IEEE/CVF Conference on Computer Vision
  and Pattern Recognition}, pp.\  6418--6428, 2020.

\bibitem[Kingma \& Ba(2014)Kingma and Ba]{kingma2014adam}
Kingma, D.~P. and Ba, J.
\newblock Adam: A method for stochastic optimization.
\newblock \emph{arXiv preprint arXiv:1412.6980}, 2014.

\bibitem[Kingma \& Welling(2013)Kingma and Welling]{kingma2013auto}
Kingma, D.~P. and Welling, M.
\newblock Auto-encoding variational bayes.
\newblock \emph{arXiv preprint arXiv:1312.6114}, 2013.

\bibitem[Kolouri et~al.(2018)Kolouri, Pope, Martin, and
  Rohde]{kolouri2018sliced}
Kolouri, S., Pope, P.~E., Martin, C.~E., and Rohde, G.~K.
\newblock Sliced wasserstein auto-encoders.
\newblock In \emph{International Conference on Learning Representations}, 2018.

\bibitem[Kolouri et~al.(2019)Kolouri, Nadjahi, Simsekli, Badeau, and
  Rohde]{kolouri2019generalized}
Kolouri, S., Nadjahi, K., Simsekli, U., Badeau, R., and Rohde, G.
\newblock Generalized sliced wasserstein distances.
\newblock \emph{Advances in neural information processing systems}, 32, 2019.

\bibitem[Krizhevsky(2009)]{krizhevsky2009learning}
Krizhevsky, A.
\newblock Learning multiple layers of features from tiny images, 2009.

\bibitem[Kusner et~al.(2015)Kusner, Sun, Kolkin, and
  Weinberger]{kusner2015word}
Kusner, M., Sun, Y., Kolkin, N., and Weinberger, K.
\newblock From word embeddings to document distances.
\newblock In \emph{International conference on machine learning}, pp.\
  957--966. PMLR, 2015.

\bibitem[Kyriakis et~al.(2021)Kyriakis, Fostiropoulos, and
  Bogdan]{kyriakis2021learning}
Kyriakis, P., Fostiropoulos, I., and Bogdan, P.
\newblock Learning hyperbolic representations of topological features.
\newblock \emph{arXiv preprint arXiv:2103.09273}, 2021.

\bibitem[LeCun \& Cortes(2010)LeCun and
  Cortes]{lecun-mnisthandwrittendigit-2010}
LeCun, Y. and Cortes, C.
\newblock {MNIST} handwritten digit database.
\newblock 2010.
\newblock URL \url{http://yann.lecun.com/exdb/mnist/}.

\bibitem[Lee(2006)]{lee2006riemannian}
Lee, J.~M.
\newblock \emph{Riemannian manifolds: an introduction to curvature}, volume
  176.
\newblock Springer Science \& Business Media, 2006.

\bibitem[Li et~al.(2022)Li, Meng, Yu, and Xu]{li2022hilbert}
Li, T., Meng, C., Yu, J., and Xu, H.
\newblock Hilbert curve projection distance for distribution comparison.
\newblock \emph{arXiv preprint arXiv:2205.15059}, 2022.

\bibitem[Lin et~al.(2020)Lin, Fan, Ho, Cuturi, and Jordan]{lin2020projection}
Lin, T., Fan, C., Ho, N., Cuturi, M., and Jordan, M.
\newblock Projection robust wasserstein distance and riemannian optimization.
\newblock \emph{Advances in neural information processing systems},
  33:\penalty0 9383--9397, 2020.

\bibitem[Lin et~al.(2021)Lin, Zheng, Chen, Cuturi, and
  Jordan]{lin2021projection}
Lin, T., Zheng, Z., Chen, E., Cuturi, M., and Jordan, M.~I.
\newblock On projection robust optimal transport: Sample complexity and model
  misspecification.
\newblock In \emph{International Conference on Artificial Intelligence and
  Statistics}, pp.\  262--270. PMLR, 2021.

\bibitem[Liu et~al.(2019)Liu, Nickel, and Kiela]{liu2019hyperbolic}
Liu, Q., Nickel, M., and Kiela, D.
\newblock Hyperbolic graph neural networks.
\newblock \emph{Advances in Neural Information Processing Systems}, 32, 2019.

\bibitem[Liu et~al.(2020)Liu, Chen, Pan, Ngo, Chua, and
  Jiang]{liu2020hyperbolic}
Liu, S., Chen, J., Pan, L., Ngo, C.-W., Chua, T.-S., and Jiang, Y.-G.
\newblock Hyperbolic visual embedding learning for zero-shot recognition.
\newblock In \emph{Proceedings of the IEEE/CVF conference on computer vision
  and pattern recognition}, pp.\  9273--9281, 2020.

\bibitem[Liutkus et~al.(2019)Liutkus, Simsekli, Majewski, Durmus, and
  St{\"o}ter]{liutkus2019sliced}
Liutkus, A., Simsekli, U., Majewski, S., Durmus, A., and St{\"o}ter, F.-R.
\newblock Sliced-wasserstein flows: Nonparametric generative modeling via
  optimal transport and diffusions.
\newblock In \emph{International Conference on Machine Learning}, pp.\
  4104--4113. PMLR, 2019.

\bibitem[Lou et~al.(2020)Lou, Lim, Katsman, Huang, Jiang, Lim, and
  De~Sa]{lou2020neural}
Lou, A., Lim, D., Katsman, I., Huang, L., Jiang, Q., Lim, S.~N., and De~Sa,
  C.~M.
\newblock Neural manifold ordinary differential equations.
\newblock \emph{Advances in Neural Information Processing Systems},
  33:\penalty0 17548--17558, 2020.

\bibitem[Mathieu et~al.(2019)Mathieu, Le~Lan, Maddison, Tomioka, and
  Teh]{mathieu2019continuous}
Mathieu, E., Le~Lan, C., Maddison, C.~J., Tomioka, R., and Teh, Y.~W.
\newblock Continuous hierarchical representations with poincar{\'e} variational
  auto-encoders.
\newblock \emph{Advances in neural information processing systems}, 32, 2019.

\bibitem[McCann(2001)]{mccann2001polar}
McCann, R.~J.
\newblock Polar factorization of maps on riemannian manifolds.
\newblock \emph{Geometric \& Functional Analysis GAFA}, 11\penalty0
  (3):\penalty0 589--608, 2001.

\bibitem[Mettes et~al.(2019)Mettes, van~der Pol, and
  Snoek]{mettes2019hyperspherical}
Mettes, P., van~der Pol, E., and Snoek, C.
\newblock Hyperspherical prototype networks.
\newblock \emph{Advances in neural information processing systems}, 32, 2019.

\bibitem[Nadjahi et~al.(2020)Nadjahi, Durmus, Chizat, Kolouri, Shahrampour, and
  Simsekli]{nadjahi2020statistical}
Nadjahi, K., Durmus, A., Chizat, L., Kolouri, S., Shahrampour, S., and
  Simsekli, U.
\newblock Statistical and topological properties of sliced probability
  divergences.
\newblock \emph{Advances in Neural Information Processing Systems},
  33:\penalty0 20802--20812, 2020.

\bibitem[Nagano et~al.(2019)Nagano, Yamaguchi, Fujita, and
  Koyama]{nagano2019wrapped}
Nagano, Y., Yamaguchi, S., Fujita, Y., and Koyama, M.
\newblock A wrapped normal distribution on hyperbolic space for gradient-based
  learning.
\newblock In \emph{International Conference on Machine Learning}, pp.\
  4693--4702. PMLR, 2019.

\bibitem[Nickel \& Kiela(2017)Nickel and Kiela]{nickel2017poincare}
Nickel, M. and Kiela, D.
\newblock Poincar{\'e} embeddings for learning hierarchical representations.
\newblock \emph{Advances in neural information processing systems}, 30, 2017.

\bibitem[Nickel \& Kiela(2018)Nickel and Kiela]{nickel2018learning}
Nickel, M. and Kiela, D.
\newblock Learning continuous hierarchies in the lorentz model of hyperbolic
  geometry.
\newblock In \emph{International Conference on Machine Learning}, pp.\
  3779--3788. PMLR, 2018.

\bibitem[Ovinnikov(2019)]{ovinnikov2019poincar}
Ovinnikov, I.
\newblock Poincar{\'e} wasserstein autoencoder.
\newblock \emph{arXiv preprint arXiv:1901.01427}, 2019.

\bibitem[Park et~al.(2021)Park, Cho, Chang, and Choi]{park2021unsupervised}
Park, J., Cho, J., Chang, H.~J., and Choi, J.~Y.
\newblock Unsupervised hyperbolic representation learning via message passing
  auto-encoders.
\newblock In \emph{Proceedings of the IEEE/CVF Conference on Computer Vision
  and Pattern Recognition}, pp.\  5516--5526, 2021.

\bibitem[Paty \& Cuturi(2019)Paty and Cuturi]{paty2019subspace}
Paty, F.-P. and Cuturi, M.
\newblock Subspace robust wasserstein distances.
\newblock In \emph{International conference on machine learning}, pp.\
  5072--5081. PMLR, 2019.

\bibitem[Peng et~al.(2021)Peng, Varanka, Mostafa, Shi, and
  Zhao]{peng2021hyperbolic}
Peng, W., Varanka, T., Mostafa, A., Shi, H., and Zhao, G.
\newblock Hyperbolic deep neural networks: A survey.
\newblock \emph{arXiv preprint arXiv:2101.04562}, 2021.

\bibitem[Pennec(2006)]{pennec2006intrinsic}
Pennec, X.
\newblock Intrinsic statistics on riemannian manifolds: Basic tools for
  geometric measurements.
\newblock \emph{Journal of Mathematical Imaging and Vision}, 25\penalty0
  (1):\penalty0 127--154, 2006.

\bibitem[Peyr{\'e} et~al.(2019)Peyr{\'e}, Cuturi,
  et~al.]{peyre2019computational}
Peyr{\'e}, G., Cuturi, M., et~al.
\newblock Computational optimal transport: With applications to data science.
\newblock \emph{Foundations and Trends{\textregistered} in Machine Learning},
  11\penalty0 (5-6):\penalty0 355--607, 2019.

\bibitem[Rabin et~al.(2011)Rabin, Peyr{\'e}, Delon, and
  Bernot]{rabin2011wasserstein}
Rabin, J., Peyr{\'e}, G., Delon, J., and Bernot, M.
\newblock Wasserstein barycenter and its application to texture mixing.
\newblock In \emph{International Conference on Scale Space and Variational
  Methods in Computer Vision}, pp.\  435--446. Springer, 2011.

\bibitem[Rakotomamonjy et~al.(2021)Rakotomamonjy, Alaya, Berar, and
  Gasso]{rakotomamonjy2021statistical}
Rakotomamonjy, A., Alaya, M.~Z., Berar, M., and Gasso, G.
\newblock Statistical and topological properties of gaussian smoothed sliced
  probability divergences.
\newblock \emph{arXiv preprint arXiv:2110.10524}, 2021.

\bibitem[Rubin(2002)]{rubin2002radon}
Rubin, B.
\newblock Radon, cosine and sine transforms on real hyperbolic space.
\newblock \emph{Advances in Mathematics}, 170\penalty0 (2):\penalty0 206--223,
  2002.

\bibitem[Rustamov \& Majumdar(2020)Rustamov and
  Majumdar]{rustamov2020intrinsic}
Rustamov, R.~M. and Majumdar, S.
\newblock Intrinsic sliced wasserstein distances for comparing collections of
  probability distributions on manifolds and graphs.
\newblock \emph{arXiv preprint arXiv:2010.15285}, 2020.

\bibitem[Said et~al.(2014)Said, Bombrun, and Berthoumieu]{said2014new}
Said, S., Bombrun, L., and Berthoumieu, Y.
\newblock New riemannian priors on the univariate normal model.
\newblock \emph{Entropy}, 16\penalty0 (7):\penalty0 4015--4031, 2014.

\bibitem[Sala et~al.(2018)Sala, De~Sa, Gu, and R{\'e}]{sala2018representation}
Sala, F., De~Sa, C., Gu, A., and R{\'e}, C.
\newblock Representation tradeoffs for hyperbolic embeddings.
\newblock In \emph{International conference on machine learning}, pp.\
  4460--4469. PMLR, 2018.

\bibitem[Sarkar(2011)]{sarkar2011low}
Sarkar, R.
\newblock Low distortion delaunay embedding of trees in hyperbolic plane.
\newblock In \emph{International Symposium on Graph Drawing}, pp.\  355--366.
  Springer, 2011.

\bibitem[Tifrea et~al.(2018)Tifrea, B{\'e}cigneul, and
  Ganea]{tifrea2018poincar}
Tifrea, A., B{\'e}cigneul, G., and Ganea, O.-E.
\newblock Poincar{\'e} glove: Hyperbolic word embeddings.
\newblock \emph{arXiv preprint arXiv:1810.06546}, 2018.

\bibitem[Tolstikhin et~al.(2017)Tolstikhin, Bousquet, Gelly, and
  Schoelkopf]{tolstikhin2017wasserstein}
Tolstikhin, I., Bousquet, O., Gelly, S., and Schoelkopf, B.
\newblock Wasserstein auto-encoders.
\newblock \emph{arXiv preprint arXiv:1711.01558}, 2017.

\bibitem[Villani(2003)]{villani2021topics}
Villani, C.
\newblock \emph{Topics in optimal transportation}, volume~58.
\newblock American Mathematical Soc., 2003.

\bibitem[Villani(2009)]{villani2009optimal}
Villani, C.
\newblock \emph{Optimal transport: old and new}, volume 338.
\newblock Springer, 2009.

\bibitem[Wilson \& Leimeister(2018)Wilson and Leimeister]{wilson2018gradient}
Wilson, B. and Leimeister, M.
\newblock Gradient descent in hyperbolic space.
\newblock \emph{arXiv preprint arXiv:1805.08207}, 2018.

\bibitem[Yu et~al.(2020)Yu, Visweswaran, and Batmanghelich]{yu2020semi}
Yu, K., Visweswaran, S., and Batmanghelich, K.
\newblock Semi-supervised hierarchical drug embedding in hyperbolic space.
\newblock \emph{Journal of chemical information and modeling}, 60\penalty0
  (12):\penalty0 5647--5657, 2020.

\end{thebibliography}
\bibliographystyle{icml2023}

%%%%%%%%%%%%%%%%%%%%%%%%%%%%%%%%%%%%%%%%%%%%%%%%%%%%%%%%%%%%%%%%%%%%%%%%%%%%%%%
%%%%%%%%%%%%%%%%%%%%%%%%%%%%%%%%%%%%%%%%%%%%%%%%%%%%%%%%%%%%%%%%%%%%%%%%%%%%%%%
% APPENDIX
%%%%%%%%%%%%%%%%%%%%%%%%%%%%%%%%%%%%%%%%%%%%%%%%%%%%%%%%%%%%%%%%%%%%%%%%%%%%%%%
%%%%%%%%%%%%%%%%%%%%%%%%%%%%%%%%%%%%%%%%%%%%%%%%%%%%%%%%%%%%%%%%%%%%%%%%%%%%%%%
% \newpage
% \appendix
% \onecolumn
% \section{You \emph{can} have an appendix here.}

\newpage
\appendix
\onecolumn

\section{Proofs} \label{appendix:proofs}

\subsection{Proof of Proposition \ref{prop:wasserstein_geodesics}} \label{proof:prop_wasserstein_geodesics}

Let $\gamma$ a geodesic on $\mathbb{L}^d$ passing through $x^0$ and with direction $v\in T_{x^0}\mathbb{L}^d\cap S^d$, \emph{i.e.} the geodesic is obtained as $\mathrm{span}(x^0,v)\cap\mathbb{L}^d$. Let $\mu,\nu$ probability measures on $\gamma$.

First, we need to show that for all $x,y\in\mathrm{span}(x^0,v)\cap\mathbb{L}^d$,
\begin{equation}
    d_\mathbb{L}(x,y) = |t^v(x)-t^v(y)|,
\end{equation}
\emph{i.e.} that $t^v$ is an isometry from $\mathrm{span}(x^0,v)\cap\mathbb{L}^d$ to $\mathbb{R}$.

As $x$ and $y$ belong to the geodesic $\gamma$, there exist $s,t\in\mathbb{R}$ such that 
\begin{equation}
    x = \gamma(s) = \cosh(s) x^0 + \sinh(s)v,
\end{equation}
and
\begin{equation}
    y = \gamma(t) = \cosh(t) x^0 + \sinh(t)v.
\end{equation}
Then, on one hand, we have
\begin{equation}
    \begin{aligned}
        d_\mathbb{L}(\gamma(s),\gamma(t)) &= \arccosh(-\langle \gamma(s),\gamma(t)\rangle_\mathbb{L}) \\
        &= \arccosh\big(-\langle \cosh(t) x^0 + \sinh(t) v, \cosh(s)x^0 + \sinh(s) v\rangle\big) \\
        &= \arccosh\big(\cosh(t)\cosh(s) - \sinh(t)\sinh(s)\big) \\
        &= \arccosh\big(\cosh(t-s)\big) \\
        &= |t-s|,
    \end{aligned}
\end{equation}
where we used that $\langle x^0,x^0\rangle_\mathbb{L}=-1$, $\langle x^0,v\rangle_\mathbb{L}=0$, $\langle v,v\rangle_\mathbb{L}=\langle v,v\rangle = 1$ and $\cosh(t)\cosh(s)-\sinh(t)\sinh(s)=\cosh(t-s)$.

On the other hand, we have
\begin{equation}
    \begin{aligned}
        |t^v(x)-t^v(y)| &= \big|\mathrm{sign}(\langle x,v\rangle) d_\mathbb{L}(x,x^0) - \mathrm{sign}(\langle y,v\rangle) d_\mathbb{L}(y,x^0) \big| \\
        &= \big| \mathrm{sign}(\langle x,v\rangle) d_\mathbb{L}(\gamma(s),\gamma(0)) - \mathrm{sign}(\langle y,v\rangle) d_\mathbb{L}(\gamma(t),\gamma(0))\big| \\
        &= \big| \mathrm{sign}(\langle x,v\rangle) |s| - \mathrm{sign}(\langle y,v\rangle) |t|\big| \\
        &= |t-s|,
    \end{aligned}
\end{equation}
where we use at the last line that $\mathrm{sign}(\langle x,v\rangle) = \mathrm{sign}(s)$ (resp. $\mathrm{sign}(\langle y,v\rangle)=\mathrm{sign}(t)$) and $s=\mathrm{sign}(s) |s|$ (resp. $t=\mathrm{sign}(t)|t|$) supposing that $v$ is oriented in the same sense of $\gamma$.

Therefore, we have
\begin{equation}
    |t^v(x)-t^v(y)| = d_\mathbb{L}(x,y).
\end{equation}

Now, we can show the equality for the Wasserstein distance:
\begin{equation}
    \begin{aligned}
        W_p^p(\mu, \nu) &= \inf_{\gamma \in \Pi(\mu,\nu)}\ \int_{\mathbb{L}^d\times \mathbb{L}^d} d_\mathbb{L}(x,y)^p\ \mathrm{d}\gamma(x,y) \\
        &= \inf_{\gamma\in\Pi(\mu,\nu)}\ \int_{\mathbb{L}^d\times \mathbb{L}^d} |t^v(x)-t^v(y)|^p \ \mathrm{d}\gamma(x,y) \\
        &= \inf_{\gamma\in\Pi(\mu,\nu)}\ \int_{\mathbb{R}\times \mathbb{R}} |x-y|^p\ \mathrm{d}(t^v\otimes t^v)_\#\gamma(x,y) \\
        &= \inf_{\Tilde{\gamma} \in \Pi(t^v_\#\mu, t^v_\#\nu)}\ \int_{\mathbb{R}\times \mathbb{R}} |x-y|^p \ \mathrm{d}\Tilde{\gamma}(x,y) \\
        &= W_p^p(t^v_\#\mu, t^v_\#\nu) = \int_0^1 |F_{t^v_\#\mu}^{-1}(u)-F_{t^v_\#\nu}^{-1}(u)|^p\ \mathrm{d}u,
    \end{aligned}
\end{equation}
where we apply \citep[Lemma 6]{paty2019subspace}.

\subsection{Geodesic Projection} \label{appendix:geod_proj}

\begin{proposition}[Geodesic projection] \label{prop:hsw_geodesic_proj} \leavevmode
    \begin{enumerate}
        \item Let $\mathcal{G}^v=\mathrm{span}(x^0,v)\cap \mathbb{L}^d$ where $v\in T_{x^0}\mathbb{L}^d\cap S^d$. Then, the geodesic projection $\Tilde{P}^v$ on $\mathcal{G}^v$ of $x\in\mathbb{L}^d$ is
        \begin{equation}
            \begin{aligned}
                \Tilde{P}^v(x) &= \frac{1}{\sqrt{\langle x,x^0\rangle_\mathbb{L}^2-\langle x, v\rangle_\mathbb{L}^2}} \big(-\langle x,x^0\rangle_\mathbb{L}x^0 + \langle x,v\rangle_\mathbb{L} v\big) \\
                &= \frac{P^{\mathrm{span}(x^0, v)}(x)}{\sqrt{-\langle P^{\mathrm{span}(x^0, v)}(x), P^{\mathrm{span}(x^0, v)}(x)\rangle_\mathbb{L}}},
            \end{aligned}
        \end{equation}
        where $P^{\mathrm{span}(x^0,v^0)}$ is the linear orthogonal projection on the subspace $\mathrm{span}(x^0,v)$.
        \item Let $\Tilde{v}\in S^{d-1}$ be an in ideal point. Then, the geodesic projection $\Tilde{P}^{\Tilde{v}}$ on the geodesic characterized by $\Tilde{v}$ of $x\in \mathbb{B}^d$ is
        \begin{equation}
            \Tilde{P}^{\Tilde{v}}(x) = s(x) \Tilde{v},
        \end{equation}
        where
        % \begin{equation}
        %     s(x) = \begin{cases}
        %         \frac{1+\|x\|_2^2}{2\langle x, \Tilde{v}\rangle} - \sqrt{\left(\frac{1+\|x\|_2^2}{2\langle x,\Tilde{v}\rangle}\right)^2 -1} \quad \text{if } \langle x, \Tilde{v}\rangle \ge 0 \\
        %         \frac{1+\|x\|_2^2}{2\langle x, \Tilde{v}\rangle} + \sqrt{\left(\frac{1+\|x\|_2^2}{2\langle x,\Tilde{v}\rangle}\right)^2 -1} \quad \text{if } \langle x, \Tilde{v}\rangle \le 0.
        %     \end{cases}
        % \end{equation}
        % \begin{equation}
        %     s(x) = \frac{1+\|x\|_2^2 - \sqrt{(1+\|x\|_2^2)^2 - 4 \langle x, \Tilde{v}\rangle^2}}{2 \langle x, \Tilde{v}\rangle}.
        % \end{equation}
        \begin{equation}
            s(x) = \left\{\begin{array}{ll} \frac{1+\|x\|_2^2 - \sqrt{(1+\|x\|_2^2)^2 - 4 \langle x, \Tilde{v}\rangle^2}}{2 \langle x, \Tilde{v}\rangle} & \mbox{ if } \langle x,\Tilde{v}\rangle \neq 0 \\
            0 & \mbox{ if } \langle x,\Tilde{v}\rangle = 0.
            \end{array}\right.
        \end{equation}
    \end{enumerate}
\end{proposition}

\begin{proof} \label{proof:geodesic_proj} \leavevmode
    \begin{enumerate}
        \item \textbf{Lorentz model.} Any point $y$ on the geodesic obtained by the intersection between $E=\mathrm{span}(x^0,v)$ and $\mathbb{L}^d$ can be written as
        \begin{equation}
            y = \cosh(t) x^0 + \sinh(t) v,
        \end{equation}
        where $t\in\mathbb{R}$. Moreover, as $\arccosh$ is an increasing function, we have
        \begin{equation}
            \begin{aligned}
                \Tilde{P}^v(x) &= \argmin_{y\in E\cap\mathbb{L}^d}\ d_\mathbb{L}(x,y) \\
                &= \argmin_{y\in E\cap \mathbb{L}^d}\ -\langle x,y\rangle_\mathbb{L}.
            \end{aligned}
        \end{equation}
        This problem is equivalent with solving
        \begin{equation}
            \argmin_{t\in\mathbb{R}}\ -\cosh(t)\langle x,x^0\rangle_\mathbb{L} - \sinh(t)\langle x,v\rangle_\mathbb{L}.
        \end{equation}
        Let $g(t)=-\cosh(t)\langle x,x^0\rangle_\mathbb{L} - \sinh(t)\langle x,v\rangle_\mathbb{L}$, then
        \begin{equation} \label{eq:hsw_coord_opt}
            g'(t) = 0 \iff \tanh(t) = - \frac{\langle x,v\rangle_\mathbb{L}}{\langle x,x^0\rangle_\mathbb{L}}.
        \end{equation}
        Finally, using that $1-\tanh^2(t)=\frac{1}{\cosh^2(t)}$ and $\cosh^2(t)-\sinh^2(t)=1$, and observing that necessarily, $\langle x,x^0\rangle_\mathbb{L} \le 0$, we obtain
        \begin{equation}
            \cosh(t) = \frac{1}{\sqrt{1-\left(-\frac{\langle x,v\rangle_\mathbb{L}}{\langle x,x^0\rangle_\mathbb{L}}\right)^2}} = \frac{-\langle x,x^0\rangle_\mathbb{L}}{\sqrt{\langle x,x^0\rangle_\mathbb{L}^2 - \langle x,v\rangle_\mathbb{L}^2}},
        \end{equation}
        and
        \begin{equation}
            \sinh(t) = \frac{-\frac{\langle x,v\rangle_\mathbb{L}}{\langle x,x^0\rangle_\mathbb{L}}}{\sqrt{1-\left(-\frac{\langle x,v\rangle_\mathbb{L}}{\langle x,x^0\rangle_\mathbb{L}}\right)^2}} = \frac{\langle x, v\rangle_\mathbb{L}}{\sqrt{\langle x,x^0\rangle_\mathbb{L}^2 - \langle x,v\rangle_\mathbb{L}^2}}.
        \end{equation}
        \item \textbf{Poincaré ball.} A geodesic passing through the origin on the Poincaré ball is of the form $\gamma(t) = tp$ for an ideal point $p\in S^{d-1}$ and $t\in ]-1,1[$. Using that $\arccosh$ is an increasing function, we find
        \begin{equation}
            \begin{aligned}
                \Tilde{P}^p(x) &= \argmin_{y \in \mathrm{span}(\gamma)}\ d_\mathbb{B}(x,y) \\
                &= \argmin_{tp}\ \arccosh\left(1 + 2 \frac{\|x-\gamma(t)\|_2^2}{(1-\|x\|_2^2)(1-\|\gamma(t)\|_2^2)}\right) \\
                &= \argmin_{tp}\ \log\big(\|x-\gamma(t)\|_2^2\big) - \log\big(1-\|x\|_2^2\big) - \log\big(1-\|\gamma(t)\|_2^2\big) \\
                &= \argmin_{tp}\ \log\big(\|x-tp\|_2^2\big) - \log\big(1-t^2\big).
            \end{aligned}
        \end{equation}
        Let $g(t) = \log\big(\|x-tp\|_2^2\big) - \log\big(1-t^2\big)$. Then,
        % \begin{equation}
        %     \begin{aligned}
        %         g'(t) = 0 \iff t^2 - \frac{1+\|x\|_2^2}{\langle x,p\rangle} t + 1 = 0.
        %     \end{aligned}
        % \end{equation}
        \begin{equation}
            g'(t) = 0 \iff  \left\{\begin{array}{ll}
          t^2 - \frac{1+\|x\|_2^2}{\langle x,p\rangle} t + 1 = 0 & \mbox{ if } \langle p,x\rangle \neq 0, \\
          t = 0 & \mbox{ if } \langle p,x\rangle = 0.
          \end{array}\right.
        \end{equation}
        Finally, if $\langle x, p\rangle \neq 0$, the solution is
        \begin{equation}
            t = \frac{1+\|x\|_2^2}{2\langle x,p\rangle} \pm \sqrt{\left(\frac{1+\|x\|_2^2}{2\langle x,p\rangle}\right)^2 - 1}.
        \end{equation}

        Now, let us suppose that $\langle x, p\rangle > 0$. Then, 
        \begin{equation}
            \begin{aligned}
                \frac{1+\|x\|_2^2}{2\langle x,p\rangle} + \sqrt{\left(\frac{1+\|x\|_2^2}{2\langle x,p\rangle}\right)^2 - 1} &\ge \frac{1+\|x\|_2^2}{2\langle x,p\rangle} \\
                & \ge 1,
            \end{aligned}
        \end{equation}
        because $\|x-p\|_2^2 \ge 0$ implies that $\frac{1+\|x\|_2^2}{2\langle x,p\rangle} \ge 1$, and therefor the solution is
        \begin{equation}
            t = \frac{1+\|x\|_2^2}{2\langle x,p\rangle} - \sqrt{\left(\frac{1+\|x\|_2^2}{2\langle x,p\rangle}\right)^2 - 1}.
        \end{equation}

        Similarly, if $\langle x, p\rangle < 0$, then
        \begin{equation}
            \begin{aligned}
                \frac{1+\|x\|_2^2}{2\langle x,p\rangle} - \sqrt{\left(\frac{1+\|x\|_2^2}{2\langle x,p\rangle}\right)^2 - 1} &\le  \frac{1+\|x\|_2^2}{2\langle x,p\rangle} \\
                &\le -1,
            \end{aligned}
        \end{equation}
        because $\|x+p\|_2^2 \ge 0$ implies $\frac{1+\|x\|_2^2}{2\langle x,p\rangle} \le -1$, and the solution is
        \begin{equation}
            \frac{1+\|x\|_2^2}{2\langle x,p\rangle} + \sqrt{\left(\frac{1+\|x\|_2^2}{2\langle x,p\rangle}\right)^2 - 1}.
        \end{equation}
        Thus,
        \begin{equation}
            \begin{aligned}
                s(x) &= \begin{cases}
                    \frac{1+\|x\|_2^2}{2\langle x, \Tilde{v}\rangle} - \sqrt{\left(\frac{1+\|x\|_2^2}{2\langle x,\Tilde{v}\rangle}\right)^2 -1} \quad \text{if } \langle x, \Tilde{v}\rangle > 0 \\
                    \frac{1+\|x\|_2^2}{2\langle x, \Tilde{v}\rangle} + \sqrt{\left(\frac{1+\|x\|_2^2}{2\langle x,\Tilde{v}\rangle}\right)^2 -1} \quad \text{if } \langle x, \Tilde{v}\rangle < 0.
                \end{cases} \\
                &= \frac{1+\|x\|_2^2}{2\langle x, \Tilde{v}\rangle} - \mathrm{sign}(\langle x,\Tilde{v}\rangle \sqrt{\left(\frac{1+\|x\|_2^2}{2\langle x,\Tilde{v}\rangle}\right)^2 -1} \\
                &= \frac{1+\|x\|_2^2}{2\langle x, \Tilde{v}\rangle}  - \frac{\mathrm{sign}(\langle x,\Tilde{v}\rangle}{2\mathrm{sign}(\langle x,\Tilde{v}\rangle\langle x, \Tilde{v}\rangle} \sqrt{(1+\|x\|_2^2)^2 - 4\langle x,\Tilde{v}\rangle^2} \\
                &= \frac{1+\|x\|_2^2 - \sqrt{(1+\|x\|_2^2)^2 - 4 \langle x, \Tilde{v}\rangle^2}}{2 \langle x, \Tilde{v}\rangle}.
            \end{aligned}
        \end{equation}
    \end{enumerate}
\end{proof}

We observe that the projection on the geodesic in the Lorentz model can be done by first projecting on the subspace $\mathrm{span}(x^0,v)$ and then by projecting on the hyperboloid by normalizing. This is %strictly 
analogous to the spherical case studied in \citep{bonet2022spherical}, %where the geodesic projection on a geodesic, which is a great circle, can be obtained by first projecting on the plane intersecting the sphere and then by normalizing. 
% The differences in this case are that, 
the differences being that, in the hyperbolic case, we are on the Minkowski space and that the geodesics are not periodic, contrary to the sphere. Moreover, we only integrate \emph{w.r.t.} geodesics passing through the origin when \citet{bonet2022spherical} integrate over all possible geodesics, as the sphere does not have a natural origin.

\subsection{Proof of Proposition \ref{prop:hsw_coord_geod_proj}} \label{proof:hsw_coord_geod_proj}

\begin{enumerate}
    \item \textbf{Lorentz model.} The coordinate on the geodesic can be obtained as 
    \begin{equation}
        P^v(x) = \argmin_{t\in\mathbb{R}}\ d_\mathbb{L}\big(\exp_{x^0}(tv), x\big).
    \end{equation}
    Hence, by using \eqref{eq:hsw_coord_opt}, we obtain that the optimal $t$ satisfies
    \begin{equation}
        \tanh(t) = - \frac{\langle x, v\rangle_\mathbb{L}}{\langle x,x^0\rangle_\mathbb{L}} \iff t = \arctanh\left(-\frac{\langle x, v\rangle_\mathbb{L}}{\langle x, x^0\rangle_\mathbb{L}}\right).
    \end{equation}
    \item \textbf{Poincaré ball.} As a geodesic is of the form $\gamma(t)=\tanh\left(\frac{t}{2}\right) p$ for all $t\in\mathbb{R}$, we deduce from \Cref{prop:hsw_geodesic_proj} that
    \begin{equation}
        s(x) = \tanh\left(\frac{t}{2}\right) \iff t = 2 \arctanh\big(s(x)\big).
    \end{equation}
\end{enumerate}

\subsection{Proof of Proposition \ref{prop:busemann_closed_forms}} \label{proof:busemann_closed_forms}

\begin{enumerate}
    \item \textbf{Lorentz model.}
    
        The geodesic in direction $v$ can be characterized by
        \begin{equation}
            \forall t\in \mathbb{R},\ \gamma_v(t) = \cosh(t)x^0 + \sinh(t) v.
        \end{equation}
        Hence, we have
        \begin{equation}
            \begin{aligned}
                \forall x\in \mathbb{L}^d,\ d_{\mathbb{L}}(\gamma_v(t), x) &= \arccosh(-\cosh(t)\langle x,x^0\rangle_\mathbb{L} - \sinh(t) \langle x, v\rangle_\mathbb{L} ) \\
                &= \arccosh\left(-\frac{e^t + e^{-t}}{2} \langle x,x^0\rangle_\mathbb{L} - \frac{e^t - e^{-t}}{2}\langle x,v\rangle_\mathbb{L}\right) \\
                &= \arccosh\left(\frac{e^t}{2}\big((-1-e^{-2t})\langle x, x^0\rangle_\mathbb{L} + (-1+e^{-2t})\langle x,v\rangle_\mathbb{L}\big)\right) \\
                &= \arccosh\big(x(t)\big).
            \end{aligned}
        \end{equation}
        Then, on one hand, we have $x(t) \underset{t\to\infty}{\to} \pm \infty$, and using that $\arccosh(x)=\log\big(x+\sqrt{x^2-1}\big)$, we have
        \begin{equation}
            \begin{aligned}
                d_{\mathbb{L}}(\gamma_v(y),x) - t &= \log\left(\big(x(t) + \sqrt{x(t)^2-1}\big) e^{-t}\right) \\
                &= \log\left(e^{-t} x(t) + e^{-t}x(t) \sqrt{1-\frac{1}{x(t)^2}}\right) \\
                &\underset{\infty}{=} \log\left(e^{-t}x(t) + e^{-t}x(t) \left(1-\frac{1}{2x(t)^2} + o\left(\frac{1}{x(t)^2}\right)\right)\right).
            \end{aligned}
        \end{equation}
        Moreover,
        \begin{equation}
            e^{-t}x(t) = \frac12 (-1-e^{-2t})\langle x,x^0\rangle_\mathbb{L} + \frac12(-1+e^{-2t})\langle x,v\rangle_\mathbb{L} \underset{t\to\infty}{\to} -\frac12 \langle x,  x^0 + v\rangle_\mathbb{L}.
        \end{equation}
        Hence,
        \begin{equation}
            B^v(x) = \log(-\langle x, x^0+v\rangle_\mathbb{L}).
        \end{equation}
        
    \item \textbf{Poincaré ball.}
    
    Note that this proof can be found \emph{e.g.} in the Appendix of \citep{ghadimi2021hyperbolic}. We report it for the sake of completeness.
    
        Let $p\in S^{d-1}$, then the geodesic from $0$ to $p$ is of the form $\gamma_p(t) = \exp_0(tp) = \tanh(\frac{t}{2})p$.
        Moreover, recall that $\arccosh(x) = \log(x+\sqrt{x^2 -1})$ and 
        \begin{equation}
            d_{\mathbb{B}}(\gamma_p(t),x) = \arccosh\left(1+2\frac{\|\tanh(\frac{t}{2})p-x\|_2^2}{(1-\tanh^2(\frac{t}{2}))(1-\|x\|_2^2)}\right) = \arccosh(1+x(t)),
        \end{equation}
        where
        \begin{equation}
            x(t) = 2\frac{\|\tanh(\frac{t}{2})p-x\|_2^2}{(1-\tanh^2(\frac{t}{2}))(1-\|x\|_2^2)}.
        \end{equation}
        Now, on one hand, we have
        \begin{equation}
            \begin{aligned}
                B^p(x) &= \lim_{t\to\infty}\ (d_\mathbb{B}(\gamma_p(t),x)-t) \\
                &= \lim_{t\to \infty}\ \log\big(1+x(t)+\sqrt{x(t)^2+2x(t)}\big)-t \\
                &= \lim_{t\to \infty}\ \log\big(e^{-t}(1+x(t)+\sqrt{x(t)^2 + 2x(t)})\big).
            \end{aligned}
        \end{equation}
        On the other hand, using that $\tanh(\frac{t}{2}) = \frac{e^t-1}{e^t+1}$,
        \begin{equation}
            \begin{aligned}
                e^{-t}x(t) &= 2e^{-t} \frac{\|\frac{e^t-1}{e^t+1}p-x\|_2^2}{(1-(\frac{e^t-1}{e^t+1})^2)(1-\|x\|_2^2)} \\
                &= 2e^{-t} \frac{\|e^t p - p - e^t x - x\|_2^2}{4e^t (1-\|x\|_2^2)} \\
                &= \frac12 \frac{\|p-e^{-t}p-x-e^{-t}x\|_2^2}{1-\|x\|_2^2} \\
                &\underset{t\to\infty}{\to}\frac12 \frac{\|p-x\|_2^2}{1-\|x\|_2^2}.
            \end{aligned}
        \end{equation}
        Hence,
        \begin{equation}
            \begin{aligned}
                B^p(x) &= \lim_{t\to\infty}\ \log\left( e^{-t} + e^{-t}x(t) + e^{-t}x(t)\sqrt{1+\frac{2}{x(t)}}\right) = \log\left(\frac{\|p-x\|_2^2}{1-\|x\|_2^2}\right),
            \end{aligned}
        \end{equation}
        using that $\sqrt{1+\frac{2}{x(t)}} = 1 + \frac{1}{x(t)} + o(\frac{1}{x(t)})$ and $\frac{1}{x(t)}\to_{t\to\infty} 0$.
\end{enumerate}

\subsection{Horospherical Projections} \label{appendix:busemann_closed_forms}

\begin{proposition}[Horospherical projection] \label{prop:horospherical_projection} \leavevmode
    \vspace{-1em}
    \begin{enumerate}
        \item Let $v\in T_{x^0}\mathbb{L}^d\cap S^d$ be a direction and $\mathcal{G}=\mathrm{span}(x^0,v)\cap \mathbb{L}^d$ the corresponding geodesic passing through $x^0$. Then, for any $x\in\mathbb{L}^d$, the projection on $\mathcal{G}$ along the horosphere is given by
        % \begin{equation}
        %     \Tilde{P}^v(x) = -\frac{1}{2\langle x, x^0 + v \rangle_\mathbb{L}} \big( (1+\langle x,x^0+v\rangle_\mathbb{L}^2)x^0 + (-\langle x,x^0 +v\rangle_\mathbb{L}^2)v\big).
        % \end{equation}
        \begin{equation}
            \Tilde{B}^v(x) = \frac{1+u^2}{1-u^2} x^0 + \frac{2u}{1-u^2} v,
        \end{equation}
        where $u = \frac{1+\langle x, x^0+v\rangle_\mathbb{L}}{1-\langle x, x^0+v\rangle_\mathbb{L}}$.
        \item Let $\Tilde{v}\in S^{d-1}$ be an ideal point. Then, for all $x\in \mathbb{B}^d$,
        \begin{equation}
            \Tilde{B}^{\Tilde{v}}(x) = \left(\frac{1-\|x\|_2^2-\|\Tilde{v}-x\|_2^2}{1-\|x\|_2^2+\|\Tilde{v}-x\|_2^2}\right)\Tilde{v}.
        \end{equation}
    \end{enumerate}
\end{proposition}

\begin{proof}
    \begin{enumerate}
        \item \textbf{Lorentz model.}
        
        First, a point on the geodesic $\gamma_v$ is of the form
        \begin{equation}
            y(t) = \cosh(t) x^0 + \sinh(t) v,
        \end{equation}
        with $t\in\mathbb{R}$.
        
        The projection along the horosphere amounts at following the level sets of the Busemann function $B^v$. And we have
        \begin{equation}
            \begin{aligned}
                B^v(x) = B^v(y(t)) &\iff \log(-\langle x,x^0+v\rangle_\mathbb{L}) = \log(-\langle \cosh(t)x^0 + \sinh(t) v, x^0+v\rangle_\mathbb{L}) \\
                &\iff \log (-\langle x,x^0+v\rangle_\mathbb{L}) = \log(-\cosh(t) \|x^0\|_\mathbb{L}^2 - \sinh(t)\|v\|_\mathbb{L}^2 ) \\
                &\iff \log(-\langle x,x^0+v\rangle_\mathbb{L} = \log (\cosh(t)-\sinh(t)) \\
                &\iff \langle x, x^0+v\rangle_\mathbb{L} = \sinh(t)-\cosh(t).
            \end{aligned}
        \end{equation}
        By noticing that $\cosh(t) = \frac{1+\tanh^2(\frac{t}{2})}{1-\tanh^2(\frac{t}{2})}$ and $\sinh(t) = \frac{2\tanh(\frac{t}{2})}{1-\tanh^2(\frac{t}{2})}$, let $u=\tanh(\frac{t}{2})$, then we have
        \begin{equation}
            \begin{aligned}
                B^v(x) = B^v(y(t)) &\iff \langle x,x^0+v\rangle_\mathbb{L} = \frac{2u}{1-u^2} - \frac{1+u^2}{1-u^2} = \frac{-(u-1)^2}{(1-u)(1+u)} = \frac{u-1}{u+1} \\
                &\iff u = \frac{1+\langle x,x^0+v\rangle_\mathbb{L}}{1-\langle x,x^0+v\rangle_\mathbb{L}}.
            \end{aligned}
        \end{equation}
        
        We can further continue the computation and obtain, by denoting $c=\langle x,x^0+v\rangle_\mathbb{L}$,
        \begin{equation} \label{eq:proj_horo_lorentz}
            \begin{aligned}
                \Tilde{B}^v(x) &= \frac{1+u^2}{1-u^2} x^0 + \frac{2u}{1-u^2} v \\
                &= \frac{1+\left(\frac{1+c}{1-c}\right)^2}{1-\left(\frac{1+c}{1-c}\right)^2} x^0 + 2 \frac{\left(\frac{1+c}{1-c}\right)}{1-\left(\frac{1+c}{1-c}\right)^2} v \\
                &= \frac{(1-c)^2 + (1+c)^2}{(1-c)^2 - (1+c)^2} x^0 + 2 \frac{(1+c)(1-c)}{(1-c)^2 - (1+c)^2} v \\
                &= - \frac{1+c^2}{2c} x^0 - \frac{1-c^2}{2c} v \\
                &= -\frac{1}{2\langle x, x^0 + v \rangle_\mathbb{L}} \big( (1+\langle x,x^0+v\rangle_\mathbb{L}^2)x^0 + (1-\langle x,x^0 +v\rangle_\mathbb{L}^2)v\big).
            \end{aligned}
        \end{equation}
        
        \item \textbf{Poincaré ball.}
        
        Let $p\in S^{d-1}$. First, we notice that points on the geodesic generated by $p$ and passing through 0 are of the form $x(\lambda)=\lambda p$ where $\lambda\in]-1,1[$.
        
        Moreover, there is a unique horosphere $S(p,x)$ passing through $x$ and starting from $p$. The points on this horosphere are of the form
        \begin{equation}
            \begin{aligned}
                y(\theta) &= \left(\frac{p+x(\lambda^*)}{2}\right) + \left\|\frac{p-x(\lambda^*)}{2}\right\|_2 \left(\cos(\theta) p + \sin(\theta) \frac{x-\langle x,p\rangle p}{\|x-\langle x,p\rangle p\|_2}\right) \\
                &= \frac{1+\lambda^*}{2}p + \frac{1-\lambda^2}{2} \left(\cos(\theta) p + \sin(\theta) \frac{x-\langle x,p\rangle p}{\|x-\langle x,p\rangle p\|_2}\right),
            \end{aligned}
        \end{equation}
        where $\lambda^*$ characterizes the intersection between the geodesic and the horosphere.
        
        Since the horosphere are the level sets of the Busemann function, we have $B^p(x)=B^p(\lambda^* p)$. Thus, we have
        \begin{equation}
            \begin{aligned}
                B^p(x) = B^p(\lambda^* p) &\iff \log\left(\frac{\|p-x\|_2^2}{1-\|x\|_2^2}\right) = \log\left(\frac{\|p-\lambda^* p\|_2^2}{1-\|\lambda^*p\|_2^2}\right) \\
                &\iff \frac{\|p-x\|_2^2}{1-\|x\|_2^2} = \frac{(1-\lambda^*)^2}{1-(\lambda^*)^2} \\
                &\iff \frac{\|p-x\|_2^2}{1-\|x\|_2^2} = \frac{1-\lambda^*}{1+\lambda^*} \\
                &\iff \lambda^* \left(\frac{\|p-x\|_2^2}{1-\|x\|_2^2} + 1\right) = 1 - \frac{\|p-x\|_2^2}{1-\|x\|_2^2} \\
                &\iff \lambda^* = \frac{1-\|x\|_2^2-\|p-x\|_2^2}{1-\|x\|_2^2+\|p-x\|_2^2}.
            \end{aligned}
        \end{equation}
    \end{enumerate}
\end{proof}

\subsection{Proof of Proposition \ref{prop:equality_hhsw}} \label{proof:equality_hhsw}

First, we show some Lemma.

\begin{lemma}[Commutation of projections.] \label{lemma:commute_projs}
    Let $v\in\mathrm{span}(x^0)^\bot \cap S^d$ of the form $v=(0,\Tilde{v})$ where $\Tilde{v}\in S^{d-1}$.
    Then, for all $x\in\mathbb{B}^d$, $y\in \mathbb{L}^d$
    \begin{align}
        P_{\mathbb{B}\to\mathbb{L}}\big(\Tilde{B}^{\Tilde{v}}(x)\big) = \Tilde{B}^v\big(P_{\mathbb{B}\to\mathbb{L}}(x)\big), \label{eq:busemann_b_to_l} \\
        \Tilde{B}^{\Tilde{v}}(P_{\mathbb{L}\to \mathbb{B}}(y)) = P_{\mathbb{L}\to\mathbb{B}}(\Tilde{B}^{v}(y)) \label{eq:busemann_l_to_b} \\ 
        P_{\mathbb{B}\to\mathbb{L}}\big(\Tilde{P}^{\Tilde{v}}(x)\big) = \Tilde{P}^v\big(P_{\mathbb{B}\to\mathbb{L}}(x)\big), \label{eq:geodesic_b_to_l} \\
        \Tilde{P}^{\Tilde{v}}(P_{\mathbb{L}\to \mathbb{B}}(y)) = P_{\mathbb{L}\to\mathbb{B}}(\Tilde{P}^{v}(y)). \label{eq:geodesic_l_to_b}
    \end{align}
\end{lemma}

\begin{proof}
    We first show \eqref{eq:busemann_b_to_l}. Let's recall the formula of the different projections.
    
    On one hand, 
    \begin{equation}
        \forall x\in\mathbb{B}^d,\ \Tilde{B}^{\Tilde{v}}(x) = \left(\frac{1-\|x\|_2^2-\|\Tilde{v}-x\|_2^2}{1-\|x\|_2^2+\|\Tilde{v}-x\|_2^2}\right)\Tilde{v},
    \end{equation}
    \begin{equation}
        \forall x\in \mathbb{L}^d,\ \Tilde{B}^v(x) = -\frac{1}{2\langle x, x^0 + v \rangle_\mathbb{L}} \big( (1+\langle x,x^0+v\rangle_\mathbb{L}^2)x^0 + (1-\langle x,x^0 +v\rangle_\mathbb{L}^2)v\big),
    \end{equation}
    and
    \begin{equation}
        \forall x\in\mathbb{B}^d,\ P_{\mathbb{B}\to\mathbb{L}}(x) = \frac{1}{1-\|x\|_2^2}(1+\|x\|_2^2, 2x_1,\dots, 2x_d).
    \end{equation}
    
    Let $x\in\mathbb{B}^d$.
    First, let's compute $P_{\mathbb{B}\to\mathbb{L}}\big(\Tilde{B}^{\Tilde{v}}(x)\big)$. We note that $\|\Tilde{v}\|_2^2=1$ and therefore
    \begin{equation}
        \|\Tilde{B}^{\Tilde{v}}(v)\|_2^2 = \left(\frac{1-\|x\|_2^2-\|\Tilde{v}-x\|_2^2}{1-\|x\|_2^2+\|\Tilde{v}-x\|_2^2}\right)^2.
    \end{equation}
    Then,
    \begin{equation}
        \begin{aligned}
            P_{\mathbb{B}\to\mathbb{L}}\big(\Tilde{B}^{\Tilde{v}}(x)\big) &= \frac{1}{1-\left(\frac{1-\|x\|_2^2-\|\Tilde{v}-x\|_2^2}{1-\|x\|_2^2+\|\Tilde{v}-x\|_2^2}\right)^2} \left(1+\left(\frac{1-\|x\|_2^2-\|\Tilde{v}-x\|_2^2}{1-\|x\|_2^2+\|\Tilde{v}-x\|_2^2}\right)^2, 2 \left(\frac{1-\|x\|_2^2-\|\Tilde{v}-x\|_2^2}{1-\|x\|_2^2+\|\Tilde{v}-x\|_2^2}\right) \Tilde{v}\right) \\
            &= \frac{1}{1-\left(\frac{1-\|x\|_2^2-\|\Tilde{v}-x\|_2^2}{1-\|x\|_2^2+\|\Tilde{v}-x\|_2^2}\right)^2} \left(\left(1+\left(\frac{1-\|x\|_2^2-\|\Tilde{v}-x\|_2^2}{1-\|x\|_2^2+\|\Tilde{v}-x\|_2^2}\right)^2\right)x^0 + 2 \left(\frac{1-\|x\|_2^2-\|\Tilde{v}-x\|_2^2}{1-\|x\|_2^2+\|\Tilde{v}-x\|_2^2}\right) v\right) \\
            &= \frac{\big(1-\|x\|_2^2+\|\Tilde{v}-x\|_2^2\big)^2}{4\|\Tilde{v}-x\|_2^2 (1-\|x\|_2^2)} \left(\frac{2(1-\|x\|_2^2)^2 + 2\|\Tilde{v}-x\|_2^4}{\big(1-\|x\|_2^2+\|\Tilde{v}-x\|_2^2\big)^2} x^0 + 2 \left(\frac{1-\|x\|_2^2-\|\Tilde{v}-x\|_2^2}{1-\|x\|_2^2+\|\Tilde{v}-x\|_2^2}\right) v\right) \\
            &= \frac{1}{2\|\Tilde{v}-x\|_2^2 (1-\|x\|_2^2)} \left( \big((1-\|x\|_2^2)^2 + \|\Tilde{v}-x\|_2^4\big)x^0 + (1-\|x\|_2^2-\|\Tilde{v}-x\|_2^2)(1-\|x\|_2^2+\|\Tilde{v}-x\|_2^2) v \right) \\
            &= \frac{1}{2\|\Tilde{v}-x\|_2^2 (1-\|x\|_2^2)} \left((1-\|x\|_2^2)^2 + \|\Tilde{v}-x\|_2^4\big)x^0 + \big((1-\|x\|_2^2)^2 - \|\Tilde{v}-x\|_2^4\big) v\right).
        \end{aligned}
    \end{equation}
    
    Now, let's compute $\Tilde{B}^v\big(P_{\mathbb{B}\to\mathbb{L}}(x)\big)$. First, let's remark that for all $y\in\mathbb{L}^d$, $\langle y,x^0 + v\rangle_\mathbb{L} = -y_0 + \langle y_{1:d}, \Tilde{v}\rangle$. Therefore, for all $x\in\mathbb{B}^d$,
    \begin{equation}
        \begin{aligned}
            \langle P_{\mathbb{B}\to\mathbb{L}}(x), x^0 + v\rangle_\mathbb{L} &= \langle \frac{1}{1-\|x\|_2^2}(1+\|x\|_2^2, 2x_1,\dots, 2x_d), x^0+v\rangle_\mathbb{L} \\
            &= \frac{1}{1-\|x\|_2^2} \left(-1-\|x\|_2^2 + 2 \langle x, \Tilde{v}\rangle\right) \\
            &= -\frac{1}{1-\|x\|_2^2} \|x-\Tilde{v}\|_2^2.
        \end{aligned}
    \end{equation}
    Moreover,
    \begin{equation}
        \begin{aligned}
            \langle P_{\mathbb{B}\to\mathbb{L}}(x), x^0 + v\rangle_\mathbb{L}^2 &= \frac{1}{(1-\|x\|_2^2)^2} \|\Tilde{v}-x\|_2^4.
        \end{aligned}
    \end{equation}
    
    Therefore, we have
    \begin{equation}
        \begin{aligned}
            \Tilde{B}^v\big(P_{\mathbb{B}\to\mathbb{L}}(x)\big) &= \Tilde{B}^v\left(\frac{1}{1-\|x\|_2^2}(1+\|x\|_2^2, 2x_1,\dots, 2x_d)\right) \\
            &= -\frac{1-\|x\|_2^2}{2 \left(-1-\|x\|_2^2 + 2 \langle x, \Tilde{v}\rangle\right)} \left( \big(1+\langle P_{\mathbb{B}\to\mathbb{L}}(x), x^0 + v\rangle_\mathbb{L}^2\big)x^0 + \big(1-\langle P_{\mathbb{B}\to\mathbb{L}}(x), x^0 + v\rangle_\mathbb{L}^2\big) v\right)\\
            &= \frac{1-\|x\|_2^2}{2 \|x-\Tilde{v}\|_2^2} \left( \frac{(1-\|x\|_2^2)^2 + \|\Tilde{v}-x\|^4}{(1-\|x\|_2^2)^2} x^0 + \frac{(1-\|x\|_2^2)^2 - \|\Tilde{v}-x\|^4}{(1-\|x\|_2^2)^2} v \right) \\
            &= \frac{1}{2\|x-\Tilde{v}\|_2^2 (1-\|x\|_2^2)} \left( (1-\|x\|_2^2)^2 + \|\Tilde{v}-x\|_2^4\big)x^0 + \big((1-\|x\|_2^2)^2 - \|\Tilde{v}-x\|_2^4\big) v\right) \\
            &= P_{\mathbb{B}\to\mathbb{L}}\big(\Tilde{B}^{\Tilde{v}}(x)\big).
        \end{aligned}
    \end{equation}

        For \eqref{eq:busemann_l_to_b}, we use that $P_{\mathbb{B}\to\mathbb{L}}$ and $P_{\mathbb{L}\to\mathbb{B}}$ are inverse from each other. Hence, for all $x\in \mathbb{B}^d$, there exists $y\in\mathbb{L}^d$ such that $x=P_{\mathbb{L}\to\mathbb{B}}(y) \iff y = P_{\mathbb{B}\to\mathbb{L}}(x)$, and we obtain the second equality by plugging it into \eqref{eq:busemann_b_to_l}.

    Now, let's show \eqref{eq:geodesic_b_to_l}. The proof relies on the observation that $\{\exp_{x^0}(tv),\ t\in\mathbb{R}\}=P_{\mathbb{B}\to\mathbb{L}}\left(\{\exp_0(t\Tilde{v}),\ t\in\mathbb{R}\}\right)$ (\emph{i.e.} the images by $P_{\mathbb{B}\to\mathbb{L}}$ of geodesics in the Poincaré ball are geodesics in the Lorentz model). Thus,
    \begin{equation}
        \begin{aligned}
            \Tilde{P}^v(P_{\mathbb{B}\to\mathbb{L}}(x)) &= \argmin_{z\in \{\exp_{x^0}(tv),\ t\in\mathbb{R}\}}\ d_{\mathbb{L}}(P_{\mathbb{B}\to\mathbb{L}}(x), z) \\
            &= P_{\mathbb{B}\to\mathbb{L}}\big(\argmin_{z\in\{\exp_0(t\Tilde{v}),\ t\in\mathbb{R}\}}\ d_\mathbb{B}(P_{\mathbb{L}\to\mathbb{L}}(x), P_{\mathbb{B}\to\mathbb{L}}(z))\big) \\
            &= P_{\mathbb{B}\to\mathbb{L}}\big(\argmin_{z\in\{\exp_0(t\Tilde{v}),\ t\in\mathbb{R}\}}\ d_\mathbb{B}(x,z)\big) \\
            &= P_{\mathbb{B}\to\mathbb{L}}\big(\Tilde{P}^v(x)\big).
        \end{aligned}
    \end{equation}
    Similarly, we obtain \eqref{eq:geodesic_l_to_b}.
\end{proof}

% \subsection{Connection between Busemann coordinates and Projection}

% Here, we make the connection between the busemann coordinates and the coordinates derived using the projections along horospheres. The following proposition shows that they are equivalent. This reinforces the analogy between the euclidean SW using the coordinate viewpoint and the HHSW.

\begin{lemma} \label{lemma:eq_busemann_top}
    Let $v=(0,\Tilde{v})\in \mathrm{span}(x^0)^\top$. For all $x\in\mathbb{L}^d$, $y\in \mathbb{B}^d$,
    \begin{align}
        B^v(x) = -t^v\big(\Tilde{B}^v(x)\big), \\
        B^{\Tilde{v}}(y) = - t^{\Tilde{v}}(\Tilde{B}^{\Tilde{v}}(y)).
    \end{align}
\end{lemma}

\begin{proof}
    First, let us show that 
    \begin{equation}
        d_\mathbb{L}(\Tilde{B}^v(x), x^0) = |B^v(x)|.
    \end{equation}
    
    By recalling that $B^v\big(\Tilde{P}^v(x)\big) = B^v(x) = \log\big(-\langle x,x^0+v\rangle_\mathbb{L}\big)$ and that by \eqref{eq:proj_horo_lorentz},
    \begin{equation}
        \Tilde{B}^v(x) = -\frac{1}{2\langle x,x^0+v\rangle_\mathbb{L}} \left((1+\langle x,x^0+v\rangle_\mathbb{L}^2)x^0 - \langle x, x^0+v\rangle_\mathbb{L}^2 v\right).
    \end{equation}
    Now, by remarking that $\langle x,x^0+v\rangle_\mathbb{L}\le 0$, then we have, 
    \begin{equation}
        \begin{aligned}
            d_\mathbb{L}\big(\Tilde{B}^v(x),x^0\big) &= \arccosh(-\langle \Tilde{B}^v(x), x^0 \rangle_\mathbb{L}) \\
            &= \arccosh\left(\frac{1}{2\langle x,x^0+v\rangle_\mathbb{L}}(1+\langle x,x^0+v\rangle_\mathbb{L}^2)\langle x^0,x^0\rangle_\mathbb{L}\right) \\
            &= \arccosh\left(-\frac{1}{2\langle x,x^0+v\rangle_\mathbb{L}}(1+\langle x,x^0+v\rangle_\mathbb{L}^2)\right) \\
            &= \log\left(\frac{1+\langle x,x^0+v\rangle_\mathbb{L}^2}{-2\langle x,x^0+v\rangle_\mathbb{L}} + \sqrt{\frac{(1+\langle x,x^0+v\rangle_\mathbb{L})^2}{4\langle x,x^0+v\rangle_\mathbb{L}^2} -1}\right) \\
            &= \log\left(\frac{1+\langle x,x^0 + v\rangle_\mathbb{L}^2 + \sqrt{\big(\langle x,x^0 + v\rangle_\mathbb{L}^2-1\big)^2}}{-2\langle x, x^0+v\rangle_\mathbb{L}}\right) \\
            % &= \log\left(-\frac{1+\langle x,x^0 + v\rangle_\mathbb{L}^2 - \big(1-\langle x,x^0 + v\rangle_\mathbb{L}^2\big)}{2\langle x, x^0+v\rangle_\mathbb{L}}\right) \\
            % &= \log(-\langle x,x^0+v\rangle_\mathbb{L}) = B^v(x).
        \end{aligned}
    \end{equation}
    If $\langle x,x^0+v\rangle_\mathbb{L}^2 \ge 1$, then
    \begin{equation}
        \begin{aligned}
            d_\mathbb{L}\big(\Tilde{B}^v(x), x^0\big) &= \log\left( \frac{1+\langle x,x^0+v\rangle_\mathbb{L}^2+\langle x,x^0+v\rangle_\mathbb{L}^2 -1}{-2 \langle x,x^0+v\rangle_\mathbb{L}}\right) \\
            &= \log(-\langle x,x^0+v\rangle_\mathbb{L}) = B^v(x).
        \end{aligned}
    \end{equation}
    And if $\langle x,x^0+v\rangle_\mathbb{L}^2 \le 1$, then
    \begin{equation}
        \begin{aligned}
            d_\mathbb{L}\big(\Tilde{B}^v(x), x^0\big) &= \log\left( \frac{1+\langle x,x^0+v\rangle_\mathbb{L}^2 +1-\langle x,x^0+v\rangle_\mathbb{L}^2}{-2 \langle x,x^0+v\rangle_\mathbb{L}}\right) \\
            &= \log\left(\frac{1}{-\langle x,x^0+v\rangle_\mathbb{L}}\right) = -B^v(x).
        \end{aligned}
    \end{equation}

    Hence, we showed that for all $x\in\mathbb{L}^d$,
    \begin{equation}
        d_\mathbb{L}\big(\Tilde{B}^v(x),x^0\big) = |B^v(x)|.
    \end{equation}
    Then, using \eqref{eq:proj_horo_lorentz}, we have
    \begin{equation}
        \langle \Tilde{B}^v(x), v\rangle = \frac{1-\langle x,x^0+v\rangle_\mathbb{L}^2}{-2\langle x,x^0+v\rangle_\mathbb{L}}.
    \end{equation}
    
    Then, on one hand, we can show that $\langle x,x^0+v\rangle_\mathbb{L}\le 0$ since $x=\lambda_0 x^0 + \lambda_1 v + x_{\mathrm{span}(x^0,v)^\bot}$. Thus, 
    \begin{equation}
        \langle x, x^0 + v\rangle_\mathbb{L} = -\lambda_0 + \lambda_1.
    \end{equation}
    But, $\lambda_0 = \sqrt{1+\sum_{i=1^d} \lambda_i^2}\ge \sqrt{\lambda_1^2} \ge \lambda_1$. Therefore, $\lambda_1-\lambda_0 = \langle x, x^0+v\rangle_\mathbb{L}\le 0$.
    
    Therefore, we have $-\langle x,x^0+v\rangle_\mathbb{L}\ge 0$. And we have
    \begin{equation}
        \langle \Tilde{B}^v(x),v\rangle \ge 0 \iff 1-\langle x,x^0+v\rangle_\mathbb{L}^2 \ge 0 \iff 1\ge \langle x,x^0+v\rangle_\mathbb{L}^2 \iff B^v(x) \le 0,
    \end{equation}
    using that $B^v(x) = \log(-\langle x,x^0+v\rangle_\mathbb{L})$.
    
    Similarly, 
    \begin{equation}
        \langle \Tilde{B}^v(x),v\rangle \le 0 \iff 1-\langle x,x^0+v\rangle_\mathbb{L}^2 \le 0 \iff 1\le \langle x,x^0+v\rangle_\mathbb{L}^2 \iff B^v(x) \ge 0.
    \end{equation}
    
    Hence, 
    \begin{equation}
        \mathrm{sign}(\langle\Tilde{B}^v(x),v\rangle) = - \mathrm{sign}(B^v(x)).
    \end{equation}
    
    Finally, we deduce that 
    \begin{equation} \label{eq:eq_busemann_coords_lorentz}
        t^v\big(\Tilde{B}^v(x)\big) = \mathrm{sign}(\langle \Tilde{B}^v(x),v\rangle) d_\mathbb{L}\big(\Tilde{B}^v(x),x^0\big) = -B^v(x).
    \end{equation}

    For the second equality, let $y\in\mathbb{B}^d$, then,
    \begin{equation}
        \begin{aligned}
            t^{\Tilde{v}}(\Tilde{B}^{\Tilde{v}}(y)) &= \mathrm{sign}(\langle \Tilde{B}^{\Tilde{v}}(y), \Tilde{v}\rangle)\ d_{\mathbb{B}}(\Tilde{B}^{\Tilde{v}}(y), 0) \\
            &= \mathrm{sign}(\langle P_{\mathbb{B}\to\mathbb{L}}(\Tilde{B}^{\Tilde{v}}(y)), v\rangle) d_{\mathbb{L}}(P_{\mathbb{B}\to\mathbb{L}}(\Tilde{B}^{\Tilde{v}}(y)), x^0) \\
            &= \mathrm{sign}(\langle \Tilde{B}^{v}(P_{\mathbb{B}\to\mathbb{L}}(y)), v\rangle) d_{\mathbb{L}}(\Tilde{B}^v(P_{\mathbb{B}\to\mathbb{L}}(y)), x^0) \quad \text{using Lemma \ref{lemma:commute_projs}} \\
            &= t^v(\Tilde{B}^v(P_{\mathbb{B}\to\mathbb{L}}(y))) \quad \text{by definition of $t^v$}\\
            &= -B^v(P_{\mathbb{B}\to\mathbb{L}}(y)) \quad \text{by \eqref{eq:eq_busemann_coords_lorentz}} \\
            &= -B^{\Tilde{v}}(y),
        \end{aligned}
    \end{equation}
    where the last line comes from
    \begin{equation}
        \begin{aligned}
            B^v(P_{\mathbb{B}\to\mathbb{L}}(y)) &= \lim_{t\to \infty}\ \big(d_{\mathbb{L}}(\exp_{x^0}(tv), P_{\mathbb{B}\to\mathbb{L}}(y)) -t\big) \\
            &= \lim_{t\to\infty}\ \big( d_{\mathbb{L}}(P_{\mathbb{B}\to\mathbb{L}}(P_{\mathbb{L}\to\mathbb{B}}(\exp_{x^0}(tv))), P_{\mathbb{B}\to\mathbb{L}}(y)) - t\big) \\
            &= \lim_{t\to\infty}\ \big(d_{\mathbb{B}}(P_{\mathbb{L}\to\mathbb{B}}(\exp_{x^0}(tv)), y) - t\big) \\
            &= \lim_{t\to\infty}\ \big(d_{\mathbb{B}}(\exp_0(t\Tilde{v}), y) - t\big) \\
            &= B^{\Tilde{v}}(y).
        \end{aligned}
    \end{equation}
\end{proof}

\begin{proof}[Proof of Proposition \ref{prop:equality_hhsw}]
    Let $\mu,\nu\in\mathcal{P}(\mathbb{B}^d)$, $\Tilde{\mu} = (P_{\mathbb{B}\to\mathbb{L}})_\#\mu$, $\Tilde{\nu} = (P_{\mathbb{B}\to\mathbb{L}})_\#\nu$, $\Tilde{v}\in S^{d-1}$ an ideal point and $v=(0,\Tilde{v})\in\mathrm{span}(x^0)^\bot$.

    First, by Lemma \ref{lemma:eq_busemann_top}, $B^v = -t^v\circ \Tilde{B}^v$. Using the proof \ref{proof:prop_wasserstein_geodesics}, $t^v$ is an isometry and we have that (by using the invariant of the Wasserstein distance),
    \begin{equation}
        W_p^p(B^v_\#\Tilde{\mu}, B^v_\#\Tilde{\nu}) = W_p^p(\Tilde{B}^v_\#\Tilde{\mu}, \Tilde{B}^v_\#\Tilde{\nu}).
    \end{equation}
    Then,
    \begin{equation}
        \begin{aligned}
        W_p^p(B^v_\#\Tilde{\mu}, B^v_\#\Tilde{\nu}) &= W_p^p(\Tilde{B}^v_\#\Tilde{\mu}, \Tilde{B}^v_\#\Tilde{\nu}) \\
        &= W_p^p(\Tilde{B}^v_\#(P_{\mathbb{B}\to\mathbb{L}})_\#\mu, \Tilde{B}^v_\#(P_{\mathbb{B}\to\mathbb{L}})_\#\nu) \\
        &= \inf_{\gamma\in\Pi(\mu,\nu)} \ \int_{\mathbb{B}^d\times \mathbb{B}^d} d_\mathbb{L}\big(\Tilde{B}^v(P_{\mathbb{B}\to\mathbb{L}}(x)), \Tilde{B}^v(P_{\mathbb{B}\to\mathbb{L}}(y))\big)^p\ \mathrm{d}\gamma(x,y) \quad \text{by \citep[Lemma 6]{paty2019subspace}} \\
        &= \inf_{\gamma\in\Pi(\mu,\nu)} \ \int_{\mathbb{B}^d\times \mathbb{B}^d} d_\mathbb{L}\big(P_{\mathbb{B}\to\mathbb{L}}(\Tilde{B}^{\Tilde{v}}(x)), P_{\mathbb{B}\to\mathbb{L}}(\Tilde{B}^{\Tilde{v}}(y))\big)^p\ \mathrm{d}\gamma(x,y) \quad \text{by Lemma \ref{lemma:commute_projs}} \\
        &= \inf_{\gamma\in\Pi(\mu,\nu)} \int_{\mathbb{B}^d \times \mathbb{B}^d} d_\mathbb{B}\big(\Tilde{B}^{\Tilde{v}}(x), \Tilde{B}^{\Tilde{v}}(y)\big)^p\ \mathrm{d}\gamma(x,y) \quad \text{as $P_{\mathbb{B}\to\mathbb{L}}$ is an isometry} \\
        &= W_p^p(\Tilde{B}^{\Tilde{v}}_\#\mu, \Tilde{B}^{\Tilde{v}}_\#\nu) \quad \text{by \citep[Lemma 6]{paty2019subspace}}  \\
        &= W_p^p(B^{\Tilde{v}}_\#\mu, B^{\Tilde{v}}_\#\nu),
        \end{aligned}
    \end{equation}
    where for the last line, we use that by Lemma \ref{lemma:commute_projs}, $B^{\Tilde{v}} = -t^{\Tilde{v}}\circ \Tilde{B}^{\Tilde{v}}$ and that $t^{\Tilde{v}}$ is an isometry. Indeed,
    \begin{equation}
        \begin{aligned}
            \forall x,y \in \{\exp_0(t\Tilde{v}),\ t\in\mathbb{R},\ |t^{\Tilde{v}}(x)-t^{\Tilde{v}}(y)| &= |\mathrm{sign}(\langle x, \Tilde{v}\rangle) d_{\mathbb{B}}(x,0) - \mathrm{sign}(\langle y, \Tilde{v}\rangle) d_{\mathbb{B}}(y,0)| \\
            &= |\mathrm{sign}(\langle P_{\mathbb{B}\to\mathbb{L}}(x), v\rangle) d_{\mathbb{L}}(P_{\mathbb{B}\to\mathbb{L}}(x), x^0) - \mathrm{sign}(\langle P_{\mathbb{B}\to\mathbb{L}}(y), v\rangle) d_{\mathbb{L}}(P_{\mathbb{B}\to\mathbb{L}}(y), x^0)| \\
            &= |t^v(P_{\mathbb{B}\to\mathbb{L}}(x)) - t^v(P_{\mathbb{B}\to\mathbb{L}}(y))| \\
            &= d_{\mathbb{L}}(P_{\mathbb{B}\to\mathbb{L}}(x), P_{\mathbb{B}\to\mathbb{L}}(y)) \quad \text{by Proposition \ref{prop:wasserstein_geodesics}} \\
            &= d_{\mathbb{B}}(x,y)  \quad \text{as $P_{\mathbb{B}\to\mathbb{L}}$ is an isometry} .
        \end{aligned}
    \end{equation}

    It is true for all $\Tilde{v}\in S^{d-1}$, and hence for $\lambda$-almost all $\Tilde{v}\in S^{d-1}$. Therefore, we have
    \begin{equation}
        HHSW_p^p(\mu,\nu)=HHSW_p^p(\Tilde{\mu},\Tilde{\nu}).
    \end{equation}

    Similarly, with the same reasonment, using that $t^v$ and $t^{\Tilde{v}}$ are isometries and Lemma \ref{lemma:commute_projs}, we obtain 
    \begin{equation}
        GHSW_p^p(\mu,\nu) = GHSW_p^p(\Tilde{\mu}, \Tilde{\nu}).
    \end{equation}
\end{proof}

\section{Properties} \label{appendix:properties}

We derive in this section additional properties of HHSW and GHSW. First, we will start by showing that for $\mu,\nu\in\mathcal{P}_p(\mathbb{L}^d)$, we have well $GHSW_p(\mu,\nu)<\infty$ and $HHSW_p(\mu,\nu)<\infty$. We also show that the Busemann coordinates can directly be used in HHSW to compute the coordinates on $\mathbb{R}$. Then, we continue by showing that GHSW and HHSW are pseudo-distances. And finally, we make connections with Radon transforms known in the literature.

\subsection{Finiteness of GHSW and HHSW} \label{appendix:finiteness}

\begin{proposition}
    Let $p\ge 1$, then for $\mu,\nu\in\mathcal{P}_p(\mathbb{L}^d)$, $GHSW_p(\mu,\nu)<\infty$ and $HHSW_p(\mu,\nu)<\infty$.
\end{proposition}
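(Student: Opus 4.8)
The plan is to bound, uniformly in the projection direction, the one‑dimensional Wasserstein distances that are averaged in the definitions of $GHSW_p$ and $HHSW_p$, and then to integrate this bound against the direction measure $\lambda$ (a probability measure on the relevant sphere). The starting point is the elementary fact that for any $\alpha,\beta\in\mathcal{P}_p(\mathbb{R})$, inserting $\delta_0$ in the triangle inequality for $W_p$ and then using convexity of $t\mapsto t^p$ gives
\begin{equation}
    W_p^p(\alpha,\beta) \le 2^{p-1}\big(W_p^p(\alpha,\delta_0)+W_p^p(\delta_0,\beta)\big) = 2^{p-1}\left(\int_\mathbb{R}|x|^p\,\mathrm{d}\alpha(x)+\int_\mathbb{R}|x|^p\,\mathrm{d}\beta(x)\right).
\end{equation}
So it is enough to control the $p$‑th moments of the pushforward measures appearing in $GHSW_p$ and $HHSW_p$, and for this I will use that the geodesic coordinate $t^v\circ P^v$ and the Busemann coordinate $B_v$ are both $1$‑Lipschitz maps on $\mathbb{L}^d$ that vanish at $x^0$. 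Throughout, write $m_p(\mu)=\int_{\mathbb{L}^d}d_\mathbb{L}(x,x^0)^p\,\mathrm{d}\mu(x)$, which is finite since $\mu\in\mathcal{P}_p(\mathbb{L}^d)$.

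For $GHSW_p$: for a direction $v\in T_{x^0}\mathbb{L}^d\cap S^d$ the coordinate of $x$ is $t^v(P^v(x))$, with $|t^v(P^v(x))|=d_\mathbb{L}(P^v(x),x^0)$. Since $\mathbb{L}^d$ is a Hadamard manifold and the geodesic $\mathrm{span}(x^0,v)\cap\mathbb{L}^d$ is a complete totally geodesic, hence convex, subset containing $x^0$, the nearest‑point projection $P^v$ is $1$‑Lipschitz and fixes $x^0$, so $d_\mathbb{L}(P^v(x),x^0)=d_\mathbb{L}(P^v(x),P^v(x^0))\le d_\mathbb{L}(x,x^0)$. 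Hence $\int_\mathbb{R}|x|^p\,\mathrm{d}(t^v_\#P^v_\#\mu)(x)\le m_p(\mu)$ for every $v$, and likewise for $\nu$; combining with the inequality above and integrating over $\lambda$ gives $GHSW_p^p(\mu,\nu)\le 2^{p-1}\big(m_p(\mu)+m_p(\nu)\big)<\infty$.

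For $HHSW_p$: the coordinate of $x$ in the ideal direction $v$ is, up to a sign that does not affect $W_p$ on $\mathbb{R}$, the Busemann function $B_v(x)=\log(-\langle x,x^0+v\rangle_\mathbb{L})$ of Proposition \ref{prop:busemann_closed_forms}, and $B_v(x^0)=\log(-\langle x^0,x^0+v\rangle_\mathbb{L})=\log 1=0$ using $\langle x^0,x^0\rangle_\mathbb{L}=-1$ and $\langle x^0,v\rangle_\mathbb{L}=0$. As a pointwise (indeed locally uniform) limit of the $1$‑Lipschitz functions $x\mapsto d_\mathbb{L}(\gamma_v(t),x)-t$, $B_v$ is itself $1$‑Lipschitz, so $|B_v(x)|=|B_v(x)-B_v(x^0)|\le d_\mathbb{L}(x,x^0)$. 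The same moment bound and integration over $\lambda$ then yield $HHSW_p^p(\mu,\nu)\le 2^{p-1}\big(m_p(\mu)+m_p(\nu)\big)<\infty$.

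The only ingredients that are not pure substitution are the two $1$‑Lipschitz statements, and both are standard: nonexpansiveness of the metric projection onto a convex set in a Hadamard/CAT(0) space, and $1$‑Lipschitzness of Busemann functions as limits of normalized distance functions. I therefore do not expect a genuine obstacle here; if one prefers to stay fully self‑contained, the geodesic‑projection estimate can alternatively be extracted directly from the closed form in Proposition \ref{prop:geodesic_proj} at the cost of a short computation.
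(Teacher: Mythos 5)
Your proof is correct, and its overall skeleton matches the paper's: reduce to bounding $p$-th moments of the projected measures via the triangle inequality for $W_p$ plus the convexity bound $(a+b)^p\le 2^{p-1}(a^p+b^p)$, then show the one-dimensional coordinate of $x$ is dominated by $d_\mathbb{L}(x,x^0)$. For HHSW the two arguments are essentially identical ($B_v$ is $1$-Lipschitz and vanishes at $x^0$); the only difference is that you identify the horospherical coordinate with $\pm B_v(x)$ directly, whereas the paper verifies $d_\mathbb{L}(\tilde{P}^v(x),x^0)=|B_v(x)|$ by an explicit computation from the closed form of $\tilde{P}^v$ — your identification is exactly the content of a separate proposition in the appendix, so it is legitimate but slightly less self-contained. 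For GHSW the key lemma differs: the paper only uses the elementary observation that $d_\mathbb{L}(P^v(x),x)\le d_\mathbb{L}(x_0,x)$ for a base point $x_0$ on the geodesic (since $P^v(x)$ is the minimizer), at the cost of a second application of the $2^{p-1}$ split and a constant $2^{2p-2}$; you instead invoke nonexpansiveness of the nearest-point projection onto a convex subset of a Hadamard/CAT(0) space to get $d_\mathbb{L}(P^v(x),x^0)\le d_\mathbb{L}(x,x^0)$ in one step with the sharper constant $2^{p-1}$. Your route is cleaner but leans on a standard-but-nontrivial external fact; the paper's is more elementary. Neither difference affects correctness, since only finiteness is at stake.
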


\begin{proof}
    First, we will deal with GHSW and then with HHSW. Let $p\ge 1$ and $\mu,\nu\in\mathcal{P}_p(\mathbb{L}^d) = \{\mu\in\mathcal{P}(\mathbb{L}^d), \ \int_{\mathbb{L}^d} d_\mathbb{L}(x,x_0)^p \mathrm{d}\mu(x) <\infty \ \text{ for some }x_0\in \mathbb{L}^d\}$. Note that the choice of $x_0$ is arbitrary, since for any $x,y\in\mathbb{L}^d$, we have by the triangular inequality
    \begin{equation}
        d_\mathbb{L}(x,y) \le d_\mathbb{L}(x,x_0) + d_\mathbb{L}(x_0, y).
    \end{equation}
    Then, both proofs will follow from \citep[Definition 6.4]{villani2009optimal} using that
    \begin{equation} \label{eq:inequality_dist}
        \forall x,y\in\mathbb{L}^d,\ d_\mathbb{L}(x,y)^p \le 2^{p-1}\big(d_\mathbb{L}(x,x_0)^p+d_{\mathbb{L}}(x_0,y)^p\big).
    \end{equation}
    
    \paragraph{GHSW.}  Let $\mu,\nu\in\mathcal{P}_p(\mathbb{L}^d)$. Then, using \eqref{eq:inequality_dist}, we have, denoting $\gamma\in\Pi(\mu,\nu)$ an arbitrary coupling and using \citep[Lemma 6]{paty2019subspace},
    \begin{equation}
        \begin{aligned}
            W_p^p(P^v_\#\mu, P^v_\#\nu) = W_p^p(\Tilde{P}^v_\#\mu, \Tilde{P}^v_\#\nu) &= \inf_{\pi\in\Pi(\Tilde{P}^v_\#\mu, \Tilde{P}^v_\#\nu)}\ \int_{\mathbb{L}^d \times \mathbb{L}^d} d_\mathbb{L}(x,y)^p\ \mathrm{d}\pi(x,y) \\
            &= \inf_{\pi\in\Pi(\mu,\nu)}\ \int_{\mathbb{L}^d\times \mathbb{L}^d} d_\mathbb{L}\big(\Tilde{P}^v(x), \Tilde{P}^v(y)\big)^p\ \mathrm{d}\pi(x,y) \\
            &\le \int_{\mathbb{L}^d\times\mathbb{L}^d} d_\mathbb{L}\big(\Tilde{P}^v(x), \Tilde{P}^v(y)\big)^p\ \mathrm{d}\gamma(x,y) \\
            &\le 2^{p-1} \int_{\mathbb{L}^d\times\mathbb{L}^d}  \left( d_\mathbb{L}\big(\Tilde{P}^v(x), x_0\big)^p + d_\mathbb{L}\big(\Tilde{P}^v(y),x_0\big)^p\right)\ \mathrm{d}\gamma(x,y) \\
            &= 2^{p-1} \Big(\int_{\mathbb{L}^d} d_\mathbb{L}\big(\Tilde{P}^v(x),x_0\big)^p\mathrm{d}\mu(x) + \int_{\mathbb{L}^d} d_\mathbb{L}\big(\Tilde{P}^v(y),x_0\big)^p \mathrm{d}\nu(y)\Big).
        \end{aligned}
    \end{equation}
    If we take $x_0$ belonging to the geodesic, then necessarily, $d_\mathbb{L}\big(\Tilde{P}^v(x),x\big) \le d_\mathbb{L}(x_0,x)$ using that $P^v(x)=\argmin_{y\in\mathrm{span}(x^0,v)\cap\mathbb{L}^d}\ d_\mathbb{L}(x,y)$. Hence, by using again \eqref{eq:inequality_dist}, we have
    \begin{equation}
        \begin{aligned}
            W_p^p(P^v_\#\mu, P^v_\#\nu) &\le 2^{2p-2} \Big( \int_{\mathbb{L}^d} d_\mathbb{L}(\Tilde{P}^v(x), x)^p \mathrm{d}\mu(x) + \int_{\mathbb{L}^d} d_\mathbb{L}(x, x_0)^p\mathrm{d}\mu(x) \\
            &+ \int_{\mathbb{L}^d} d_\mathbb{L}(\Tilde{P}^v(y),y)^p \mathrm{d}\nu(y) + \int_{\mathbb{L}^d} d_\mathbb{L}(y,x_0)^p\mathrm{d}\nu(y)\Big) \\
            &\le \Big( \int_{\mathbb{L}^d} d_\mathbb{L}(x_0, x)^p \mathrm{d}\mu(x) + \int_{\mathbb{L}^d} d_\mathbb{L}(x, x_0)^p\mathrm{d}\mu(y) \\
            &+ \int_{\mathbb{L}^d} d_\mathbb{L}(x_0,y)^p \mathrm{d}\nu(y) + \int_{\mathbb{L}^d} d_\mathbb{L}(y,x_0)^p\mathrm{d}\nu(y)\Big) \\
            &< \infty.
        \end{aligned}
    \end{equation}
    And hence $GHSW_p(\mu,\nu)<\infty$.
    
    \paragraph{HHSW.} Let's take first $x_0=x^0$ as the base point. Then, by using again \eqref{eq:inequality_dist}, we have:
    \begin{equation}
        W_p^p(B^v_\#\mu, B^v_\#\nu) \le 2^{p-1} \Big(\int_{\mathbb{L}^d} d_\mathbb{L}\big(\Tilde{B}^v(x),x^0\big)^p\mathrm{d}\mu(x) + \int_{\mathbb{L}^d} d_\mathbb{L}\big(\Tilde{B}^v(y),x^0\big)^p\mathrm{d}\nu(y)\Big).
    \end{equation}
    
    Now, by recalling that $B^v\big(\Tilde{B}^v(x)\big) = B^v(x) = \log\big(-\langle x,x^0+v\rangle_\mathbb{L}\big)$ and 
    \begin{equation}
        \Tilde{B}^v(x) = -\frac{1}{2\langle x,x^0+v\rangle_\mathbb{L}} \left((1+\langle x,x^0+v\rangle_\mathbb{L}^2)x^0 - \langle x, x^0+v\rangle_\mathbb{L}^2 v\right).
    \end{equation}
    Now, by remarking that $\langle x,x^0+v\rangle_\mathbb{L}\le 0$, then we have, 
    \begin{equation}
        \begin{aligned}
            d_\mathbb{L}\big(\Tilde{B}^v(x),x^0\big) &= \arccosh(-\langle \Tilde{B}^v(x), x^0 \rangle_\mathbb{L}) \\
            &= \arccosh\left(\frac{1}{2\langle x,x^0+v\rangle_\mathbb{L}}(1+\langle x,x^0+v\rangle_\mathbb{L}^2)\langle x^0,x^0\rangle_\mathbb{L}\right) \\
            &= \arccosh\left(-\frac{1}{2\langle x,x^0+v\rangle_\mathbb{L}}(1+\langle x,x^0+v\rangle_\mathbb{L}^2)\right) \\
            &= \log\left(\frac{1+\langle x,x^0+v\rangle_\mathbb{L}^2}{-2\langle x,x^0+v\rangle_\mathbb{L}} + \sqrt{\frac{(1+\langle x,x^0+v\rangle_\mathbb{L})^2}{4\langle x,x^0+v\rangle_\mathbb{L}^2} -1}\right) \\
            &= \log\left(\frac{1+\langle x,x^0 + v\rangle_\mathbb{L}^2 + \sqrt{\big(\langle x,x^0 + v\rangle_\mathbb{L}^2-1\big)^2}}{-2\langle x, x^0+v\rangle_\mathbb{L}}\right) \\
            % &= \log\left(-\frac{1+\langle x,x^0 + v\rangle_\mathbb{L}^2 - \big(1-\langle x,x^0 + v\rangle_\mathbb{L}^2\big)}{2\langle x, x^0+v\rangle_\mathbb{L}}\right) \\
            % &= \log(-\langle x,x^0+v\rangle_\mathbb{L}) = B^v(x).
        \end{aligned}
    \end{equation}
    If $\langle x,x^0+v\rangle_\mathbb{L}^2 \ge 1$, then
    \begin{equation}
        \begin{aligned}
            d_\mathbb{L}\big(\Tilde{B}^v(x), x^0\big) &= \log\left( \frac{1+\langle x,x^0+v\rangle_\mathbb{L}^2+\langle x,x^0+v\rangle_\mathbb{L}^2 -1}{-2 \langle x,x^0+v\rangle_\mathbb{L}}\right) \\
            &= \log(-\langle x,x^0+v\rangle_\mathbb{L}) = B^v(x).
        \end{aligned}
    \end{equation}
    And if $\langle x,x^0+v\rangle_\mathbb{L}^2 \le 1$, then
    \begin{equation}
        \begin{aligned}
            d_\mathbb{L}\big(\Tilde{B}^v(x), x^0\big) &= \log\left( \frac{1+\langle x,x^0+v\rangle_\mathbb{L}^2 +1-\langle x,x^0+v\rangle_\mathbb{L}^2}{-2 \langle x,x^0+v\rangle_\mathbb{L}}\right) \\
            &= \log\left(\frac{1}{-\langle x,x^0+v\rangle_\mathbb{L}}\right) = -B^v(x).
        \end{aligned}
    \end{equation}
    
    Then, using that $B^v$ is 1-lipschitz, we have
    \begin{equation}
        |B^v(x)-B^v(x^0)|\le d_\mathbb{L}(x,x^0),
    \end{equation}
    and therefore
    \begin{equation}
        |B^v(x)| \le d_\mathbb{L}(x,x^0),
    \end{equation}
    since $B^v(x^0) = \log(-\langle x^0,x^0+v\rangle_\mathbb{L}) = 0$.
    Hence we have well $HHSW_p(\mu,\nu)<\infty$.
\end{proof}

\subsection{Pseudo-distance} \label{appendix:pseudo_distance}

\begin{proposition}
    Let $p\ge 1$, then $GHSW_p$ and $HHSW_p$ are pseudo-distances.
\end{proposition}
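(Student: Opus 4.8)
The plan is to verify the three axioms of a pseudo-distance — non-negativity together with vanishing on the diagonal, symmetry, and the triangle inequality — for both $GHSW_p$ and $HHSW_p$; no claim is made about the identity of indiscernibles, which genuinely fails because the projections $P^v$ and $\Tilde{P}^v$ are not injective, so two distinct measures sharing all one-dimensional projections are at distance zero. Throughout I use that
\begin{equation*}
    GHSW_p^p(\mu,\nu) = \int_{S^{d-1}} W_p^p\big(t^v_\#P^v_\#\mu,\ t^v_\#P^v_\#\nu\big)\,\mathrm{d}\lambda(v), \qquad
    HHSW_p^p(\mu,\nu) = \int_{S^{d-1}} W_p^p\big(\Tilde{t}^v_\#\Tilde{P}^v_\#\mu,\ \Tilde{t}^v_\#\Tilde{P}^v_\#\nu\big)\,\mathrm{d}\lambda(v),
\end{equation*}
so that for each fixed direction the integrand is a one-dimensional Wasserstein distance between pushforward measures on $\mathbb{R}$.

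Non-negativity is immediate from $W_p\ge 0$, and finiteness of both functionals on $\mathcal{P}_p$ was established in Section~\ref{appendix:finiteness}, so the integrals are well defined; measurability of the direction-wise integrand follows from the closed forms of $P^v$ and $\Tilde{P}^v$ (continuous in $v$), whence the pushforwards depend weakly continuously on $v$, which, combined with the moment bounds, makes $v\mapsto W_p^p(\cdot,\cdot)$ continuous. Symmetry is inherited direction by direction from the symmetry of $W_p$ on $\mathcal{P}_p(\mathbb{R})$ and is preserved under integration over $S^{d-1}$. Finally $GHSW_p(\mu,\mu)=0$ and $HHSW_p(\mu,\mu)=0$ because $W_p(\eta,\eta)=0$ for every pushforward $\eta$, so the integrand vanishes identically.

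The only substantive point is the triangle inequality. Fix $\mu,\nu,\rho$. For $\lambda$-almost every $v$, the measures $t^v_\#P^v_\#\mu$, $t^v_\#P^v_\#\nu$, $t^v_\#P^v_\#\rho$ lie in $\mathcal{P}_p(\mathbb{R})$ (by the finiteness result), so the triangle inequality for the genuine distance $W_p$ on $\mathcal{P}_p(\mathbb{R})$ gives
\begin{equation*}
    W_p\big(t^v_\#P^v_\#\mu,\ t^v_\#P^v_\#\rho\big) \le W_p\big(t^v_\#P^v_\#\mu,\ t^v_\#P^v_\#\nu\big) + W_p\big(t^v_\#P^v_\#\nu,\ t^v_\#P^v_\#\rho\big).
\end{equation*}
Taking the $L^p(S^{d-1},\lambda)$ norm of both sides and applying the Minkowski inequality in $L^p$ to the right-hand side yields
\begin{equation*}
\begin{aligned}
    GHSW_p(\mu,\rho) &\le \left(\int W_p^p(t^v_\#P^v_\#\mu, t^v_\#P^v_\#\nu)\,\mathrm{d}\lambda(v)\right)^{1/p} + \left(\int W_p^p(t^v_\#P^v_\#\nu, t^v_\#P^v_\#\rho)\,\mathrm{d}\lambda(v)\right)^{1/p} \\
    &= GHSW_p(\mu,\nu) + GHSW_p(\nu,\rho).
\end{aligned}
\end{equation*}
The argument for $HHSW_p$ is word for word the same, with $P^v$ replaced by $\Tilde{P}^v$ and $t^v$ by $\Tilde{t}^v$ (equivalently, phrased through the Busemann coordinates $B_v$). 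I do not expect a real obstacle here: the proof is Minkowski's inequality combined with the metric axioms of one-dimensional optimal transport, and the only care needed is the bookkeeping already carried out in Section~\ref{appendix:finiteness} — guaranteeing the pushforwards have finite $p$-th moments so that $W_p$ is an honest metric on each fiber — together with the routine measurability remark above.
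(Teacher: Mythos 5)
Your proof is correct and follows essentially the same route as the paper's: non-negativity, symmetry, and vanishing on the diagonal are inherited slice-wise from the metric axioms of $W_p$, and the triangle inequality follows from the slice-wise triangle inequality for $W_p$ combined with Minkowski's inequality in $L^p(S^{d-1},\lambda)$. One caveat on your opening aside (which is not part of the statement, so it does not affect the proof): the claim that identity of indiscernibles \emph{genuinely fails} because $P^v$ and $\Tilde{P}^v$ are non-injective is unjustified --- non-injectivity of each individual projection does not imply that two distinct measures can share all one-dimensional slices (the Euclidean sliced-Wasserstein distance is a true metric despite each projection being non-injective), and the paper explicitly treats this as an open question tied to the injectivity of the hyperbolic and horospherical Radon transforms on measures.
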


\begin{proof}
    Let $p\ge 1$, then for all $\mu,\nu\in\mathcal{P}_p(\mathbb{L}^d)$, it is straightforward to see that $GHSW_p(\mu,\nu)\ge 0$, $HHSW_p(\mu,\nu)_p\ge 0$, $GHSW_p(\mu,\nu)=GHSW_p(\nu,\mu)$ and $HHSW_p(\mu,\nu)=HHSW_p(\nu,\mu)$. It is also easy to see that $\mu=\nu\implies GHSW_p(\mu,\nu)=0$ and $HHSW_p(\mu,\nu)=0$ using that $W_p$ is a distance.
    
    Now, we can also derive the triangular inequality using the triangular inequality for $W_p$ and the Minkowski inequality:
    \begin{equation}
        \begin{aligned}
            \forall \mu,\nu,\alpha\in\mathcal{P}(\mathbb{L}^d),\ GHSW_p(\mu,\nu) &= \Big(\int_{T_{x^0}\mathbb{L}^d\cap S^d} W_p^p(P^v_\#\mu,P^v_\#\nu)\ \mathrm{d}\lambda(v)\Big)^{\frac{1}{p}} \\
            &\le \Big(\int_{T_{x^0}\mathbb{L}^d\cap S^d} \big(W_p(P^v_\#\mu,P^v_\#\alpha) + W_p(P^v_\#\alpha,P^v_\#\nu)\big)^p\ \mathrm{d}\lambda(v)\Big)^{\frac{1}{p}} \\
            &\le \Big(\int_{T_{x^0}\mathbb{L}^d\cap S^d} W_p^p(P^v_\#\mu,P^v_\#\alpha)\ \mathrm{d}\lambda(v)\Big)^{\frac{1}{p}} \\ &+ \Big(\int_{T_{x^0}\mathbb{L}^d\cap S^d} W_p^p(P^v_\#\alpha, P^v_\#\nu)\ \mathrm{d}\lambda(v)\Big)^{\frac{1}{p}} \\
            &= GHSW_p(\mu,\alpha)+GHSW_p(\alpha,\nu).
        \end{aligned}
    \end{equation}
    The same holds for HHSW.
    
    Therefore, $GHSW_p$ and $HHSW_p$ are pseudo-distances.
\end{proof}

To show that there are distances, we need additionally the positivity property, \emph{i.e.} we need to show that $GHSW_p(\mu,\nu) = 0 \implies \mu=\nu$. As $W_p$ is a distance, we have that $GHSW_p(\mu,\nu)=0 \implies P^v_\#\mu=P^v_\#\nu$ for $\lambda$-ae $v$. But showing that this implies that $\mu=\nu$ is  not straightforward. Following derivations obtained with SW, we can draw connections with known Radon transforms.

\paragraph{Radon transform for GHSW.} Let $f\in L^1(\mathbb{L}^d)$. Then, let's define $\Bar{R}:L^1(\mathbb{L}^d)\to L^1(\mathbb{R}\times S^{d-1})$ such that for all $t\in\mathbb{R}$ and $v\in S^{d-1}$,
\begin{equation}
    \Bar{R}f(t, v) = \int_{\mathbb{L}^d} f(x)\mathbb{1}_{\{P^v(x) = t\}}\ \mathrm{d}x.
\end{equation}
Let's define a dual function $\Bar{R}^*:C_0(\mathbb{R}\times S^{d-1})\to C_0(\mathbb{L}^d)$ as
\begin{equation}
    \Bar{R}^*g(x) = \int_{S^{d-1}} g\big(P^v(x), v\big)\ \mathrm{d}\lambda(v),
\end{equation}
where $g\in C_0(\mathbb{R}\times S^{d-1})$. Then, we can check that it well the dual.

\begin{proposition}
    For all $f\in L^1(\mathbb{L}^d)$, $g\in C_0(\mathbb{R}\times S^{d-1})$, 
    \begin{equation}
        \langle \Bar{R}f, g\rangle_{\mathbb{R}\times S^{d-1}} = \langle f, \Bar{R}^*g\rangle_{\mathbb{L}^d}.
    \end{equation}
\end{proposition}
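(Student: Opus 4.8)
The statement is the standard adjointness relation between a generalized Radon transform and its dual. The plan is to write out both pairings as integrals, then swap the order of integration via Fubini, at which point the two sides collapse to the same double integral. Concretely, I would start from the left-hand side,
\begin{equation}
    \langle \Bar{R}f, g\rangle_{\mathbb{R}\times S^{d-1}} = \int_{S^{d-1}} \int_{\mathbb{R}} \Bar{R}f(t,v)\, g(t,v)\ \mathrm{d}t\, \mathrm{d}\lambda(v),
\end{equation}
and substitute the definition $\Bar{R}f(t,v) = \int_{\mathbb{L}^d} f(x)\, \mathbb{1}_{\{t^v(P^v(x)) = t\}}\, \mathrm{d}x$. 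This yields a triple integral over $S^{d-1}\times\mathbb{R}\times\mathbb{L}^d$ of $f(x)\, g(t,v)\, \mathbb{1}_{\{t^v(P^v(x))=t\}}$.

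Next I would apply Fubini--Tonelli to move the $\mathbb{L}^d$ integral to the outside and the $\mathbb{R}$ integral to the inside; this is legitimate because $f\in L^1(\mathbb{L}^d)$ and $g\in C_0(\mathbb{R}\times S^{d-1})$ is bounded, so the integrand is absolutely integrable against $\mathrm{d}\lambda(v)\,\mathrm{d}t\,\mathrm{d}x$ (with $\mathrm{d}x$ the volume measure on $\mathbb{L}^d$). Performing the inner $\mathrm{d}t$ integral against the indicator $\mathbb{1}_{\{t^v(P^v(x))=t\}}$ sifts out the single value $t = t^v(P^v(x))$, leaving
\begin{equation}
    \int_{\mathbb{L}^d} f(x) \int_{S^{d-1}} g\big(t^v(P^v(x)), v\big)\ \mathrm{d}\lambda(v)\, \mathrm{d}x = \int_{\mathbb{L}^d} f(x)\, \Bar{R}^*g(x)\ \mathrm{d}x = \langle f, \Bar{R}^*g\rangle_{\mathbb{L}^d},
\end{equation}
which is the claim.

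The one genuinely delicate point — and the step I expect to be the main obstacle — is the meaning of the inner integral $\int_{\mathbb{R}} g(t,v)\, \mathbb{1}_{\{t^v(P^v(x))=t\}}\, \mathrm{d}t$, since the indicator is supported on a Lebesgue-null set in $t$ and would naively integrate to zero. The resolution is that $\Bar{R}f(\cdot,v)$ must be understood as the pushforward (disintegration) of the measure $f(x)\,\mathrm{d}x$ under the map $x\mapsto t^v(P^v(x))$, i.e. $\Bar{R}f(t,v)\,\mathrm{d}t = (t^v\circ P^v)_\#(f\,\mathrm{d}x)$; with this reading the ``integral against the indicator'' is exactly the change-of-variables/pushforward identity $\int_{\mathbb{R}} g(t,v)\, \mathrm{d}(t^v\circ P^v)_\#(f\,\mathrm{d}x)(t) = \int_{\mathbb{L}^d} g\big(t^v(P^v(x)),v\big) f(x)\,\mathrm{d}x$, and Fubini is applied to the product of the base measure $\mathrm{d}\lambda(v)$ with these fibered measures. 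So in the write-up I would either (i) phrase $\Bar{R}$ via pushforwards from the outset and invoke the pushforward formula plus Fubini, or (ii) keep the informal indicator notation but remark that it is shorthand for the coarea/disintegration identity. Either way, once that identification is granted the remaining manipulations are just Fubini and a change of variables, and no hyperbolic geometry is actually needed beyond measurability of $x\mapsto t^v(P^v(x))$, which follows from the closed-form expressions in Propositions~\ref{prop:geodesic_proj} and~\ref{prop:wasserstein_geodesics}.
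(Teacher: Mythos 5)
Your proposal is correct and follows essentially the same route as the paper's proof: substitute the definition of $\Bar{R}f$, apply Fubini, and let the indicator sift out $t = t^v(P^v(x))$. Your additional remark that the indicator is supported on a Lebesgue-null set in $t$ --- so that $\Bar{R}f(\cdot,v)\,\mathrm{d}t$ must really be read as the pushforward $(t^v\circ P^v)_\#(f\,\mathrm{d}x)$ for the sifting step to make sense --- is a genuine subtlety that the paper's own proof glosses over, and is worth keeping in your write-up.
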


\begin{proof}
    Let $f\in L^1(\mathbb{L}^d)$, $g\in C_0(\mathbb{R}\times S^{d-1})$, then,
    \begin{equation}
        \begin{aligned}
            \langle \Bar{R}f, g\rangle_{\mathbb{R}\times S^{d-1}} &= \int_{\mathbb{R}\times S^{d-1}} \Bar{R}f(t, v) g(t, v) \mathrm{d}t \mathrm{d}\lambda(v) \\
            &= \int_\mathbb{R} \int_{\mathbb{S}^{d-1}} \int_{\mathbb{L}^d} f(x) \mathbb{1}_{\{P^v(x)=t\}} g(t, v) \ \mathrm{d}x \mathrm{d}t \mathrm{d}\lambda(v) \\
            &= \int_{\mathbb{L}^d} f(x) \int_{S^{d-1}} \int_\mathbb{R} g(t,v) \mathbb{1}_{\{P^v(x)=t\}} \ \mathrm{d}t \mathrm{d}\lambda(v) \mathrm{d}x \\
            &= \int_{\mathbb{L}^d} f(x) \int_{S^{d-1}} g\big(P^v(x), v\big)\ \mathrm{d}\lambda(v) \mathrm{d}x \\
            &= \langle f, \Bar{R}g\rangle_{\mathbb{L}^d}.
        \end{aligned}
    \end{equation}
\end{proof}

Then, we can as in \citep{boman2009support}, define the corresponding Radon transform of a measure $\mu\in\mathcal{M}(\mathbb{L}^d)$ as the measure $\Bar{R}\mu\in\mathcal{M}(\mathbb{R}\times S^{d-1})$, such that for all $g\in C_0(\mathbb{R}\times S^{d-1})$, $\langle \Bar{R}\mu, g\rangle_{\mathbb{R}\times S^{d-1}} = \langle \mu, \Bar{R}^*g\rangle_{\mathbb{L}^d}$.

Next, denoting for $v\in S^{d-1}$, $(\Bar{R}\mu)^v$ the disintegrated measure \emph{w.r.t.} $\lambda$, \emph{i.e.} the measure satisfying for all $\phi\in C(\mathbb{R}\times S^{d-1})$,
\begin{equation}
    \int_{\mathbb{R}\times S^{d-1}} \phi(t, v) \mathrm{d}(\Bar{R}\mu)(t,v) = \int_{S^{d-1}} \int_{\mathbb{R}} \phi(t, v) (R\mu)^v(\mathrm{d}t) \ \mathrm{d}\lambda(v),
\end{equation}
we can show that $(\Bar{R}\mu)^v=P^v_\#\mu$. 

\begin{proposition}
    Let $\mu\in\mathcal{M}(\mathbb{L}^d)$, then for $\lambda$-almost every $v\in S^{d-1},$ $(\Bar{R}\mu)^v = P^v_\#\mu$.
\end{proposition}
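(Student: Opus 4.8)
The plan is to test $\Bar{R}\mu$ against an arbitrary $g\in C_0(\mathbb{R}\times S^{d-1})$, evaluate $\langle \Bar{R}\mu, g\rangle_{\mathbb{R}\times S^{d-1}}$ in two different ways, and then invoke uniqueness of disintegrations to conclude.

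First I would expand $\langle \Bar{R}\mu, g\rangle_{\mathbb{R}\times S^{d-1}}$ using the defining duality $\langle\Bar{R}\mu, g\rangle_{\mathbb{R}\times S^{d-1}} = \langle \mu, \Bar{R}^* g\rangle_{\mathbb{L}^d}$ together with the formula for $\Bar{R}^*$, which gives
\begin{equation}
    \langle \Bar{R}\mu, g\rangle_{\mathbb{R}\times S^{d-1}} = \int_{\mathbb{L}^d}\int_{S^{d-1}} g\big(t^v(P^v(x)), v\big)\ \mathrm{d}\lambda(v)\,\mathrm{d}\mu(x).
\end{equation}
Since $g$ is bounded and $\mu$ is finite, Fubini's theorem lets me swap the two integrals, and the change-of-variables (pushforward) formula applied to the measurable map $x\mapsto t^v(P^v(x))$ for fixed $v$ turns the inner integral into an integral over $\mathbb{R}$:
\begin{equation}
    \langle \Bar{R}\mu, g\rangle_{\mathbb{R}\times S^{d-1}} = \int_{S^{d-1}}\int_{\mathbb{R}} g(t,v)\ \mathrm{d}\big(t^v_\#P^v_\#\mu\big)(t)\,\mathrm{d}\lambda(v).
\end{equation}
On the other hand, the defining property of the disintegration $(\Bar{R}\mu)^v$ (applied with $\phi=g$) reads
\begin{equation}
    \langle \Bar{R}\mu, g\rangle_{\mathbb{R}\times S^{d-1}} = \int_{S^{d-1}}\int_{\mathbb{R}} g(t,v)\ (\Bar{R}\mu)^v(\mathrm{d}t)\,\mathrm{d}\lambda(v).
\end{equation}
Subtracting, for every $g\in C_0(\mathbb{R}\times S^{d-1})$,
\begin{equation}
    \int_{S^{d-1}}\left(\int_{\mathbb{R}} g(t,v)\,(\Bar{R}\mu)^v(\mathrm{d}t) - \int_{\mathbb{R}} g(t,v)\,\mathrm{d}(t^v_\#P^v_\#\mu)(t)\right)\mathrm{d}\lambda(v) = 0.
\end{equation}

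It remains to upgrade this to a $\lambda$-almost-everywhere identity between measures on $\mathbb{R}$. Taking $g$ of product form $g(t,v)=\phi(t)\psi(v)$ with $\phi\in C_0(\mathbb{R})$ and $\psi\in C(S^{d-1})$ shows that, for each fixed $\phi$, the map $v\mapsto \int_{\mathbb{R}}\phi\,\mathrm{d}(\Bar{R}\mu)^v - \int_{\mathbb{R}}\phi\,\mathrm{d}(t^v_\#P^v_\#\mu)$ annihilates every $\psi\in C(S^{d-1})$ when integrated against $\lambda$, hence vanishes for $\lambda$-a.e.\ $v$. Choosing a countable dense subset $\{\phi_n\}_{n\in\mathbb{N}}$ of $C_0(\mathbb{R})$ and intersecting the countably many associated $\lambda$-conull sets produces a single $\lambda$-conull set on which $\int\phi_n\,\mathrm{d}(\Bar{R}\mu)^v = \int\phi_n\,\mathrm{d}(t^v_\#P^v_\#\mu)$ for all $n$, hence, by density, for all $\phi\in C_0(\mathbb{R})$; since a finite Borel measure on $\mathbb{R}$ is determined by its integrals against $C_0(\mathbb{R})$, this yields $(\Bar{R}\mu)^v = t^v_\#P^v_\#\mu$ for $\lambda$-a.e.\ $v$, as claimed.

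The only delicate point is the measurability bookkeeping: one must check that $v\mapsto t^v_\#P^v_\#\mu$ is a measurable family of measures, so that the iterated integral on the right-hand side is well defined and the comparison with the disintegration is legitimate. This follows from the joint continuity — hence measurability — of $(x,v)\mapsto t^v(P^v(x))$, which is explicit from the closed-form expressions for $P^v$ and $t^v$, together with the separability of $C_0(\mathbb{R})$ used to intersect the a.e.\ sets. Everything else is a direct application of Fubini's theorem and the pushforward change-of-variables formula.
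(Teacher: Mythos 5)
Your proposal is correct and follows essentially the same route as the paper's proof: expand $\langle \Bar{R}\mu, g\rangle_{\mathbb{R}\times S^{d-1}}$ via the duality $\langle \mu, \Bar{R}^*g\rangle_{\mathbb{L}^d}$, apply Fubini, and rewrite the inner integral as an integral against $t^v_\#P^v_\#\mu$ by the pushforward formula. The only difference is that you explicitly carry out the final upgrade from equality of iterated integrals to the $\lambda$-a.e.\ identity of the disintegrations (product test functions, separability of $C_0(\mathbb{R})$, and measurability of the family $v\mapsto t^v_\#P^v_\#\mu$), a step the paper leaves implicit; this is a welcome addition rather than a departure.
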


\begin{proof}
    In the following, we will use that $S^{d-1} \cong T_{x^0}\mathbb{L}^d\cap S^d$. And therefore, $P^v$ is well defined.
    
    Let $g\in C_0(\mathbb{R}\times S^{d-1})$, then
    \begin{equation}
        \begin{aligned}
            \int_{S^{d-1}} \int_{\mathbb{R}} g(t, v) (\Bar{R}\mu)^v(\mathrm{d}t) \ \mathrm{d}\lambda(v) &= \int_{\mathbb{R}\times S^{d-1}} g(t,v)\ \mathrm{d}(\Bar{R}\mu)(t,v) = \langle \Bar{R}\mu, g\rangle_{\mathbb{R}\times S^{d-1}} \\
            &= \int_{\mathbb{L}^d} \Bar{R}^*g(x) \ \mathrm{d}\mu(x) \\
            &= \int_{\mathbb{L}^d} \int_{S^{d-1}} g\big(P^v(x), v\big) \ \mathrm{d}\lambda(x)\mathrm{d}\mu(x) \\
            &= \int_{S^{d-1}} \int_{\mathbb{L}^d} g\big(P^v(x), v\big) \mathrm{d}\mu(x)\mathrm{d}\lambda(v) \\
            &= \int_{S^{d-1}} \int_{\mathbb{R}} g(t, v) \ \mathrm{d}(P^v_\#\mu)(x) \mathrm{d}\lambda(v),
        \end{aligned}
    \end{equation}
    where we use the duality properties and Fubini.
\end{proof}

From the previous proposition, we deduce that
\begin{equation}
    \forall \mu,\nu \in \mathcal{P}(\mathbb{L}^d),\ GHSW_p^p(\mu,\nu) = \int_{S^{d-1}} W_p^p\big((\Bar{R}\mu)^v, (\Bar{R}\nu)^v\big)\ \mathrm{d}\lambda(v).
\end{equation}
And $GHSW_p(\mu,\nu) = 0 \implies (\Bar{R}\mu)^v=(\Bar{R}\nu)^v$ for $\lambda$-almost every $\lambda$.

The transformation $\Bar{R}$ is not really clear written like that. In the next proposition, we identify the integration set, which will give a connection to a known Radon transform.

\begin{proposition}[Set of integration]
    The integration set of $\Bar{R}$ is, for $t\in\mathbb{R}$, $v\in S^{d-1}$,
    \begin{equation}
        \{x\in\mathbb{L}^d,\ P^v(x)=t\} = \mathrm{span}(v_z)^\bot \cap \mathbb{L}^d,
    \end{equation}
    where $v_z = R_z v$ with $R_z$ a rotation matrix % of angle $\theta=\frac{\pi}{2}-\arccos\left(\frac{\langle z,v\rangle}{\|z\|_2}\right)$
    in the plan $\mathrm{span}(v,x^0)$ such that $\langle v_z, z\rangle = 0$.
\end{proposition}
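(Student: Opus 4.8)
The plan is to collapse the two-step map $x\mapsto t^v(P^v(x))$ into a single affine condition in $x$ and then recognise its zero set inside $\mathbb{L}^d$ as a totally geodesic hypersurface of the form $\mathrm{span}(w)^\bot\cap\mathbb{L}^d$. First I would assemble the closed forms already proved. By Proposition \ref{prop:geodesic_proj}, for every $x\in\mathbb{L}^d$ we have $P^v(x)=\gamma_v(s)$ with $s=s(x)$ the unique real satisfying $\tanh(s)=-\langle x,v\rangle_\mathbb{L}/\langle x,x^0\rangle_\mathbb{L}$; this is legitimate since on $\mathbb{L}^d$ one has $\langle x,x^0\rangle_\mathbb{L}\le -1<0$ and, writing $x=\lambda_0 x^0+\lambda_1 v+x^\bot$ with $x^\bot\bot\mathrm{span}(x^0,v)$ and using $\langle x,x\rangle_\mathbb{L}=-1$, one gets $|\langle x,v\rangle_\mathbb{L}|=|\lambda_1|<|\lambda_0|=|\langle x,x^0\rangle_\mathbb{L}|$, so the ratio always lies in $(-1,1)$. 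Next, from the proof of Proposition \ref{prop:wasserstein_geodesics}, $t^v(\gamma_v(s))=\mathrm{sign}(\langle\gamma_v(s),v\rangle_\mathbb{L})\,d_\mathbb{L}(\gamma_v(s),x^0)=\mathrm{sign}(\sinh s)\,|s|=s$, with $v$ oriented along $\gamma_v$ as there; hence $t^v(P^v(x))=s(x)$.

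Then, using that $\tanh$ is injective and that $\langle x,x^0\rangle_\mathbb{L}\neq 0$, $\cosh t\neq 0$ let one clear denominators,
\begin{equation*}
    t^v(P^v(x))=t \iff -\frac{\langle x,v\rangle_\mathbb{L}}{\langle x,x^0\rangle_\mathbb{L}}=\tanh(t) \iff \sinh(t)\,\langle x,x^0\rangle_\mathbb{L}+\cosh(t)\,\langle x,v\rangle_\mathbb{L}=0,
\end{equation*}
and by bilinearity the right-hand side reads $\langle x, v_z\rangle_\mathbb{L}=0$ with $v_z:=\sinh(t)\,x^0+\cosh(t)\,v$. Intersecting with $x\in\mathbb{L}^d$ yields exactly $\{x\in\mathbb{L}^d:\ t^v(P^v(x))=t\}=\mathrm{span}(v_z)^\bot\cap\mathbb{L}^d$, the orthogonal complement taken with respect to $\langle\cdot,\cdot\rangle_\mathbb{L}$.

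It then remains to interpret $v_z$. One has $v_z=\gamma_v'(t)=\sinh(t)x^0+\cosh(t)v$, i.e. $v_z$ is the image of $v=\gamma_v'(0)$ under the hyperbolic rotation (Lorentz boost) of hyperbolic angle $t$ inside the $2$-plane $\mathrm{span}(v,x^0)$, which is the $R_z$ of the statement; moreover $\langle v_z,v_z\rangle_\mathbb{L}=-\sinh^2 t+\cosh^2 t=1$, so $v_z$ is unit spacelike and $\mathrm{span}(v_z)^\bot\cap\mathbb{L}^d$ is a genuine $(d-1)$-dimensional totally geodesic copy of $\mathbb{H}^{d-1}$, orthogonal to $\gamma_v$ at $\gamma_v(t)$, as geometry predicts. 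There is no serious obstacle: the only care needed is the sign/orientation bookkeeping that makes $t^v\circ\gamma_v=\mathrm{id}$, plus the elementary check that the ratio defining $s(x)$ stays in $(-1,1)$. The point of the statement is the payoff: $\Bar{R}f(t,v)$ is an integral of $f$ over a totally geodesic hyperplane of $\mathbb{L}^d$, so $\Bar{R}$ is (a reparametrisation of) the totally geodesic hyperplane Radon transform on $\mathbb{H}^d$, whose injectivity and support theorems are available in the literature.
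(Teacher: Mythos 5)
Your argument is correct, and it takes a genuinely different route from the paper's. The paper first uses that $t^v$ is an isometry on the geodesic to rewrite the level set as $\{x\in\mathbb{L}^d:\ P^v(x)=z\}$ for the unique $z=\gamma_v(t)$, and then proves two inclusions: the forward one by computing the Euclidean orthogonal projection $P^E(x)=\langle x,v\rangle v+\langle x,x^0\rangle x^0$ onto $\mathrm{span}(x^0,v)$ and pairing it with $v_z$, and the reverse one by decomposing $x=\lambda z+x^\bot$, showing $P^v(x)=\mathrm{sign}(\lambda)z$, and ruling out $-z$ since $-z\notin\mathbb{L}^d$. You instead collapse the whole condition into one linear equation,
\begin{equation*}
    t^v(P^v(x))=t \iff \langle x,\ \sinh(t)x^0+\cosh(t)v\rangle_\mathbb{L}=0,
\end{equation*}
which identifies the set as a hyperplane through the origin intersected with $\mathbb{L}^d$ in a single stroke; this is shorter, avoids the sign bookkeeping of the reverse inclusion, and hands you the normal vector explicitly as $\gamma_v'(t)$, together with the check that it is unit spacelike. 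Your preliminary verifications (that the ratio $-\langle x,v\rangle_\mathbb{L}/\langle x,x^0\rangle_\mathbb{L}$ lies in $(-1,1)$ and that $t^v\circ\gamma_v=\mathrm{id}$ under the stated orientation) are exactly the points that need care and are handled correctly. The one reconciliation worth making explicit: the paper's $v_z=R_zv$ is a Euclidean rotation of $v$ in $\mathrm{span}(x^0,v)$ and the complement $\mathrm{span}(v_z)^\bot$ is taken for the Euclidean inner product, whereas your $v_z=\sinh(t)x^0+\cosh(t)v$ is the Lorentz boost of $v$ and your complement is Minkowski; since $\langle x,w\rangle_\mathbb{L}=\langle x,Jw\rangle$ with $J=\mathrm{diag}(-1,1,\dots,1)$ and $J(\sinh(t)x^0+\cosh(t)v)=-\sinh(t)x^0+\cosh(t)v$ is a positive multiple of the paper's $v_z$, the two hyperplanes (hence the two statements) coincide, but as written your last paragraph slightly misidentifies the paper's $R_z$ as the boost itself.
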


\begin{proof}
    We will prove this proposition directly by working on the geodesics. As $t^v$ is an isometry (Proposition \ref{prop:wasserstein_geodesics}), for all $t\in\mathbb{R}$, there exists a unique $z$ on the geodesic $\mathrm{span}(x^0,v)\cap \mathbb{L}^d$ such that $t=t^v(z)$, and we can rewrite the set of integration as
    \begin{equation}
        \{x\in\mathbb{L}^d,\ P^v(x) = t\} = \{x\in \mathbb{L}^d,\ \Tilde{P}^v(x)=z\}.
    \end{equation}

    For the first inclusion, let $x\in\{x\in\mathbb{L}^d,\ \Tilde{P}^v(x)=z\}$. By Proposition \ref{prop:hsw_geodesic_proj} and hypothesis, we have that
    \begin{equation} \label{eq:proj}
        \Tilde{P}^v(x) = \frac{1}{\sqrt{\langle x,x^0\rangle_\mathbb{L}^2-\langle x,v\rangle_\mathbb{L}^2}} \big(-\langle x,x^0\rangle_\mathbb{L} x^0 + \langle x,v\rangle_\mathbb{L} v\big) = z.
    \end{equation}
    Let's denote $E=\mathrm{span}(v,x^0)$ the plan generating the geodesic. Then, by denoting $P^E$ the orthogonal projection on $E$, we have
    \begin{equation}
        \begin{aligned}
            P^E(x) &= \langle x, v\rangle v + \langle x,x^0\rangle x^0 \\
            &= \langle x,v\rangle_\mathbb{L} v - \langle x,x^0\rangle_\mathbb{L} x^0 \\
            &= \left(\sqrt{\langle x,x^0\rangle_\mathbb{L}^2-\langle x,v\rangle_\mathbb{L}^2}\right) z,
        \end{aligned}
    \end{equation}
    using that $v_0=0$ since $\langle x^0, v\rangle = v_0 = 0$, and hence $\langle x,v\rangle_\mathbb{L}=\langle x,v\rangle$, that $\langle x,x^0\rangle = x_0 = -\langle x,x^0\rangle_\mathbb{L}$ and \eqref{eq:proj}.
    Then, since $v_z\in\mathrm{span}(v, x^0)$ and $\langle z,v_z\rangle = 0$ (by construction of $R_z$), we have
    \begin{equation}
        \begin{aligned}
            \langle x,v_z\rangle &= \langle P^E(x), v_z\rangle  \\
            &= \langle \left(\sqrt{\langle x,x^0\rangle_\mathbb{L}^2-\langle x,v\rangle_\mathbb{L}^2}\right) z, v_z\rangle = 0.
        \end{aligned}
    \end{equation}
    Thus, $x\in \mathrm{span}(v_z)^\bot\cap \mathbb{L}^d$.
    
    For the second inclusion, let $x\in\mathrm{span}(v_z)^\bot \cap \mathbb{L}^d$. Since $z\in\mathrm{span}(v_z)^\bot$ (by construction of $R_z$), we can decompose $\mathrm{span}(v_z)^\bot$ as $\mathrm{span}(v_z)^\bot = \mathrm{span}(z)\oplus (\mathrm{span}(z)^\bot \setminus \mathrm{span}(v_z))$. Hence, there exists $\lambda\in\mathbb{R}$ such that $x=\lambda z + x^\bot$. Moreover, as $z\in\mathrm{span}(x^0,v)$, we have $\langle x,x^0\rangle_\mathbb{L} = \lambda \langle z, x^0\rangle_\mathbb{L}$ and $\langle x,v\rangle_\mathbb{L}=\langle x,v\rangle = \lambda \langle z,v\rangle = \lambda \langle z,v\rangle_\mathbb{L}$.
    Thus, the projection is
    \begin{equation}
        \begin{aligned}
            \Tilde{P}^v(x) &= \frac{1}{\sqrt{\langle x,x^0\rangle_\mathbb{L}^2-\langle x,v\rangle_\mathbb{L}^2}} \big(-\langle x,x^0\rangle_\mathbb{L} x^0 + \langle x,v\rangle_\mathbb{L} v\big) \\
            &= \frac{\lambda}{|\lambda|} \frac{1}{\sqrt{\langle z,x^0\rangle_\mathbb{L}^2-\langle z,v\rangle_\mathbb{L}^2}} \big(-\langle z,x^0\rangle_\mathbb{L} x^0 + \langle z,v\rangle_\mathbb{L} v\big) \\
            &= \frac{\lambda}{|\lambda|} z = \mathrm{sign}(\lambda) z.
        \end{aligned}
    \end{equation}
    But, $-z\notin \mathbb{L}^d$, hence necessarily, $\Tilde{P}^v(x) = z$.
    
    Finally, we can conclude that $\{x\in\mathbb{L}^d,\ \Tilde{P}^v(x)=z\} = \mathrm{span}(v_z)^\bot \cap \mathbb{L}^d$.
\end{proof}

From the previous proposition, we see that the Radon transform $\Bar{R}$ integrates over hyperplanes intersected with $\mathbb{L}^d$, which are totally geodesic submanifolds. This corresponds actually to the hyperbolic Radon transform first introduced by \citet{helgason1959differential} and studied more recently for example in \citep{berenstein1999radon, rubin2002radon, berenstein2004totally}. However, to the best of our knowledge, its injectivity over the set of measures has not been studied yet.

\paragraph{Radon transform for HHSW.} We can derive a Radon transform associated to HHSW in the same way. Moreover, the integration set can be intuitively derived as the level set of the Busemann function, since we project on the only point on the geodesic which has the same Busemann coordinates. Since the level sets of the Busemann functions correspond to horospheres, the associate Radon transform is the horospherical Radon transform. It has been for example studied by \citet{bray1999inversion, bray2019radon} on the Lorentz model, and by \citet{casadio2021radon} on the Poincaré ball. Note that it is also known as the Gelfand-Graev transform \citep{gelfand1966generalized}.

\subsection{Statistical Properties} \label{appendix:sample_complexity}

\paragraph{Sample Complexity.}

By adapting the proof of \citep[Corollary 2]{nadjahi2020statistical}, we derive a sample complexity in \cref{prop:sample_complexity} for both $GHSW_p$ and $HHSW_p$. Interestingly, they are similar up to some constant. Moreover, similarly as the Euclidean SW distance, they are independent of the dimension.

\begin{proposition} \label{prop:sample_complexity}
    Let $p\ge 1$, $q>p$ and $\mu,\nu\in\mathcal{P}_p(\mathbb{L}^d)$. Denote $\hat{\mu}_n$ and $\hat{\nu}_n$ their counterpart empirical measures and $M_q(\mu) = \int_{\mathbb{L}^d} d(x,x^0)^q\ \mathrm{d}\mu(x)$ the moments of order $q$. Then, there exists $C_{p,q}$ a constant depending only on $p$ and $q$ such that
    \begin{equation}
        \mathbb{E}\left[|GHSW_p(\hat{\mu}_n,\hat{\nu}_n) - GHSW_p(\mu,\nu)|\right] \le 2^{q/p} C_{p,q}^{1/p} (M_q(\mu)^{1/q}+M_q(\nu)^{1/q}) 
        \begin{cases}
            n^{-1/(2p)} \ \text{ if } q>2p \\
            n^{-1/(2p)}\log(n)^{1/p} \ \text{ if } q=2p \\
            n^{-(q-p)/(pq)} \ \text{ if } q\in (p,2p).
        \end{cases}
    \end{equation}
    Similarly, with $C_{p,q}$ a possible another constant,
    \begin{equation}
        \mathbb{E}\left[|HHSW_p(\hat{\mu}_n,\hat{\nu}_n) - HHSW_p(\mu,\nu)|\right] \le 2C_{p,q}^{1/p} (M_q(\mu)^{1/q}+M_q(\nu)^{1/q}) 
        \begin{cases}
            n^{-1/(2p)} \ \text{ if } q>2p \\
            n^{-1/(2p)}\log(n)^{1/p} \ \text{ if } q=2p \\
            n^{-(q-p)/(pq)} \ \text{ if } q\in (p,2p).
        \end{cases}
    \end{equation}
\end{proposition}

\begin{proof}
    For this proof, we first need to recall the following lemma adapted from \citep[Theorem 2]{fournier2015rate} and reported \emph{e.g.} in \citep{rakotomamonjy2021statistical}.
    \begin{lemma}[Lemma 1 in \citep{rakotomamonjy2021statistical}] \label{lemma:fournier}
        Let $p\ge 1$ and $\eta\in\mathcal{P}_p(\mathbb{R})$. Denote $\Tilde{M}_q(\eta)=\int |x|^q\ \mathrm{d}\eta(x)$ the moments of order $q$ and assume that $M_q(\eta)<\infty$ for some $q>p$. Then, there exists a constant $C_{p,q}$ depending only on $p,q$ such that for all $n\ge 1$,
        \begin{equation}
            \mathbb{E}[W_p^p(\hat{\eta}_n,\eta)] \le C_{p,q} \Tilde{M}_q(\eta)^{p/q}\left(n^{-1/2}\mathbb{1}_{\{q>2p\}} + n^{-1/2}\log(n) \mathbb{1}_{\{q=2p\}} + n^{-(q-p)/q} \mathbb{1}_{\{q\in(p,2p)\}}\right) \enspace .
        \end{equation}
    \end{lemma}
    
    Now, we will first deal with $GHSW_p$. First, let us observe that by the triangular and reverse triangular inequalities, as well as Jensen for $x\mapsto x^{1/p}$ (which is concave since $p\ge 1$),
    \begin{equation}
        \begin{aligned}
            \mathbb{E}\left[|\mathrm{GHSW}_p(\hat{\mu}_n,\hat{\nu}_n) - \mathrm{GHSW}_p(\mu,\nu)|\right] &= \mathbb{E}[|\mathrm{GHSW}_p(\hat{\mu}_n,\hat{\nu}_n)-\mathrm{GHSW}_p(\hat{\mu}_n,\nu) + \mathrm{GHSW}_p(\hat{\mu}_n,\nu) - \mathrm{GHSW}_p(\mu,\nu)|] \\
            &\le \mathbb{E}[|\mathrm{GHSW}_p(\hat{\mu}_n,\hat{\nu}_n) - \mathrm{GHSW}_p(\hat{\mu}_n,\nu)|] + \mathbb{E}[|\mathrm{GHSW}_p(\hat{\mu}_n,\nu)-\mathrm{GHSW}_p(\mu,\nu)|] \\
            &\le \mathbb{E}[\mathrm{GHSW}_p(\hat{\nu}_n,\nu)] + \mathbb{E}[\mathrm{GHSW}_p(\hat{\mu}_n,\mu)] \\
            &\le \mathbb{E}[\mathrm{GHSW}_p^p(\hat{\nu}_n,\nu)]^{1/p} + \mathbb{E}[\mathrm{GHSW}_p^p(\hat{\mu}_n,\mu)]^{1/p}.
        \end{aligned}
    \end{equation}
    Moreover, by Fubini-Tonelli,
    \begin{equation}
        \begin{aligned}
            \mathbb{E}[\mathrm{GHSW}_p^p(\hat{\mu}_n,\mu)] &= \mathbb{E}\left[\int_{T_{x^0}\mathbb{L}^d \cap S^d} W_p^p(P^v_\#\hat{\mu}_n,P^v_\#\mu)\ \mathrm{d}\lambda(v)\right] \\
            &= \int_{T_{x^0}\mathbb{L}^d\cap S^d} \mathbb{E}[W_p^p(P^v_\#\hat{\mu}_n,P^v_\#\mu)]\ \mathrm{d}\lambda(v).
        \end{aligned}
    \end{equation}
    By applying Lemma \ref{lemma:fournier}, we get for $q>p$ that there exists a constant $C_{p,q}$ such that,
    \begin{equation}
        \mathbb{E}[W_p^p(P^v_\#\hat{\mu}_n, P^v_\#\mu)] \le C_{p,q} \Tilde{M}_q(P^v_\#\mu)^{p/q}\left(n^{-1/2}\mathbb{1}_{\{q>2\}} + n^{-1/2}\log(n) \mathbb{1}_{\{q=2p\}} + n^{-(q-p)/q} \mathbb{1}_{\{q\in(p,2p)\}}\right).
    \end{equation}
    Furthermore, using \citep[Defintion 6.4]{villani2009optimal}, \emph{i.e.} that
    \begin{equation}
        \forall x,y,x_0\in\mathbb{L}^d,\ d_{\mathbb{L}}(x,y)^p \le 2^{p-1}\left(d_{\mathbb{L}}(x,x_0) + d_\mathbb{L}(x_0,y)\right),
    \end{equation}
    we obtain 
    \begin{equation}
        d(\Tilde{P}^v(x),x^0)^q \le 2^{q-1}\left(d(\Tilde{P}^v(x),x)^q +d(x,x^0)^q\right),
    \end{equation}
    and by definition of $\Tilde{P}^v$, $d_{\mathbb{L}}(\Tilde{P}^v(x),x) \le d_{\mathbb{L}}(x^0,x)$. Hence, remembering that $t^v(x) = \mathrm{sign}(\langle x, v\rangle) d_{\mathbb{L}}(x,x^0)$ and $P^v(x) = t^v(\Tilde{P}^v(x))$, we have
    \begin{equation}
        \begin{aligned}
            \Tilde{M}_q(P^v_\#\mu) &= \int_{\mathbb{R}} |x|^q\ \mathrm{d}(P^v_\#\mu)(x) \\
            &= \int_{\mathbb{L}^d} |P^v(x)|^q\ \mathrm{d}\mu(x) \\
            &= \int_{\mathbb{L}^d} |t^v(\Tilde{P}^v(x))|^q\ \mathrm{d}\mu(x) \\
            &= \int_{\mathbb{L}^d} d_{\mathbb{L}}(\Tilde{P}^v(x),x^0)^q\ \mathrm{d}\mu(x) \\
            &\le 2^{q-1} \left(\int_{\mathbb{L}^d} d_{\mathbb{L}}(\Tilde{P}^v(x), x)^q\ \mathrm{d}\mu(x) + \int_{\mathbb{L}^d} d_{\mathbb{L}}(x,x^0)^q\ \mathrm{d}\mu(x) \right) \\
            &\le 2^q \int_{\mathbb{L}^q} d_{\mathbb{L}}(x,x^0)^q\ \mathrm{d}\mu(x) = 2^q M_q(\mu).
        \end{aligned}
    \end{equation}
    Therefore, we have that
    \begin{equation}
        \begin{aligned}
            \mathbb{E}[\mathrm{GHSW}_p^p(\hat{\mu}_n,\mu)] \le 2^q C_{p,q} M_q(\mu)^{p/q} \left(n^{-1/2}\mathbb{1}_{\{q>2p\}} + n^{-1/2}\log(n) \mathbb{1}_{\{q=2p\}} + n^{-(q-p)/q} \mathbb{1}_{\{q\in(p,2p)\}}\right),
        \end{aligned}
    \end{equation}
    and similarly
    \begin{equation}
        \begin{aligned}
            \mathbb{E}[\mathrm{GHSW}_p^p(\hat{\nu}_n,\nu)] \le C_{p,q} M_q(\nu)^{p/q} \left(n^{-1/2}\mathbb{1}_{\{q>2p\}} + n^{-1/2}\log(n) \mathbb{1}_{\{q=2p\}} + n^{-(q-p)/q} \mathbb{1}_{\{q\in(p,2p)\}}\right).
        \end{aligned}
    \end{equation}
    Hence, we conclude that the sample complexity is 
    \begin{equation}
        \mathbb{E}\left[|\mathrm{GHSW}_p(\hat{\mu}_n,\hat{\nu}_n) - \mathrm{GHSW}_p(\mu,\nu)|\right] \le 2^{q/p} C_{p,q}^{1/p} \big(M_q(\mu)^{1/q} + M_q(\nu)^{1/q}\big) 
        \begin{cases}
            n^{-1/(2p)} \ \text{ if } q>2p \\
            n^{-1/(2p)}\log(n)^{1/p} \ \text{ if } q=2p \\
            n^{-(q-p)/(pq)} \ \text{ if } q\in (p,2p).
        \end{cases}
    \end{equation}
    
    Now, we can also do the same proof for $HHSW_p$. By using pseudo-distance properties, we also get 
    \begin{equation}
        \mathbb{E}\left[|\mathrm{HHSW}_p(\hat{\mu}_n,\hat{\nu}_n) - \mathrm{HHSW}_p(\mu,\nu)|\right] \le  \mathbb{E}[\mathrm{HHSW}_p^p(\hat{\nu}_n,\nu)]^{1/p} + \mathbb{E}[\mathrm{HHSW}_p^p(\hat{\mu}_n,\mu)]^{1/p},
    \end{equation}
    and with Fubini-Tonelli,
    \begin{equation}
        \mathbb{E}[HHSW_p^p(\hat{\mu}_n,\mu)] = \int_{S^{d-1}} \mathbb{E}[W_p^p(t^v_\#\Tilde{P}^v_\#\hat{\mu}_n, t^v_\#\Tilde{P}^v_\#\mu)]\ \mathrm{d}\lambda(v).
    \end{equation}
    Then, by Lemma \ref{lemma:fournier}, we have that for $q>p$, there exists a constant $C_{p,q}$ such that,
    \begin{equation}
        \begin{aligned}
            \mathbb{E}[W_p^p(B^v_\#\hat{\mu}_n,B^v_\#\mu)] &\le C_{p,q} \Tilde{M}_q(B^v_\#\mu)^{p/q}\left(n^{-1/2}\mathbb{1}_{\{q>2\}} + n^{-1/2}\log(n) \mathbb{1}_{\{q=2p\}} + n^{-(q-p)/q} \mathbb{1}_{\{q\in(p,2p)\}}\right).
            % &= C_{p,q} \Tilde{M}_q((B^v)_\#\mu)^{p/q}\left(n^{-1/2}\mathbb{1}_{\{q>2\}} + n^{-1/2}\log(n) \mathbb{1}_{\{q=2p\}} + n^{-(q-p)/q} \mathbb{1}_{\{q\in(p,2p)\}}\right).
        \end{aligned}
    \end{equation}
    But, as $B^v$ is 1-Lipschitz, and $B^v(x^0)=0$, we have that $|B^v(x)-B^v(x^0)|=|B^v(x)|\le d_{\mathbb{L}}(x,x^0)$ for all $x\in\mathbb{L}^d$. Hence,
    \begin{equation}
        \Tilde{M}_q(B^v_\#\mu) = \int_{\mathbb{L}^d} |B^v(x)|^q\ \mathrm{d}\mu(x) \le \int_{\mathbb{L}^d} d_{\mathbb{L}}(x,x^0)^q\ \mathrm{d}\mu(x) = M_q(\mu),
    \end{equation}
    and therefore
    \begin{equation}
        \mathbb{E}\left[|HHSW_p(\hat{\mu}_n,\hat{\nu}_n) - HHSW_p(\mu,\nu)|\right] \le 2C_{p,q}^{1/p} (M_q(\mu)^{1/q}+M_q(\nu)^{1/q}) 
        \begin{cases}
            n^{-1/(2p)} \ \text{ if } q>2p \\
            n^{-1/(2p)}\log(n)^{1/p} \ \text{ if } q=2p \\
            n^{-(q-p)/(pq)} \ \text{ if } q\in (p,2p).
        \end{cases}
    \end{equation}
\end{proof}

\paragraph{Projection Complexity.}

The integral \emph{w.r.t} the uniform measure on $S^{d-1}$ is unfortunately intractable, and therefore is required to be approximated by a Monte-Carlo scheme. In Proposition \ref{prop:projection_complexity}, we report the Monte-Carlo error of this approximation. We call this error the projection complexity. We recover here the same rate as \citep{nadjahi2020statistical} in the Euclidean case. Since the proposition and the proof are the same for $GHSW_p$ and $HHSW_p$, we do it for both in the same time by denoting $HSW_p$ in place of $GHSW_p$ or $HHSW_p$, and denoting by $P^v$ the corresponding projection with $v\in T_{x^0}\mathbb{L}^d\cap S^d$.

\begin{proposition}
    \label{prop:projection_complexity}
    Let $p\ge 1$, $\mu,\nu\in \mathcal{P}_{p}(\mathbb{L}^d)$. We denote $HSW_p$ for both $HHSW_p$ and $GHSW_p$. Then, the error made by the Monte Carlo estimate of $\mathrm{HSW}_p$ with L projections can be bounded as follows
    \begin{equation}
        \begin{aligned}
            \mathbb{E}_v\left[|\widehat{\mathrm{HSW}}_{p,L}^p(\mu,\nu)-\mathrm{HSW}_p^p(\mu,\nu)|\right]^2 
            &\le \frac{1}{L} \int_{T_{x^0}\mathbb{L}^d\cap S^d} \left(W_p^p(P^v_\#\mu, P^v_\#\nu)-\mathrm{HSW}_p^p(\mu,\nu)\right)^2 \ \mathrm{d}\lambda(v) \\
            &= \frac{1}{L}\mathrm{Var}_{v \sim \lambda}\left[W_p^p(P^v_\#\mu, P^v_\#\nu)\right],
        \end{aligned}
    \end{equation}
    where $\widehat{\mathrm{HSW}}_{p,L}^p(\mu,\nu) = \frac{1}{L}\sum_{i=1}^L W_p^p(P^{v_i}_\#\mu, P^{v_i}_\#\nu)$ with $(v_i)_{i=1}^L$ independent samples from $\lambda$.
\end{proposition}

\begin{proof}
    Let $(v_i)_{i=1}^L$ be iid samples of $\lambda$. Then, by first using Jensen inequality and then remembering that $\mathbb{E}_v[W_p^p(P^v_\#\mu, P^v_\#\nu)]=\mathrm{HSW}_p^p(\mu,\nu)$, we have
    \begin{equation}
        \begin{aligned}
            \mathbb{E}_v\left[|\widehat{\mathrm{HSW}}_{p,L}^p(\mu,\nu)-\mathrm{HSW}_p^p(\mu,\nu)|\right]^2 &\le \mathbb{E}_v\left[\left|\widehat{\mathrm{HSW}}_{p,L}^p(\mu,\nu)-\mathrm{HSW}_p^p(\mu,\nu)\right|^2\right]\\
            &= \mathbb{E}_v\left[\left|\frac{1}{L} \sum_{i=1}^L \big(W_p^p(P^{v_i}_\#\mu, P^{v_i}_\#\nu) - \mathrm{HSW}_p^p(\mu,\nu)\big)\right|^2\right] \\
            &= \frac{1}{L^2} \mathrm{Var}_v\left[\sum_{i=1}^L W_p^p(P^{v_i}_\#\mu, P^{v_i}_\#\nu)\right] \\
            &= \frac{1}{L} \mathrm{Var}_v\left[W_p^p(P^v_\#\mu, P^v_\#\nu)\right] \\
            &= \frac{1}{L} \int_{T_{x^0}\mathbb{L}^d\cap S^d} \left(W_p^p(P^v_\#\mu, P^v_\#\nu)-\mathrm{HSW}_p^p(\mu,\nu)\right)^2\ \mathrm{d}\lambda(v).
        \end{aligned}
    \end{equation}
\end{proof}

\section{Hyperbolic Spaces} \label{appendix:hyperbolic_space}

In this Section, we first recall different generalization of the Gaussian distribution on Hyperbolic spaces, with a particular focus on Wrapped normal distributions. Then, we recall how to perform Riemannian gradient descent in the Lorentz model and in the Poincaré ball.

\subsection{Distributions on Hyperbolic Spaces}

Let $M$ be a manifold and denote $G$ the corresponding Riemannian metric. For $x\in M$, $G(x)$ induces an infinitesimal change of volume on the tangent space $T_xM$, and thus a measure on the manifold,
\begin{equation*}
    \mathrm{d}\mathrm{Vol}(x) = \sqrt{|G(x)|}\ \mathrm{d}x.
\end{equation*}
We refer to \citep{pennec2006intrinsic} for more details on distributions on manifolds. Now, we recap different generalizations of Gaussian distribution on Riemannian manifolds.

%% WND, Riemannian normal
\paragraph{Riemannian normal.} The first way of naturally generalizing Gaussian distributions to Riemannian manifolds is to use the geodesic distance in the density, which becomes
\begin{equation*}
    f(x) \propto \exp\left(-\frac{1}{2\sigma^2}d_M(x,\mu)^2\right).
\end{equation*}
It is actually the distribution maximizing the entropy \citep{pennec2006intrinsic, said2014new}. However, it is not straightforward to sample from such distribution. For example, \citet{ovinnikov2019poincar} use a rejection sampling algorithm.

\paragraph{Wrapped normal distribution.} A more convenient distribution, on which we can use the parameterization trick, is the Wrapped normal distribution \citep{nagano2019wrapped}. This distribution can be sampled from by first drawing $v\sim \mathcal{N}(0,\Sigma)$ and then transforming it into $v\in T_{x^0}\mathbb{L}^d$ by concatenating a 0 in the first coordinate. Then, we perform parallel transport to transport $v$ from the tangent space of $x^0$ to the tangent space of $\mu\in\mathbb{L}^d$. Finally, we can project the samples on the manifold using the exponential map. We recall the formula of parallel transport form $x$ to $y$:
\begin{equation}
    \forall v\in T_{x}\mathbb{L}^d,\ \mathrm{PT}_{x\to y}(v) = v + \frac{\langle y,v\rangle_\mathbb{L}}{1-\langle x,y\rangle_\mathbb{L}}(x+y).
\end{equation}

Since it only involves differentiable operations, we can perform the parameterization trick and \emph{e.g.} optimize directly over the mean and the variance. Moreover, by the change of variable formula, we can also derive the density \citep{nagano2019wrapped,bose2020latent}. Let $\Tilde{z}\sim\mathcal{N}(0,\Sigma)$, $z=(0,\Tilde{z})\in T_{x^0}\mathbb{L}^d$, $u=\mathrm{PT}_{x^0\to\mu}(z)$, then the density of $x=\exp_{\mu}(u)$ is:
\begin{equation}
    \log p(x) = \log p(\Tilde{z}) - (d-1)\log\left(\frac{\sinh(\|u\|_\mathbb{L})}{\|u\|_\mathbb{L}}\right).
\end{equation}
In the paper, we write $x\sim \mathcal{G}(\mu,\Sigma)$.

\subsection{Optimization on Hyperbolic Spaces} \label{appendix:optim}

For gradient descent on hyperbolic space, we refer to \citep[Section 7.6]{boumal2022intromanifolds} and \citep{wilson2018gradient}. 

In general, for a functional $f:M\to\mathbb{R}$, Riemannian gradient descent is performed, analogously to the Euclidean space, by following the geodesics. Hence, the gradient descent reads as \citep{absil2009optimization, bonnabel2013stochastic}
\begin{equation}
    \forall k\ge 0,\ x_{k+1} = \exp_{x_k}\big(-\gamma \mathrm{grad} f(x_k)\big).
\end{equation}
Note that the exponential map can be replaced more generally by a retraction. We describe in the following paragraphs the different formulae in the Lorentz model and in the Poincaré ball.

\paragraph{Lorentz model.}

Let $f:\mathbb{L}^d \to \mathbb{R}$, then its Riemannian gradient is \citep[Proposition 7.7]{boumal2022intromanifolds}
\begin{equation}
    \mathrm{grad}f(x) = \mathrm{Proj}_x(J\nabla f(x)),
\end{equation}
where $J=\mathrm{diag}(-1,1,\dots,1)$ and $\mathrm{Proj}_x(z) = z + \langle x,z\rangle_\mathbb{L} x$. Furthermore, the exponential map is
\begin{equation}
    \forall v\in T_x\mathbb{L}^d,\ \exp_x(v) = \cosh(\|v\|_\mathbb{L})x + \sinh(\|v\|_\mathbb{L}) \frac{v}{\|v\|_\mathbb{L}}.
\end{equation}

\paragraph{Poincaré ball.}

On $\mathbb{B}^d$, the Riemannian gradient of $f:\mathbb{B}^d\to\mathbb{R}$ can be obtained as \citep[Section 3]{nickel2017poincare}
\begin{equation}
    \mathrm{grad}f(x) = \frac{(1-\|\theta\|_2^2)^2}{4}\nabla f(x).
\end{equation}

\citet{nickel2017poincare} propose to use as retraction $R_x(v) = x+v$ instead of the exponential map, and add a projection, to constrain the value to remain within the Poincaré ball, of the form
\begin{equation}
    \mathrm{proj}(x) = \begin{cases}
        \frac{x}{\|x\|_2} - \epsilon \quad \text{if }\|x\|\ge 1 \\
        x \quad \text{otherwise},
    \end{cases}
\end{equation}
where $\epsilon=10^{-5}$ is a small constant ensuring numerical stability. Hence, the algorithm becomes
\begin{equation}
    x_{k+1} = \mathrm{proj}\left(x_k-\gamma_k \frac{(1-\|x_k\|_2^2)^2}{4} \nabla f(x_k)\right).
\end{equation}

A second solution is to compute directly the exponential map derived in  \citep[Corollary 1.1]{ganea2018cone}:
\begin{equation} \label{eq:exp_poincare}
    \exp_x(v) = \frac{\lambda_x \big(\cosh(\lambda_x\|v\|_2) + \langle x, \frac{v}{\|v\|_2}\rangle \sinh(\lambda_x \|v\|_2)\big) x + \frac{1}{\|v\|_2} \sinh(\lambda_x \|v\|_2)v}{1+(\lambda_x -1)\cosh(\lambda_x\|v\|_2) + \lambda_x \langle x, \frac{v}{\|v\|_2}\rangle \sinh(\lambda_x \|v\|_2)},
\end{equation}
where $\lambda_x = \frac{2}{1-\|x\|_2^2}$.

\section{Additional Details and Experiments} \label{appendix:xps}

\subsection{Comparisons}

\begin{figure*}[t]
    \centering
    \hspace*{\fill}
    \subfloat[$\tau=0.05$]{\label{fig:tree_0.05}\includegraphics[width={0.23\linewidth}]{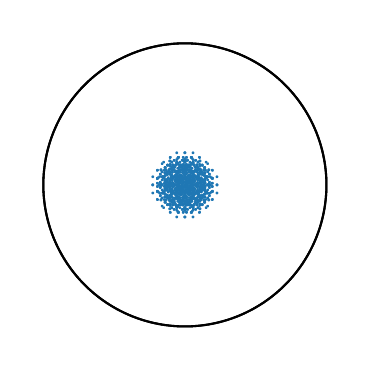}} \hfill
    \subfloat[$\tau=0.25$]{\label{fig:tree_0.25}\includegraphics[width={0.23\linewidth}]{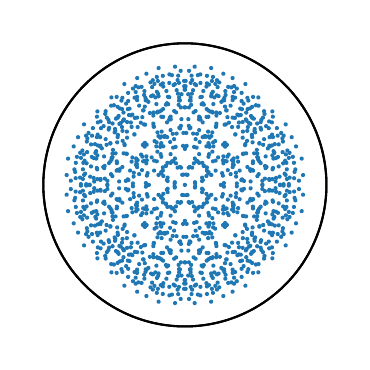}} \hfill
    \subfloat[$\tau=0.5$]{\label{fig:tree_0.50}\includegraphics[width={0.23\linewidth}]{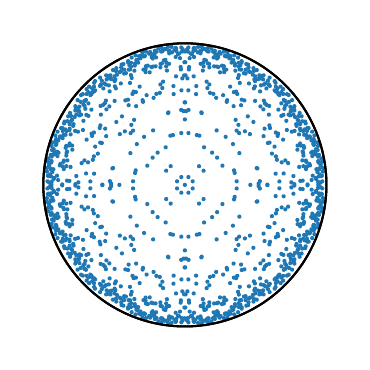}} \hfill
    \subfloat[$\tau=0.8$]{\label{fig:tree_0.80}\includegraphics[width=0.23\linewidth]{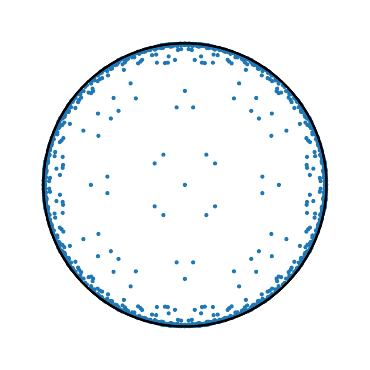}}
    \hspace*{\fill}
    \caption{Embeddings of trees using Sarkar's algorithm with different $\tau$.}
    \label{fig:trees_sarkar}
    % \vspace{-10pt}
\end{figure*}

\begin{figure*}[t]
    \centering
    \hspace*{\fill}
    \subfloat[SW on Poincaré (SWp), GHSW]{\label{fig:w_wnd_tree}\includegraphics[width=0.3\linewidth]{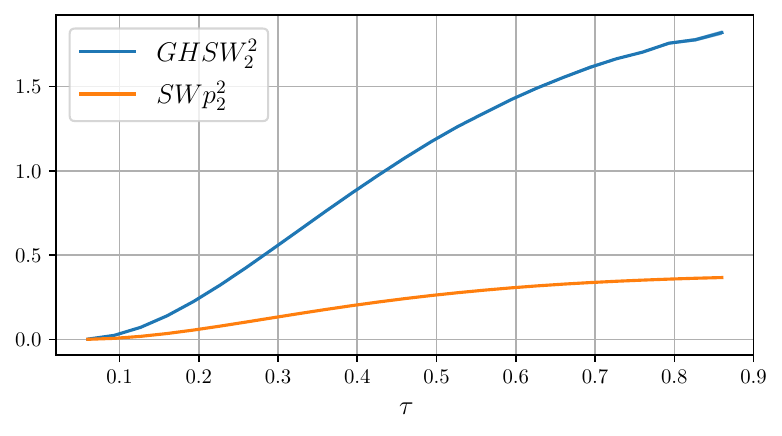}} \hfill
    \subfloat[HHSW and Wasserstein]{\label{fig:hsw_wnd_tree}\includegraphics[width=0.3\linewidth]{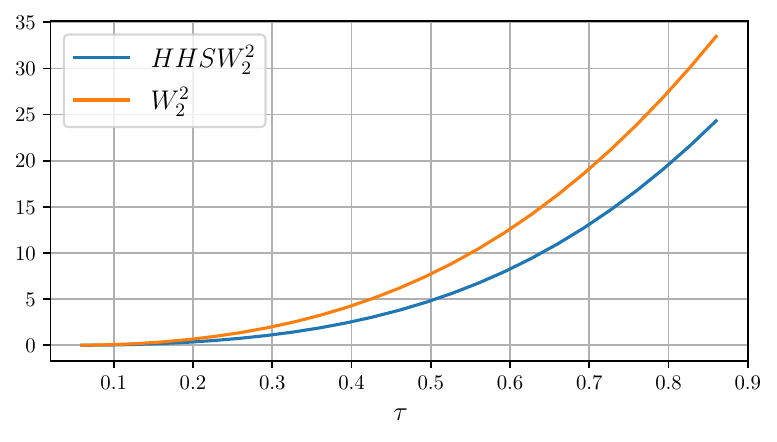}} \hfill
    \subfloat[SW on Lorentz (SWl)]{\label{fig:sw_wnd_lorentz_tree}\includegraphics[width=0.3\linewidth]{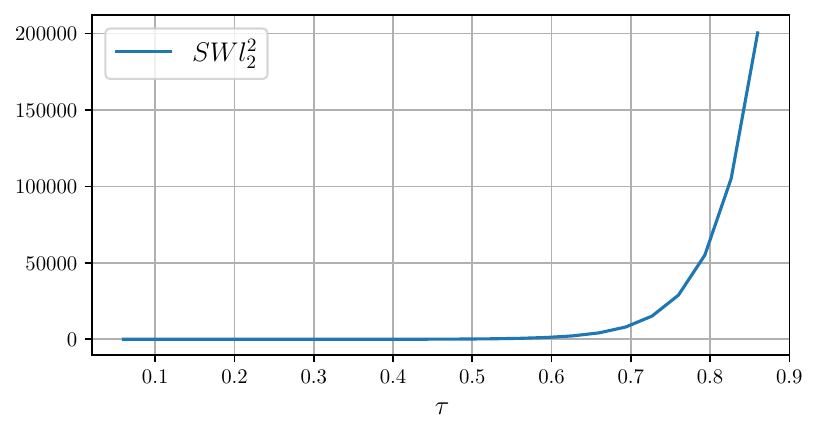}} \hfill
    \hspace*{\fill} \\
    \caption{Comparison of the Wasserstein distance (with the geodesic distance as cost), GHSW, HHSW and SW between embedded trees. We gather the discrepancies together by scale of the values.}
    \label{fig:comparison_trees}
    % \vspace{-15pt}
\end{figure*}

In Section \ref{section:comparison_sw}, we compare the evolution of GHSW, HHSW, SWl, SWp and the Wasserstein distance with geodesic cost between wrapped normal distributions. Here, we add a more ``hyperbolical'' setting in the sense that we compare trees embedded in hyperbolic space. Indeed, it is well known that hyperbolic spaces can be seen as a continuous analog of trees, and are therefore a natural embedding space for trees. 

More precisely, we generate balanced trees using NetworkX \citep{networkx} and embed them with Sarkar's algorithm \citep{sarkar2011low,sala2018representation}. This algorithm takes as input a scaling factor $\tau$ which determines how close to the border will the leaves be. We illustrate such embeddings with different $\tau$ on Figure \ref{fig:trees_sarkar}. We compare in Figure \ref{fig:comparison_trees} the evolution of GHSW, HHSW, SWl and SWp between a tree embedded very close to the origin with $\tau=0.05$ and $\tau$ growing towards 1. We observe here the same evolution than in Section \ref{section:comparison_sw}.

\paragraph{Sample complexity.}

% \begin{figure*}[t]
%     \centering
%     \hspace*{\fill}
%     \subfloat[GHSW]{\label{fig:tree_0.05}\includegraphics[width={0.45\linewidth}]{Figures/Comparisons/Sample Complexity/Comparison_Sample_Complexity_GHSW.pdf}} \hfill
%     \subfloat[HHSW and Wasserstein]{\label{fig:tree_0.25}\includegraphics[width={0.45\linewidth}]{Figures/Comparisons/Sample Complexity/Comparison_Sample_Complexity_HHSW.pdf}} \hfill
%     \hspace*{\fill}
%     \caption{Sample complexity of GHSW, HHSW and W.}
%     \label{fig:sample_complexity}
%     % \vspace{-10pt}
% \end{figure*}

\begin{figure}
    \centering
    \includegraphics[width=0.35\linewidth]{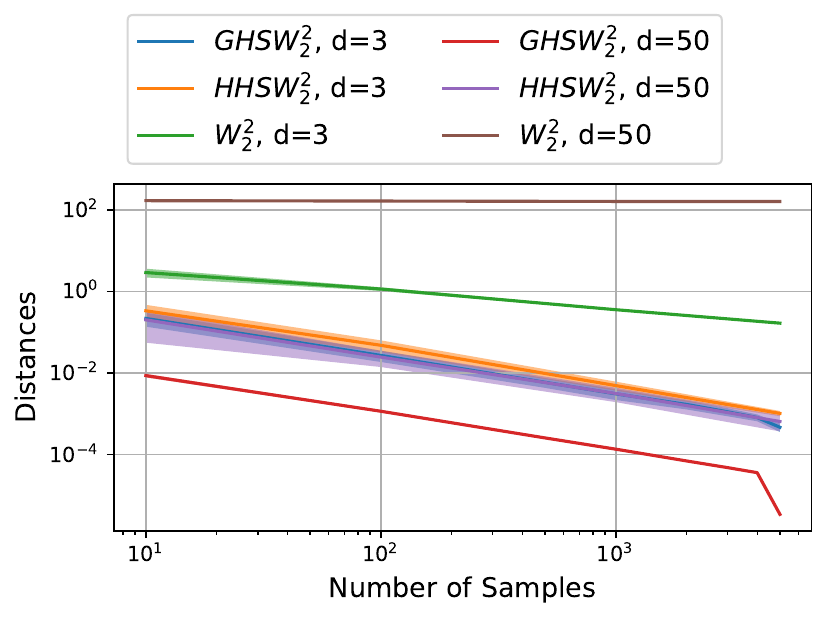}
    \caption{Sample complexity of GHSW, HHSW and Wasserstein with geodesic distance. GHSW and HHSW have the same convergence rate in dimension 3 and 50.}
    \label{fig:sample_complexity}
\end{figure}

We showed in \cref{prop:sample_complexity} that the sample complexity of $HHSW_p$ and $GHSW_p$ does not depend on the dimension. We verify here on Figure \ref{fig:sample_complexity} empirically this property for GHSW and HHSW between two set of samples drawn from $\mathcal{G}(x^0,I_2)$, and computed with 1000 projections. In dimension 3 and 50, HHSW and GHSW have the same convergence speed \emph{w.r.t.} the number of samples, which is not the case for the Wasserstein distance which suffers from the curse of dimensionality.

\subsection{Gradient flows.} \label{appendix:gradient_flows}

Denoting $\nu$ the target distribution from which we have access to samples $(y_i)_{i=1}^m$, we aim at learning $\nu$ by solving the following optimization problem:
\begin{equation}
    \mu = \argmin_{\mu}\ HSW\left(\mu, \frac{1}{m}\sum_{i=1}^m \delta_{x_i}\right).
\end{equation}
As we cannot directly learn $\mu$, we model it as $\hat{\mu}=\frac{1}{n}\sum_{i=1}^n \delta_{x_i}$, and then learn the sample locations $(x_i)_{i=1}^n$ using a Riemannian gradient descent which we described in Appendix \ref{appendix:optim}. In practice, we take $n=500$ and use batchs of $500$ target samples at each iteration. To compute the sliced discrepancies, we always use 1000 projections. On Figure \ref{fig:comparison_gradientflows}, we plot the log 2-Wasserstein with geodesic cost between the model measure $\hat{\mu}_k$ at each iteration $k$ and $\nu$. We average over 5 runs of each gradient descent. Now, we describe the specific setting for the different targets.

\paragraph{Wrapped normal distribution.} For the first experiment, we choose as target a wrapped normal distribution $\mathcal{G}(m,\Sigma)$. In the fist setting, we use $m=(1.5,1.25,0)\in\mathbb{L}^2$ and $\Sigma=0.1 I_2$. In the second, we use $m=(8,\sqrt{63},0)\in\mathbb{L}^2$ and $\Sigma = 0.1 I_2$. The learning rate is fixed as 5 for the different discrepancies, except for SWl on the second WND which lies far from origin, and for which we exhibit numerical instabilities with a learning rate too high. Hence, we reduced it to 0.1. We observed the same issue for HHSW on the Lorentz model. Fortunately, the Poincaré version, which is equal to the Lorentz version, did not suffer from these issues. It underlines the benefit of having both formulations. 

On Figure \ref{fig:evolution_hsw_particles}, we plotted the evolution of the particles for HHSW and GHSW with a target with mean $m=(8,\sqrt{63},0)$ and $\Sigma=\begin{pmatrix}1 & \frac12 \\ \frac12 & 1\end{pmatrix}$. For GHSW, we use a learning rate of 10, and for HSHW a learning rate of 100. We observe that the trajectories are differents. With geodesic projections, the particles go towards the target by passing through the origin, while with horospherical projections, the tend first to leave the origin.

\begin{figure}[t]
    \centering
    \hspace*{\fill}
    \subfloat[With geodesic projection.]{\label{fig:evolution_ghsw_particles}\includegraphics[width=\columnwidth]{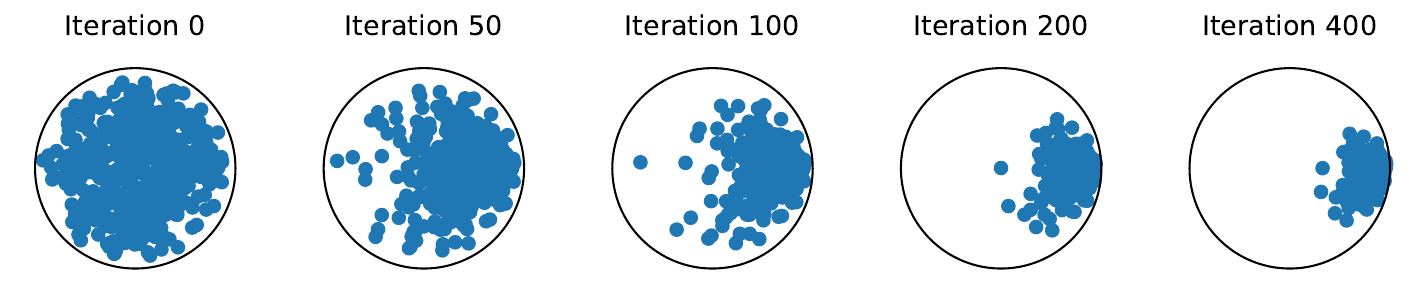}} 
    \hspace*{\fill} \\
    \hspace*{\fill}
    \subfloat[With horospherical projection.]{\label{fig:evolution_hhsw_partices}\includegraphics[width=\columnwidth]{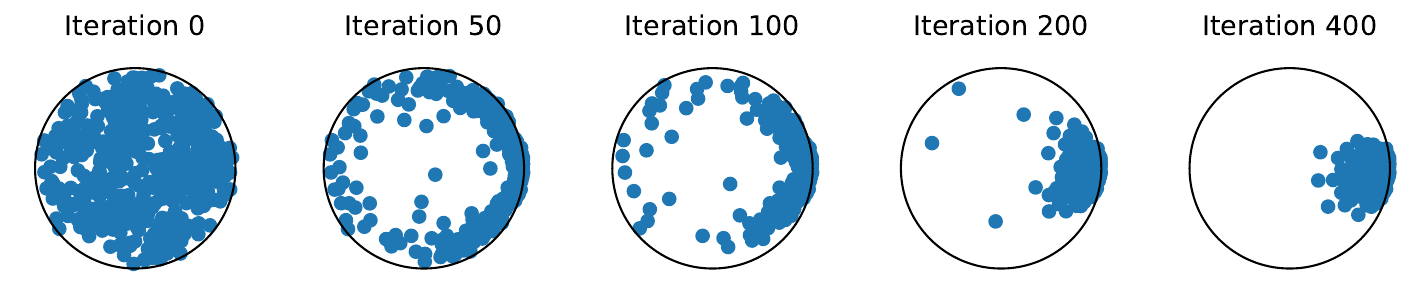}}
    \hspace*{\fill}
    \caption{Evolution of the particles along the gradient flow of HSW (with geodesic or horospherical projection).}
    \label{fig:evolution_hsw_particles}
\end{figure}

\paragraph{Mixture of wrapped normal distributions.} For the second experiment, the target is a mixture of 5 WNDs. The covariance are all taken equal as $0.01 I_2$. For the first setting, the outlying means are (on the Poincaré ball) $m_1=(0,-0.5)$, $m_2=(0,0.5)$, $m_3=(0.5,0)$, $m_4=(-0.5,0)$ and the center mean is $m_5 = (0,0.1)$. In the second setting, the outlying means are $m_1=(0,-0.9)$, $m_2=(0,0.9)$, $m_3=(0.9,0)$ and $m_4=(-0.9,0)$. We use the same $m_5$. The learning rate in this experiment is fixed at 1 for all discrepancies.

\subsection{Classification of Images with Busemann}

Denote $\{(x_i,y_i)_{i=1}^n\}$ the training set where $x_i\in\mathbb{R}^m$ and $y_i\in\{1,\dots,C\}$ is a label. The embedding is performed by using a neural network $f_\theta$ and the exponential map at the last layer, which projects the points on the Poincaré ball, \emph{i.e.} for $i\in\{1,\dots,n\}$, the embedding of $x_i$ is $z_i = \exp_0\big(f_\theta(z_i)\big)$, where $\exp_0$ is given by \eqref{eq:exp_poincare}, or more simply by
\begin{equation}
    \exp_0(x) = \tanh\left(\frac{\|x\|_2}{2}\right)\frac{x}{\|x\|_2}.
\end{equation}

The experimental setting of this experiment is the same as \citep{ghadimi2021hyperbolic}. That is, we use a Resnet-32 backbone and optimize it with Adam \citep{kingma2014adam}, a learning rate of 5e-4, weight decay of 5e-5, batch size of 128 and without pre-training. The network is trained for all experiments for 1110 epochs with learning rate decay of 10 after 1000 and 1100 epochs. Moreover, the $C$ prototypes are given by the algorithm of \citep{mettes2019hyperspherical} and are uniform on the sphere $S^{d-1}$.

For the additional hyperparameters in the loss \eqref{eq:loss_hsw}, we use by default $\lambda = 1$, and a mixture of $C$ wrapped normal distributions with means $\alpha p_c$, where $p_c\in S^{d-1}$ is a prototype, $c\in\{1,\dots,C\}$ and $\alpha=0.75$, and covariance matrix $\sigma I_d$ with $\sigma=0.1$. The number of projection is by default set at L=1000. %For CIFAR100, in dimension 50, we use $\lambda=1$, $\sigma=0.01$ and $\alpha=0.1$.

% Results reported in Table \ref{tab:acc_pebuse} for the original Busemann loss with penalty are taken from \citep{ghadimi2021hyperbolic}.

\subsection{Hyperbolic Sliced-Wassertein Autoencoder} \label{appendix:hswae}

\begin{figure}[htpb]
    \centering
    % \includegraphics[width=\linewidth]{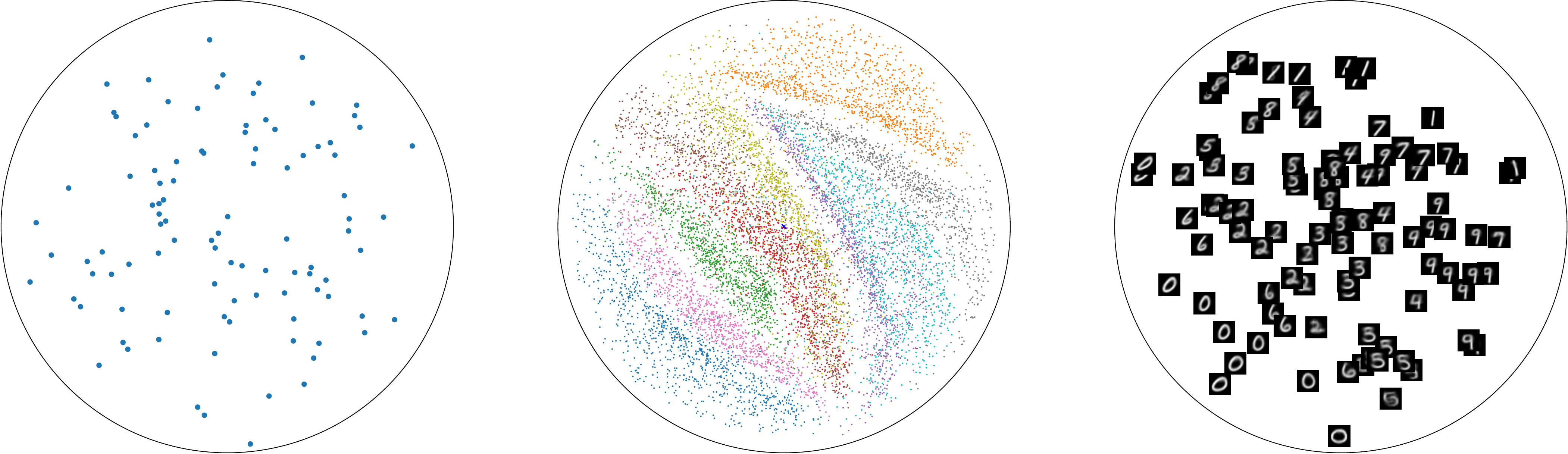} \\
    % \vspace{10pt}
    \includegraphics[width=\linewidth]{Figures/HSWAE/WND.pdf} \\
    \vspace{10pt}
    \includegraphics[width=\linewidth]{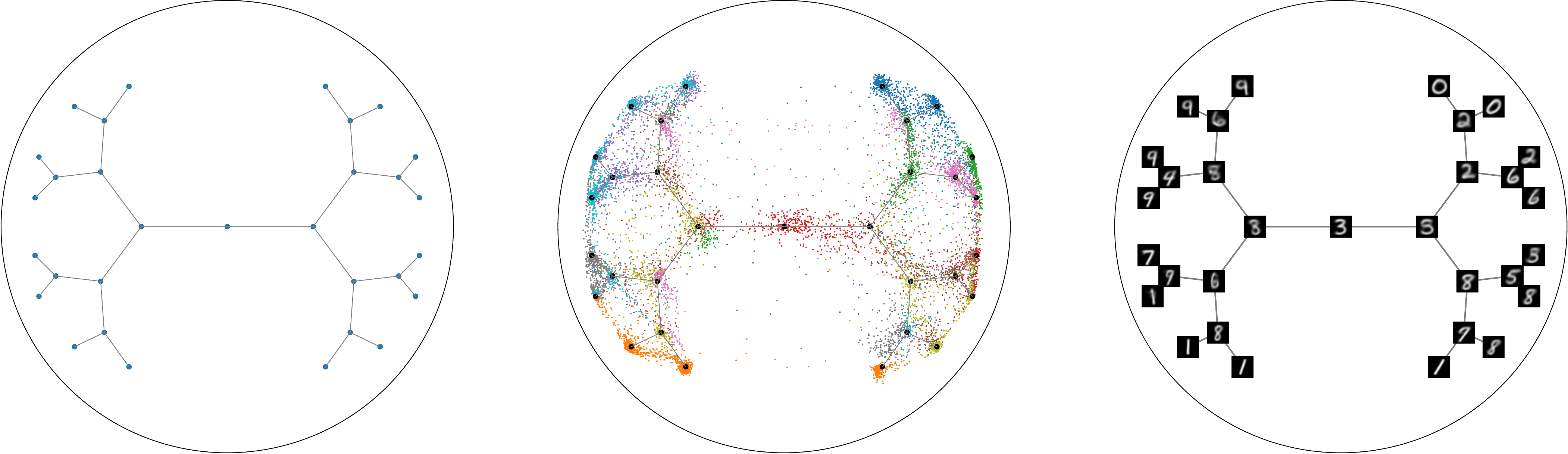} \\
    \vspace{10pt}
    \includegraphics[width=\linewidth]{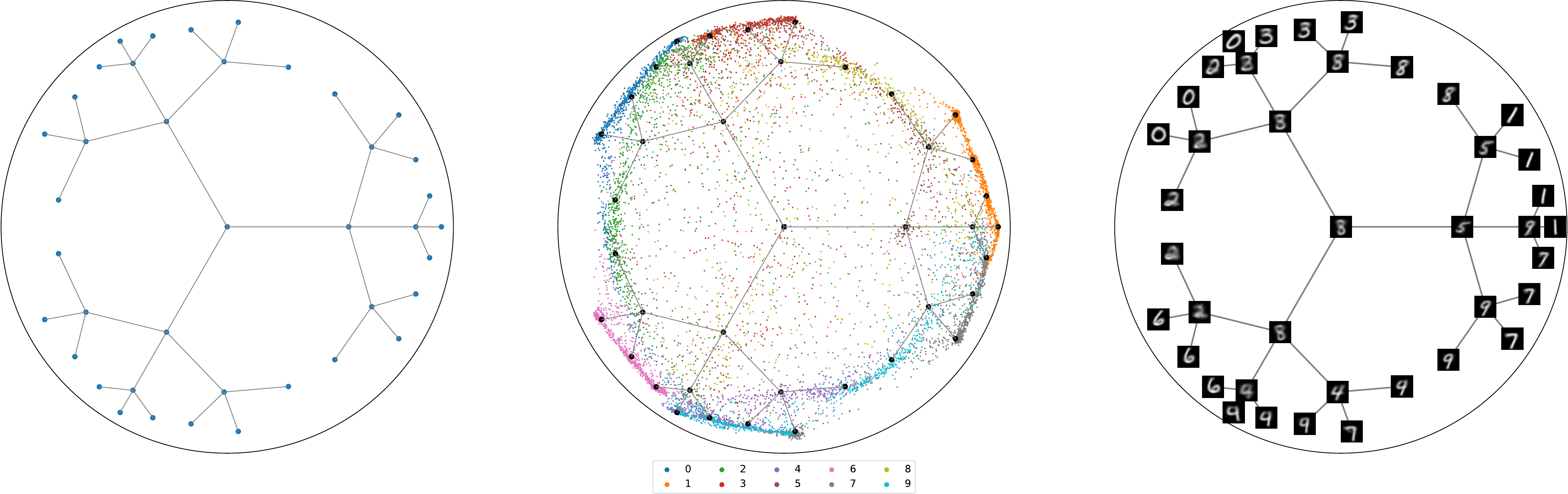}
    \caption{Embedding and reconstruction for HHSWAE. In the first column, we plot the prior. In the second column, we plot the embedding of MNIST and in the third column, we plot the reconstructed nodes of the tree or from samples of the wrapped normal distribution. In the first row, the prior is a Wrapped Normal Distribution. In the second row, the prior is a binary tree and in the third row a ternary tree.}
    \label{fig:hswae_tree}
\end{figure}

As hyperbolic spaces allow to embed hierarchical data, it has been proposed in several works to put a prior on such space for autoencoder tasks \citep{ovinnikov2019poincar,nagano2019wrapped,mathieu2019continuous}. Usually, an uninformative prior such as a Wrapped normal or a Riemannian normal distribution is used. For such distributions, the density is known and hence the Kullback-Leibler divergence can be approximated by a Monte-Carlo scheme. Moreover, we can also use the reparametrization trick. Then, a variational auto-encoder \citep{kingma2013auto} can be used. For more complicated distributions or deterministic prior with no density, we can use Wasserstein autoencoders \citep{tolstikhin2017wasserstein}. In this case, with a prior $p_Z$ for which we have access to samples, an encoder $f$ mapping the distribution data $\mu$ to the latent space, and a decoder $g$, we aim at minimizing the following loss:
\begin{equation}
    \mathcal{L}(f,g) = \int c(x,g(f(x)))\ \mathrm{d}\mu(x) + D(f_\#\mu, p_Z),
\end{equation}
with $c$ some cost function and $D$ some divergence. Several divergences $D$ were proposed such as the MMD or SW \citep{kolouri2018sliced}. We propose here to study the latent space when using a tree prior, for which we cannot use a variational autoencoder. To learn the distribution in the latent space, we use a hyperbolic sliced discrepancy.

On Figure \ref{fig:hswae_tree}, we compare several priors on the Mnist dataset \citep{lecun-mnisthandwrittendigit-2010} with $D=HHSW_2^2$, which we denote HHSWAE. First, we use a Wrapped Normal distribution, and then a binary and a ternary tree as a prior. The trees are generated with NetworkX and embedded using Sarkar's algorithm, with $\tau=0.6$ for the ternary tree and $\tau=0.4$ for the binary tree. Moreover, we use a height of 3 for the ternary tree and of 4 for the binary one. For the HHSWAE, we used 200 epochs with the same architectures as \citep{kolouri2018sliced} with an exp map before the output of the encoder, and a log map at the input of the decoder.

We observe that when using a tree prior, the points from the same class tend to be distributed around the same nodes. We believe that such an hierarchical prior can be beneficial in cases where one already has an assumption about the natural structure of the data. Next works will consider this question more thoroughly in different applicative settings.

% \begin{figure}
%     \centering
%     \includegraphics[width=0.3\linewidth]{}
%     \includegraphics[width=0.3\linewidth]{}
%     \includegraphics[width=0.3\linewidth]{}

%     \caption{Caption}
%     \label{fig:my_label}
% \end{figure}

% \end{document}

%%%%%%%%%%%%%%%%%%%%%%%%%%%%%%%%%%%%%%%%%%%%%%%%%%%%%%%%%%%%%%%%%%%%%%%%%%%%%%%
%%%%%%%%%%%%%%%%%%%%%%%%%%%%%%%%%%%%%%%%%%%%%%%%%%%%%%%%%%%%%%%%%%%%%%%%%%%%%%%

\end{document}